\newtheorem{theorem}{Theorem}[section]
\newtheorem{proposition}{Proposition}[section]
\newtheorem{lemma}{Lemma}[section]
\newtheorem{observation}{Observation}[section]
\numberwithin{algorithm}{section}
\newcommand{\method}{LaT-IB\xspace}
\newcommand{\methods}{LaT-IB's\xspace}
\newcommand{\periods}{Warmup, Knowledge Injection and Robust Training\xspace}
\title{Is the Information Bottleneck Robust Enough? \\
Towards Label-Noise Resistant Information Bottleneck Learning}
\author {
    Yi Huang\textsuperscript{\rm 1},
    Qingyun Sun\textsuperscript{\rm 1}\thanks{Corresponding Author},
    Yisen Gao\textsuperscript{\rm 2},
    Haonan Yuan\textsuperscript{\rm 1},
    Xingcheng Fu\textsuperscript{\rm 3},
    Jianxin Li\textsuperscript{\rm 1}
}
\begin{document}

\maketitle

\begin{abstract}
The Information Bottleneck (IB) principle facilitates effective representation learning by preserving label-relevant information while compressing irrelevant information. 
However, its strong reliance on accurate labels makes it inherently vulnerable to label noise, prevalent in real-world scenarios, resulting in significant performance degradation and overfitting.
To address this issue, we propose \textbf{\method}, a novel \textit{\textbf{La}bel-Noise Resistan\textbf{T} \textbf{I}nformation \textbf{B}ottleneck} method which introduces a \textit{``Minimal-Sufficient-Clean"} (MSC) criterion. Instantiated as a mutual information regularizer to retain task-relevant information while discarding noise, MSC addresses standard IB’s vulnerability to noisy label supervision.
To achieve this, \method employs a noise-aware latent disentanglement that decomposes the latent representation into components aligned with to the clean label space and the noise space. 
Theoretically, we first derive mutual information bounds for each component of our objective including prediction, compression, and disentanglement, and moreover prove that optimizing it encourages representations invariant to input noise and separates clean and noisy label information.
Furthermore, we design a three-phase training framework: \periods, to progressively guide the model toward noise-resistant representations. 
Extensive experiments demonstrate that \method achieves superior robustness and efficiency under label noise, significantly enhancing robustness and applicability in real-world scenarios with label noise.
\end{abstract}

\section{Introduction}
\label{sec:intro}
The Information Bottleneck (IB) principle~\cite{IB} provides a fundamental theoretical framework for balancing compression and relevance in representation learning.
Rooted in information theory, it has increasingly influenced the development of deep learning~\cite{IB-survey}. 
IB encourages representations $Z$ that retain only task-relevant information while discarding irrelevant or redundant input features using Mutual Information (MI) $I(\cdot; \cdot)$:
\begin{equation}
    \min -I(Y;Z)+\beta I(X;Z).
\end{equation}
IB-based methods aim to extract ``Minimal-Sufficient" representations, inherently filtering out input noise and spurious correlations. 
This selective encoding mechanism contributes to their notable robustness under noisy or adversarial input perturbations~\cite{x_robust1}.

However, input noise rarely eliminates all useful information, allowing IB to extract meaningful features from $Y$. In contrast, label noise corrupts the supervisory signal, causing $I(Y;Z)$ to mislead $Z$ to fit incorrect labels, thereby reducing robustness.
This vulnerability is critical in real-world settings, where label noise is common and can severely harm performance, as real graphs are often disturbed by noise and unexpected factors.~\cite{li2025simplified}. To address this, Label-Noise Representation Learning (LNRL)~\cite{LN-survey} aims to extract robust features despite label corruption. 

\begin{table}[t]
  \centering
    {\small
    \begin{tabular}{c|c|c}
    \toprule
    CIFAR10 & 40\% asym noise & 50\% sym noise \\
    \midrule
    ResNet34 & 77.78\% & 79.4\% \\
    VIB ($\beta=0.01$) & 73.80\% & 10.0\% \\
    \midrule
    Cora (40\% noise) & Epoch: $0 \to 20$ & Epoch: $20 \to 100$ \\
    \midrule
    \multirow{2}{*}{GIB}    & 22.9\% $\to$ 69.5\% & 69.5\% $\to$ 55.1\% \\
    & steady increase $\uparrow$ & steady decline $\downarrow$ \\
    \bottomrule
    \end{tabular}
    }
  \caption{Performance of IB methods under noise conditions.}
  \label{tab:example}
\end{table}

To empirically test the hypothesis that \textbf{IB is inherently vulnerable to label noise}, we conduct preliminary experiments on two tasks: image classification in computer vision and node classification in graph learning. 
We evaluate two representative IB-based methods: VIB~\cite{VIB} and GIB~\cite{GIB}.
As shown in Table~\ref{tab:example}, VIB suffers performance drops and even training collapse, while GIB exhibits degraded accuracy. See Appendix E.3 for details.

To mitigate this, a simple remedy is to denoise the labels prior to applying IB. However, this two-stage pipeline is inherently suboptimal in both theory and practice.
\begin{theorem}[Cumulative Degradation]
\label{theo:Degradation}
    In the two-stage approach, $f_1$ is used to modify the labels $Y'=f_1(\mathcal{D})$, and $f_2$ is responsible for extracting valid information from $\mathcal{D}$ while approximating the prediction result to $f_1(\mathcal{D})$ . For one-stage model $g(\mathcal{D})$, it extracts the relevant information while removing noise. If the denoising abilities of $f_1$ and $g$ are the same, the following inequality holds:
    \begin{equation}
        P(f_2(\mathcal{D})\neq g(\mathcal{D})) \geq \frac{H(Y'|\mathcal{D})-1}{\log(|\mathcal{Y}| - 1)},
    \end{equation}
    where $\mathcal{Y}$ denotes the support of $Y$, and $|\mathcal{Y}|$ denotes the number of elements in  $\mathcal{Y}$. The two models perform identically iff $f_2$ achieves the error lower bound and $H(Y'|\mathcal{D}) = 0$.
\end{theorem}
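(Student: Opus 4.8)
The plan is to recognize the stated inequality as an instance of Fano's inequality, once the two-stage versus one-stage comparison is recast as a single estimation problem. First I would unpack the hypothesis that $f_1$ and $g$ share the same denoising ability: since $Y' = f_1(\mathcal{D})$ is the denoised target produced by the first stage and $g$ denoises equally well, I would identify $g(\mathcal{D})$ with the ideal denoised label $Y'$ almost surely. Under this identification the quantity $P(f_2(\mathcal{D}) \neq g(\mathcal{D}))$ becomes $P_e := P(f_2(\mathcal{D}) \neq Y')$, i.e.\ the probability that the second-stage predictor $f_2$, which only sees $\mathcal{D}$, fails to recover the target $Y'$. This turns the cross-model comparison into the classical question of how well $Y'$ can be estimated from $\mathcal{D}$.

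Second, I would invoke Fano's inequality on the Markov relation $Y' \to \mathcal{D} \to f_2(\mathcal{D})$. Writing $\hat{Y} = f_2(\mathcal{D})$ and introducing the error indicator $E = \mathbb{1}[\hat{Y} \neq Y']$, the standard chain-rule expansion $H(Y' \mid \hat{Y}) \le H(E) + P_e \log(|\mathcal{Y}| - 1)$, together with $H(Y' \mid \mathcal{D}) \le H(Y' \mid \hat{Y})$ (conditioning on the finer information in $\mathcal{D}$), yields
\begin{equation*}
    H(Y' \mid \mathcal{D}) \;\le\; H(E) + P_e \log(|\mathcal{Y}| - 1).
\end{equation*}
Bounding the binary entropy by $H(E) \le 1$ and rearranging then gives exactly $P_e \ge (H(Y' \mid \mathcal{D}) - 1)/\log(|\mathcal{Y}| - 1)$, which is the claimed bound.

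For the equality characterization I would argue both directions directly at the level of the error probability. If the two models perform identically then $P_e = 0$, so $f_2(\mathcal{D}) = g(\mathcal{D}) = Y'$ almost surely; since $f_2(\mathcal{D})$ is computed from $\mathcal{D}$ alone, this forces $Y'$ to be a function of $\mathcal{D}$, hence $H(Y' \mid \mathcal{D}) = 0$, and $f_2$ attains the minimal (zero) error. Conversely, when $H(Y' \mid \mathcal{D}) = 0$ the target is a deterministic function of the data, so the Bayes-optimal $f_2$ achieves $P_e = 0$ and recovers $g(\mathcal{D})$ exactly, making the two pipelines coincide.

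The main obstacle I anticipate is not the Fano computation, which is routine, but pinning down the ``same denoising ability'' assumption rigorously enough to license the identification $g(\mathcal{D}) \equiv Y'$; every downstream step hinges on this bridge. A secondary subtlety is the regime $H(Y' \mid \mathcal{D}) \le 1$, where the right-hand side is non-positive and the bound is vacuous: there the inequality holds trivially, and only the equality analysis (driven by $H(Y' \mid \mathcal{D}) = 0$) carries content. I would also flag the implicit requirement $|\mathcal{Y}| > 2$, so that $\log(|\mathcal{Y}| - 1) > 0$ and the division is well-posed.
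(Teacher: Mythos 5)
Your proposal is correct and follows essentially the same route as the paper's proof: identify $g(\mathcal{D})$ with $Y'=f_1(\mathcal{D})$ via the equal-denoising-ability assumption, apply Fano's inequality to $Y' \to \mathcal{D} \to f_2(\mathcal{D})$ with the binary entropy bounded by $1$, and characterize equality through $P_e = 0 \Leftrightarrow H(Y'\mid\mathcal{D}) = 0$ together with $f_2$ attaining the error lower bound. Your added caveats (vacuousness of the bound when $H(Y'\mid\mathcal{D})\le 1$, the need for $|\mathcal{Y}|>2$, and that the identification $g(\mathcal{D})\equiv Y'$ is the load-bearing assumption) are sound observations that the paper leaves implicit, but they do not change the argument.
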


The proof of Theorem~\ref{theo:Degradation} is given in Appendix C.1.
It demonstrates that cascading a denoising model $f_1$ with an IB learner $f_2$ leads to cumulative information loss compared with a unified model $g$, due to the extended information path. 
This phenomenon is further validated by empirical results, which show a clear degradation in the denoising effect when models are cascaded. 
See Appendix E.3 for detailed results.

\textbf{Core Issue:} \textit{How can the IB principle be effectively applied to real-world scenarios with complex and unknown label noise, in order to learn representations that are \textbf{both ``Minimal-Sufficient" and robust} to noisy supervision?}

\begin{figure}[t]
\centering
\includegraphics[width=0.42\textwidth]{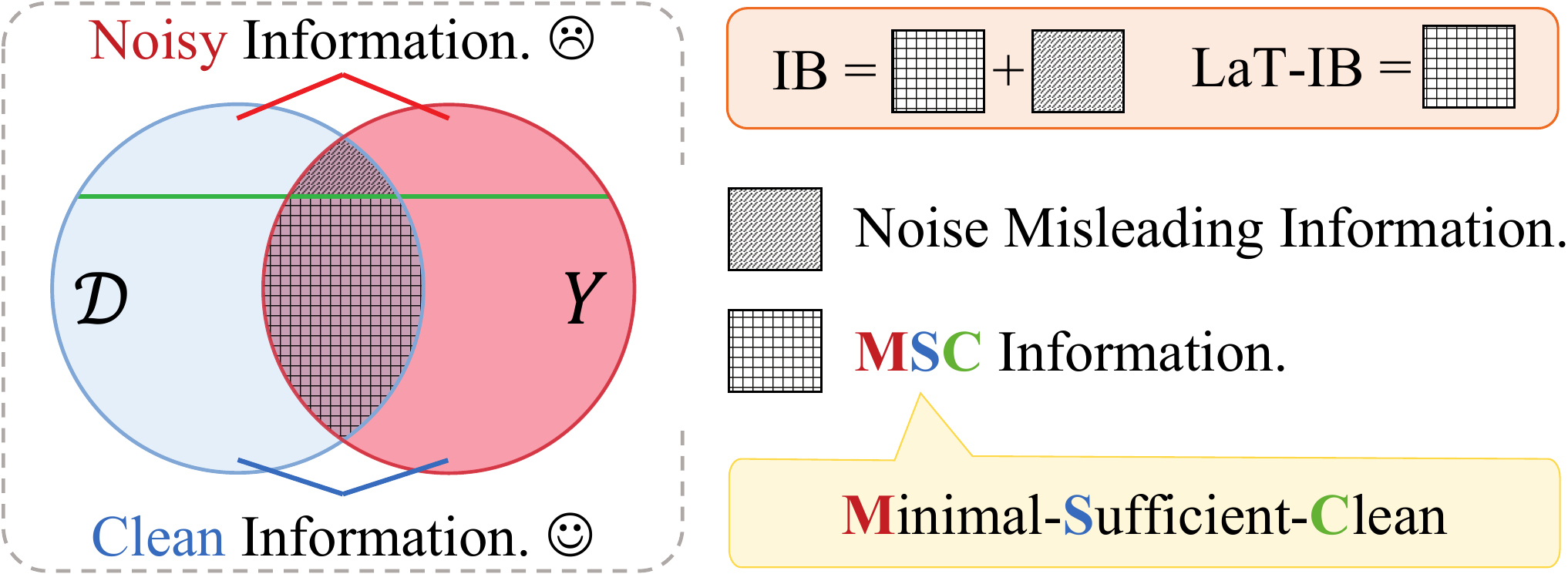}
\caption{Comparison between \method and IB Principle.}
\label{fig:target}
\end{figure}

Due to unknown label noise and the difficulty of integrating denoising with Information Bottleneck, applying IB in practice requires confronting the following key challenges:
\begin{itemize}
    \item How to formulate the IB objective under label noise to learn clean representation. ($\triangleright$ Section \ref{sec:4.1})
    \item How to optimize MI under label noise that distorts task-relevant representation learning. ($\triangleright$ Section \ref{sec:bound})
    \item How to effectively disentangle clean and noisy representations without knowing noisy samples. ($\triangleright$ Section \ref{sec:4.3})
\end{itemize}

\textbf{Present work.} To address the core issue and tackle the key challenges, we propose a \textit{\textbf{La}bel-Noise Resistan\textbf{T} \textbf{I}nformation \textbf{B}ottleneck} (\textbf{\method}) method. Centered on the idea of disentangling representations into clean and noisy label spaces, we formulate an IB training objective tailored for noisy supervision and theoretically justify its effectiveness through upper and lower bound analysis. To this end, we design a three-phase training framework: \periods, which gradually guides the model to learn ``\textit{Minimal-Sufficient-Clean}" (MSC) representations. A comparison between \method and standard IB principle is illustrated in Figure \ref{fig:target}.
Our contributions are:
\begin{itemize}
    \item We identify the inherent vulnerability of IB to label noise and prove that denoising before IB is suboptimal.
    \item We propose a \method method that introduces MSC criterion of representations to enhance IB’s robustness to label noise while maintaining its essential characteristics.
    \item We provide theoretical upper and lower bounds for \method, showing how disentangling clean and noise features enables robust representation learning. Based on this, we design a principled model and training framework.
    \item Extensive experiments evaluate \methods robustness and efficiency, outperforming baselines under label noise and adversarial attacks across diverse tasks and domains.
\end{itemize}

\section{Related Work}
\label{sec:related_work}
\subsection{Information Bottleneck for Robustness}
The IB~\cite{IB} framework introduces a feature learning paradigm grounded in information theory. Works such as VIB~\cite{VIB} and GIB~\cite{GIB} have advanced its practical use. Considering robustness, methods like DisenIB~\cite{DisenIB} and DGIB~\cite{DGIB} show reasonable robustness to input features, with studies~\cite{DT-JSCC, pensia2020extracting} further improving resilience to input noise.

Considering the presence of label noise, RGIB~\cite{link-GIB}  explores structural noise in GNNs to improve link prediction robustness.
However, comprehensive studies on the vulnerability of IB to label noise still remain lacking.

\subsection{Label-Noise Representation Learning}
The LNRL aims to improve model robustness and representation quality under noisy label conditions. 
Existing approaches for learning with noisy labels include sample selection~\cite{sample-selection, JoCoR}, which filters out likely noisy samples; robust loss functions~\cite{GCE, SCE}, which modify loss terms to reduce sensitivity to incorrect labels; noise-robust architectures~\cite{ELR}, which use regularization to avoid overfitting noise; and data augmentation, such as mixup-based methods~\cite{mixup, fmix}, which interpolates samples to improve generalization.

However, most methods ignore representation-level constraints, making it hard to learn task-relevant and noise-invariant features under severe noise or distribution shifts.

\begin{figure*}[!t]
\begin{subfigure}[b]{0.215\linewidth}
    \centering
    \includegraphics[width=\linewidth]{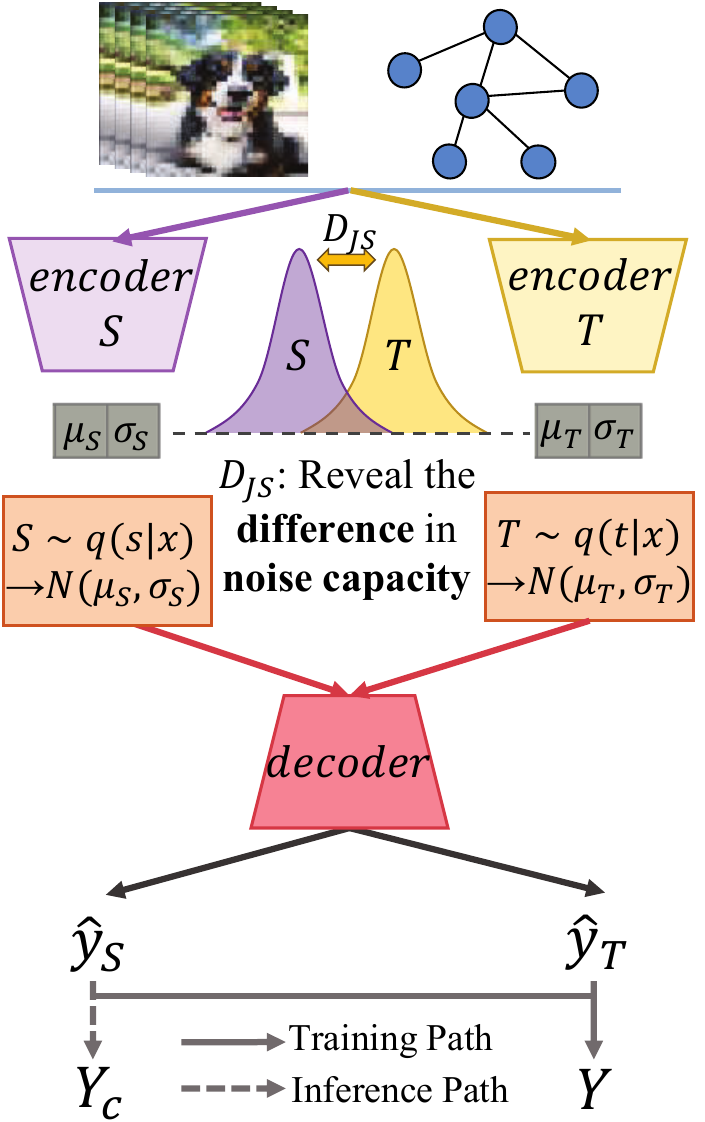}
\end{subfigure}
\begin{subfigure}[b]{0.78\linewidth}
    \centering
    \includegraphics[width=\linewidth]{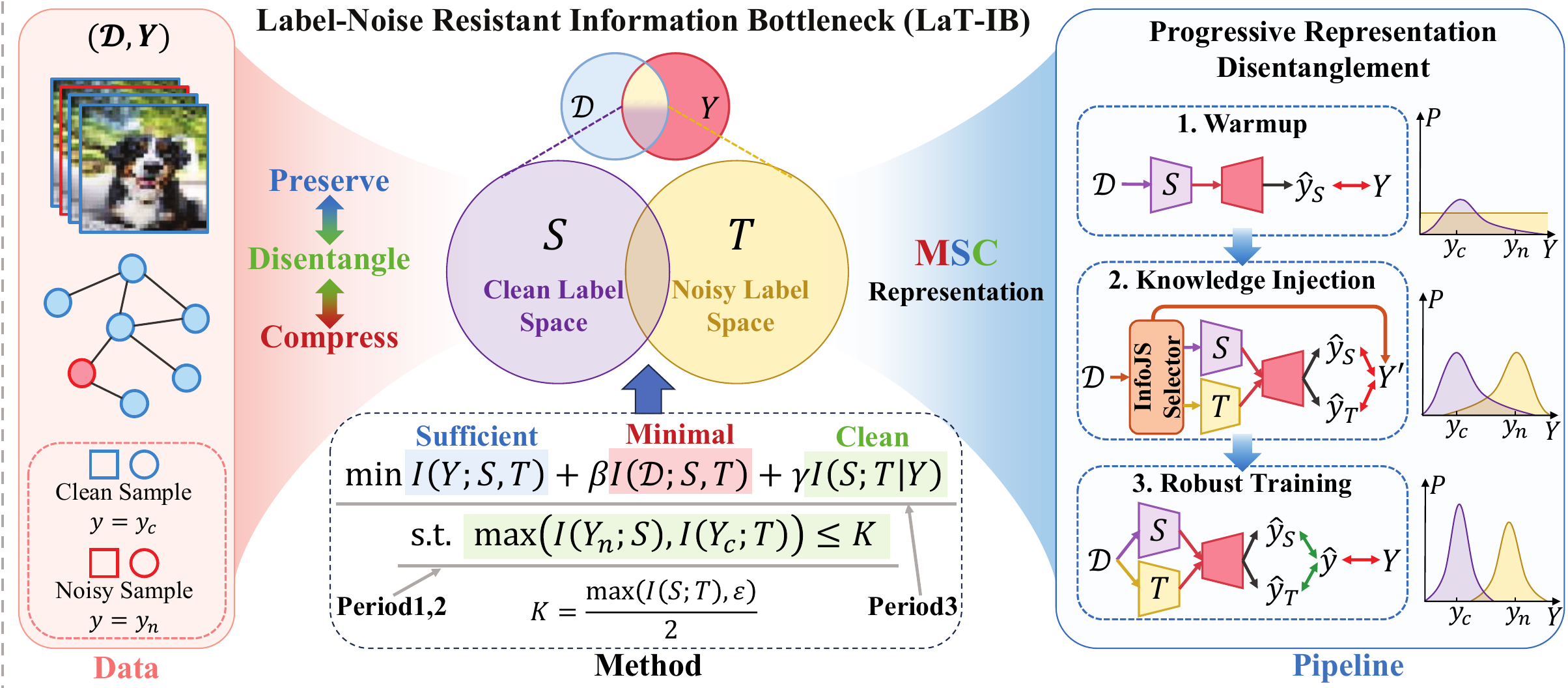}
\end{subfigure}
\caption{
Left: The overall \method model architecture with dual encoders for extracting features from clean ($S$) and noisy ($T$) label spaces, and a shared decoder.
Right: An illustration of the \method method,  which disentangles representations to extract ``Minimal-Sufficient-Clean" features. Specifically, its pipeline consists of three period: \periods, which transform Eq.~\eqref{core} from a theoretical formulation into a practical training procedure.
}
\label{fig:framework}
\end{figure*}

\section{Preliminary Analysis}
\label{sec:Preliminary}
\subsubsection{Notation.}
We primarily define the input data $\mathcal{D}$. 
For vision tasks, $\mathcal{D} = X$, where $X \in \mathbb{R}^{N \times C \times P \times Q}$ denotes $N$ samples with $C$ channels and spatial size $P \times Q$ (e.g., height $\times$ width). For graph learning tasks, $\mathcal{D} = \mathcal{G} = (X, A)$, where $X \in \mathbb{R}^{N \times d}$ denotes $d$-dimensional features for $N$ nodes and $A \in \mathbb{R}^{N \times N}$ represents the adjacency matrix. Each sample $\xi_i \in \mathcal{D}$ has a  label $y_i \in Y$, which may be corrupted by noise  during the labeling process. 
We denote $Y_c$ and $Y_n$ as the clean and noisy counterparts of $Y$ respectively.

\subsubsection{Analysis of IB Theory with Label Noise.}
In the traditional IB, $I(X;Z)$ encourages minimal representations by compressing the input, while $I(Y;Z)$ ensures sufficiency by preserving task-relevant information. 
However, when the label $Y$ is corrupted by noise, maximizing $I(Y;Z)$ is equivalent to maximizing $I(Y_c,Y_n;Z)$, which inadvertently causes the learned representation $Z$ to capture noise $Y_n$, thus compromising robustness and degrading performance.

In this study, we aim to mitigate the negative impact of label noise on model performance while preserving the ``Minimal-Sufficient" property of the IB method. 
Ideally, we consider a robust IB method $\mathcal M_{IB}$ that, given a dataset $(\mathcal{D}, Y)$ where $Y$ consists of both clean labels $y_i \in Y_{c}$ and noisy labels $y_j \in Y_{n}$, aims to satisfy the following objective:
\begin{equation}
\label{prob_def}
    \begin{gathered}
       \min -I(Z;Y_{c}) + \beta I(Z;\mathcal{D})\\
        \text{s.t.} \  Z = \mathcal{M}_{IB}(\mathcal{D}, Y).
    \end{gathered}
\end{equation}
Compared to the traditional IB objective, the goal of Eq. \eqref{prob_def} is to maximize the MI between the learned representation and the clean labels $Y_c$ , rather than with all observed labels $Y$. 
However, \textbf{whether each label is clean or noisy is unknown}. In the next section, we introduce a concrete solution to mitigate IB’s vulnerability to label noise.

\section{Methodology}
In this paper, we propose Label-Noise Resistant Information Bottleneck (\method), along with theoretical formulation, model architecture and a tailored training framework, as illustrated in Figure~\ref{fig:framework}. 
We begin by presenting the formal objective of \method and interpreting its theoretical implications.  
To enable efficient optimization, we derive upper and lower bounds that simplify the objective, effectively bridging the gap between theory and practice.
Finally, drawing on key insights, we design a three-phase training framework: \periods, clarify the role of each phase and facilitate the progressive disentanglement of clean and noise-related representations.

\subsection{Label-Noise Resistant Information Bottleneck}
\label{sec:4.1}
In real-world datasets, each training sample may have either a clean or a corrupted label, and sometimes both possibilities coexist probabilistically. Using a unified representation for all samples under such ambiguity can cause conflicting features and hurt downstream tasks. To mitigate this, we disentangle the representation into two parts: $S$ under the clean label space, and $T$ under the noise space. Under this disentanglement, the objective in Eq. \eqref{prob_def} can be reformulated as:
\begin{equation}
    \min -I(S; Y_{c}) + I(\mathcal{D}; S, T).
\end{equation}

Since only $Y$ are available in the dataset, we implicitly associate it with the joint representation of $S$ and $T$, where disentanglement is encouraged by $\min I(S;T|Y)$. A successful disentanglement implies that $S$ and $T$ encode conditionally independent given $Y$, capturing distinct semantics.
With $\beta$ and $\gamma$ as balancing factors, the \method is formulated as:
\begin{equation}
\label{core-before}
    \min \underbrace{-I(Y;S,T)}_{\text{prediction term}}+\beta \underbrace{I(\mathcal{D};S,T)}_{\text{compression term}}+\gamma \underbrace{I(S;T| Y)}_{\text{disentanglement term}},
\end{equation}

However, Eq. \eqref{core-before} still cannot map $S$ to clean features and $T$ to noise features. To address this and further explore its representational meaning, we introduce two lemmas below.

\begin{lemma}[Nuisance Invariance]
    \label{lemma1}
    Taking the part of $\mathcal{D}$ that does not contribute to $Y$ as $\mathcal{D}_n$ ($D_n$ is independent of $Y$), and considering the Markov chain $(Y,\mathcal{D}_n) \to \mathcal{D} \to (S,T)$, the following inequality holds:
    \begin{equation}
        I(\mathcal{D}_n;S,T)\leq -I(Y;S,T)+I(\mathcal{D};S,T).
    \end{equation} 
\end{lemma}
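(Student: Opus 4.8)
The plan is to abbreviate $Z=(S,T)$ and prove the equivalent form $I(\mathcal{D}_n;Z)+I(Y;Z)\leq I(\mathcal{D};Z)$. The two workhorses will be the data processing inequality (DPI) applied to the stated Markov chain and the chain rule for mutual information, with the nuisance independence assumption $I(Y;\mathcal{D}_n)=0$ supplying the crucial extra leverage.

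First I would apply the DPI to the Markov chain $(Y,\mathcal{D}_n)\to\mathcal{D}\to Z$. Since the code $Z$ depends on the pair $(Y,\mathcal{D}_n)$ only through $\mathcal{D}$, the DPI gives $I(Y,\mathcal{D}_n;Z)\leq I(\mathcal{D};Z)$. It then suffices to lower-bound the left side by $I(Y;Z)+I(\mathcal{D}_n;Z)$, which reduces the whole lemma to a single information-inequality about $Z$, $Y$, and $\mathcal{D}_n$.

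Next I would expand the joint term by the chain rule, $I(Y,\mathcal{D}_n;Z)=I(Y;Z)+I(\mathcal{D}_n;Z\mid Y)$, so that the remaining obligation is exactly $I(\mathcal{D}_n;Z\mid Y)\geq I(\mathcal{D}_n;Z)$. This is the step I expect to be the main obstacle, because conditioning does not change mutual information monotonically in general, so the inequality cannot hold for arbitrary triples. The resolution is to invoke the symmetry of the interaction (co-)information, namely $I(\mathcal{D}_n;Z)-I(\mathcal{D}_n;Z\mid Y)=I(\mathcal{D}_n;Y)-I(\mathcal{D}_n;Y\mid Z)$. Since $\mathcal{D}_n$ is independent of $Y$ by hypothesis, $I(\mathcal{D}_n;Y)=0$, and the right-hand side collapses to $-I(\mathcal{D}_n;Y\mid Z)\leq 0$, yielding precisely $I(\mathcal{D}_n;Z\mid Y)\geq I(\mathcal{D}_n;Z)$.

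Finally, chaining the pieces gives $I(Y;Z)+I(\mathcal{D}_n;Z)\leq I(Y,\mathcal{D}_n;Z)\leq I(\mathcal{D};Z)$, and rearranging recovers $I(\mathcal{D}_n;S,T)\leq -I(Y;S,T)+I(\mathcal{D};S,T)$. The intuition worth highlighting is that because the nuisance $\mathcal{D}_n$ carries no information about the label, any mutual information the representation shares with $\mathcal{D}_n$ must be \emph{additional} to its label-relevant content rather than overlapping with it; both contributions are therefore forced to fit inside the total information $I(\mathcal{D};S,T)$ that the code extracts from the input, which is exactly what the compression term controls.
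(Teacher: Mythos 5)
Your proof is correct, and while it rests on the same two ingredients as the paper's (the Markov chain $(Y,\mathcal{D}_n)\to\mathcal{D}\to(S,T)$ and the independence $Y\perp\mathcal{D}_n$), its organization is genuinely different and, at the one delicate step, more rigorous. The paper expands $I(S,T;\mathcal{D},\mathcal{D}_n)$ and $I(S,T;Y,\mathcal{D}\mid\mathcal{D}_n)$ each in two ways, extracts the vanishing conditional terms supplied by the Markov chain, and arrives at $I(S,T;\mathcal{D})\geq I(S,T;\mathcal{D}_n)+I(S,T;Y\mid\mathcal{D}_n)$; it then closes by asserting the \emph{equality} $I(S,T;Y\mid\mathcal{D}_n)=I(S,T;Y)$, which independence alone does not warrant: in general $I(S,T;Y\mid\mathcal{D}_n)-I(S,T;Y)=I(Y;\mathcal{D}_n\mid S,T)-I(Y;\mathcal{D}_n)=I(Y;\mathcal{D}_n\mid S,T)$, which is nonnegative but can be strictly positive (explaining-away: $Y$ and $\mathcal{D}_n$ may become dependent given the representation), so the correct statement is only the inequality $\geq$ --- fortunately in the direction the proof needs. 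You instead apply the DPI once to get $I(Y,\mathcal{D}_n;S,T)\leq I(\mathcal{D};S,T)$, split by the chain rule with $Y$ first, and dispose of the conditional term via the symmetry of the interaction information, $I(\mathcal{D}_n;Z)-I(\mathcal{D}_n;Z\mid Y)=I(\mathcal{D}_n;Y)-I(\mathcal{D}_n;Y\mid Z)=-I(\mathcal{D}_n;Y\mid Z)\leq 0$, which makes explicit exactly the slack the paper papers over with its equality sign. What your route buys is economy (one DPI invocation replaces the paper's two double expansions) and a correct accounting of why conditioning on $Y$ can only help here; what the paper's route buys is that every step is an elementary chain-rule identity, with no appeal to co-information, at the cost of the imprecise final equality.
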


\begin{lemma}[Feature Convergence]
    \label{lemma2}
    Assuming that $Y$ can potentially contain all information about $Y_c$ and $Y_n$, the following inequality holds when $\max(I(Y_n;S), I(Y_c;T))\leq \max(I(S;T), \varepsilon) / 2 = K, \varepsilon>0, \varepsilon\in \mathbb{R}$ is satisfied: 
    \begin{equation}
        -I(Y_c;S)-I(Y_n;T)-\varepsilon\leq -I(Y;S,T)+I(S;T| Y).
    \end{equation}
\end{lemma}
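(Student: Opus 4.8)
The plan is to prove the equivalent rearranged form $I(Y;S,T) - I(S;T\mid Y) \le I(Y_c;S) + I(Y_n;T) + \varepsilon$, since the claimed inequality is exactly this one with both sides negated. The cornerstone is a three–variable mutual-information identity: expanding each term into entropies gives $I(Y;S,T) = H(S,T) - H(S,T\mid Y)$ and $I(S;T\mid Y) = H(S\mid Y) + H(T\mid Y) - H(S,T\mid Y)$, so their difference collapses to $H(S,T) - H(S\mid Y) - H(T\mid Y)$, which is precisely $I(Y;S) + I(Y;T) - I(S;T)$. First I would establish this identity so the objective is rewritten purely in terms of pairwise informations, in particular surfacing the $I(S;T)$ term that the quantitative hypothesis will later cancel.

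With the identity in hand, I would invoke the hypothesis that $Y$ carries all the information of $Y_c$ and $Y_n$ to split the two marginal terms, bounding $I(Y;S) \le I(Y_c;S) + I(Y_n;S)$ and $I(Y;T) \le I(Y_c;T) + I(Y_n;T)$: the information $Y$ shares with a representation is covered by the information its clean and noisy constituents share with it. Substituting both bounds and regrouping yields
\[
I(Y;S) + I(Y;T) - I(S;T) \le I(Y_c;S) + I(Y_n;T) + \big[\,I(Y_n;S) + I(Y_c;T) - I(S;T)\,\big],
\]
so it remains only to show the bracketed residual is at most $\varepsilon$.

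This residual is where the quantitative hypothesis enters. By assumption $\max(I(Y_n;S), I(Y_c;T)) \le K = \max(I(S;T),\varepsilon)/2$, hence $I(Y_n;S) + I(Y_c;T) \le 2K = \max(I(S;T),\varepsilon)$, giving a residual bounded by $\max(I(S;T),\varepsilon) - I(S;T)$. I would then split into two cases. If $\max(I(S;T),\varepsilon) = I(S;T)$, the residual is $\le I(S;T) - I(S;T) = 0 \le \varepsilon$. If instead $\max(I(S;T),\varepsilon) = \varepsilon$, the residual is $\le \varepsilon - I(S;T) \le \varepsilon$ because $I(S;T)\ge 0$. In both cases the residual is at most $\varepsilon$, which closes the chain and yields the lemma after negating.

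The step I expect to be the main obstacle is the subadditive splitting $I(Y;S)\le I(Y_c;S)+I(Y_n;S)$ and its $T$ analogue, since this is \emph{not} an unconditional identity: $I(A,B;C)\le I(A;C)+I(B;C)$ can fail when the associated co-information is negative. I would therefore need to pin down precisely what ``$Y$ potentially contains all information about $Y_c$ and $Y_n$'' formalizes to (e.g.\ that $Y$ is informationally equivalent to the pair $(Y_c,Y_n)$, together with a conditional independence of $Y_c$ and $Y_n$ given each representation) and verify that under that formalization the relevant co-information term carries the sign needed for subadditivity. Once that structural assumption is fixed, the entropy identity and the two-case bound are routine.
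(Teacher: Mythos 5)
Your proposal follows essentially the same route as the paper's proof: the paper likewise reduces $I(Y;S,T)-I(S;T\mid Y)$ to $I(Y;S)+I(Y;T)-I(S;T)$ (via the co-information chain rule $I(Y;S;T)=I(S;T)-I(S;T\mid Y)$ rather than your entropy expansion), then splits $I(Y;S)=I(Y_c,Y_n;S)\leq I(Y_c;S)+I(Y_n;S)$ under the same assumption, and closes with the identical two-case analysis on $\max\left(I(S;T),\varepsilon\right)$. The subadditivity step you flag as the main obstacle is asserted without justification in the paper's own proof as well, so your caution is warranted but does not mark a difference between your argument and theirs.
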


The detailed proofs of these lemmas can be found in Appendix C.2. 
Lemma~\ref{lemma1} demonstrates that optimizing $\min -I(Y;S,T) + I(\mathcal{D};S,T)$ in Eq. \eqref{core-before} ($\beta = 1$) essentially reduces the model's tendency to learn features irrelevant to $Y$ (denoted as $\mathcal{D}_n$). Lemma~\ref{lemma2} further indicates that, when the MI terms $I(Y_n, S)$ and $I(Y_c, T)$ are sufficiently small, optimizing $\min -I(Y;S,T) + I(S,T|Y)$ in Eq. \eqref{core-before} ($\gamma = 1$) effectively strengthens the mapping relationships $S \rightarrow Y_c$ and $T \rightarrow Y_n$. 
Based on these insights, we can first ensure the conditions in Lemma \ref{lemma2} then optimize the main objective in Eq. \eqref{core-before} as a form of \textbf{progressive representation disentanglement}. This enables the model to separate clean and noisy features while avoiding learning irrelevant noise $\mathcal{D}_n$. 

By combining Lemma \ref{lemma1} and Lemma \ref{lemma2}, we obtain a principled training objective that integrates sufficiency, compression, and clean-noise disentanglement: 
\begin{equation}
\label{core}
    \begin{gathered}
    \min \underbrace{-I(Y;S,T)}_{\text{Sufficient}} + \beta \underbrace{I(\mathcal{D};S,T)}_{\text{Minimal}} + \gamma \underbrace{I(S;T|Y)}_{\text{Clean}} \\
    \quad \text{s.t. } \underbrace{\max(I(Y_n;S), I(Y_c;T)) \leq K }_{\text{Clean}}. \\
    \end{gathered}
\end{equation}

\subsection{Bound Analysis and Implementation}
\label{sec:bound}
Building on the formulation introduced in the previous section, we now turn to the optimization of the proposed objective in Eq.~\eqref{core}.
Since directly optimizing the multivariate MI is intractable, we first simplify the original objective by analyzing upper and lower bounds of MI, and then present the implementation strategy for each term. 
All proposition proofs are provided in the Appendix C.3.

\begin{proposition}[The upper bound of $-I(Y;S,T)$]
\label{prop1}
    Given the label $Y$ and the variable $S,T$ that learns the characteristics of the clean label space and the noisy label space respectively, we have:
    \begin{equation}
    \label{prop1-bound}
        -I(Y;S,T)\leq -\max\left(I(Y;S), I(Y;T)\right).
    \end{equation}
\end{proposition}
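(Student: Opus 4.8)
The plan is to observe that, after negating both sides, the claim is equivalent to the monotonicity statement $I(Y;S,T)\ge \max\bigl(I(Y;S),I(Y;T)\bigr)$, i.e. that enlarging the set of observed variables about $Y$ from $S$ alone (or $T$ alone) to the joint pair $(S,T)$ can only increase the mutual information with $Y$. Crucially, since $(S,T)$ is treated as a single joint variable, no independence or Markov assumption between $S$ and $T$ is required, so the bound should hold unconditionally.

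First I would invoke the chain rule for mutual information in both orderings of $S$ and $T$:
\[
I(Y;S,T) = I(Y;S) + I(Y;T\mid S) = I(Y;T) + I(Y;S\mid T).
\]
Next I would appeal to the non-negativity of conditional mutual information, namely $I(Y;T\mid S)\ge 0$ and $I(Y;S\mid T)\ge 0$, which follows because conditional MI is an expectation of (non-negative) Kullback--Leibler divergences. Substituting these two inequalities into the respective chain-rule identities yields the pair of lower bounds $I(Y;S,T)\ge I(Y;S)$ and $I(Y;S,T)\ge I(Y;T)$ at once.

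Finally I would take the maximum of the two right-hand sides to obtain $I(Y;S,T)\ge \max\bigl(I(Y;S),I(Y;T)\bigr)$, and negate both sides to recover Eq.~\eqref{prop1-bound}. There is essentially no hard step here; the only point demanding care is ensuring the chain rule and the non-negativity are applied to the correct variable groupings, treating $(S,T)$ as one joint argument. If one wishes to record the equality condition, note that $I(Y;S,T)=I(Y;S)$ precisely when the discarded term vanishes, i.e. $I(Y;T\mid S)=0$, meaning $Y$ and $T$ are conditionally independent given $S$ (and symmetrically for the other branch); this will be useful later when interpreting when $S$ alone suffices to capture the label information.
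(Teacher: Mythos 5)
Your proof is correct and follows essentially the same route as the paper's: the chain rule $I(Y;S,T)=I(Y;S)+I(Y;T\mid S)=I(Y;T)+I(Y;S\mid T)$ combined with non-negativity of conditional mutual information, then taking the maximum and negating. Your added remark on the equality condition $I(Y;T\mid S)=0$ is a correct bonus observation not present in the paper, but the argument itself is identical.
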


Intuitively, Eq. \eqref{prop1-bound} encourages encoders to focus on learning its own knowledge, ensuring consistency in the learned representation. 
Further, since MI terms are intractable, each $I(Y,Z)$ with $Z\in \{S,T\}$ is lower-bounded by the cross-entropy loss using a variational approximation $q_\theta (y|z)$:
\begin{equation}
\label{prop1-eq}
    I(Y;Z) \geq \mathbb{E}_{p(y,z)}\left(\log(q_\theta(y|z))\right) := -\mathcal{L}_{CE} (Z,Y),
\end{equation}

\begin{proposition}[The upper bound of $I(\mathcal{D};S,T)$]
\label{prop2}
    Let $\mathcal{D}$, $S$, $T$ be random variables. Assume the probabilistic mapping $p(\mathcal{D}, S, T)$ follows the Markov chain $S \leftrightarrow \mathcal{D} \leftrightarrow T$. Then:
    \begin{equation}
    \label{prop2-bound}
        I(\mathcal{D}; S, T) \leq I(\mathcal{D}; S) + I(\mathcal{D}; T).
    \end{equation}
\end{proposition}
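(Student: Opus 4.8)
The plan is to expand the joint mutual information $I(\mathcal{D}; S, T)$ through an entropy decomposition and then exploit the two structural facts that are available: the conditional independence encoded in the Markov chain $S \leftrightarrow \mathcal{D} \leftrightarrow T$, and the subadditivity of entropy. First I would write $I(\mathcal{D}; S, T) = H(S, T) - H(S, T \mid \mathcal{D})$, which cleanly isolates the unconditional joint entropy of $(S,T)$ from their conditional joint entropy given $\mathcal{D}$; the inequality will then follow by bounding these two pieces separately.

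The first key step is to simplify the conditional term. Because the assumed Markov chain $S \leftrightarrow \mathcal{D} \leftrightarrow T$ is precisely the statement that $S$ and $T$ are conditionally independent given $\mathcal{D}$ (equivalently $I(S; T \mid \mathcal{D}) = 0$), the conditional joint entropy factorizes as $H(S, T \mid \mathcal{D}) = H(S \mid \mathcal{D}) + H(T \mid \mathcal{D})$. The second key step bounds the unconditional term from above by subadditivity of entropy, $H(S, T) \leq H(S) + H(T)$, which itself follows from $H(S,T) = H(S) + H(T \mid S)$ together with the fact that conditioning cannot increase entropy.

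Combining these two steps yields
\[
I(\mathcal{D}; S, T) \leq \big[H(S) - H(S \mid \mathcal{D})\big] + \big[H(T) - H(T \mid \mathcal{D})\big] = I(\mathcal{D}; S) + I(\mathcal{D}; T),
\]
which is exactly the claimed bound. An equivalent route is to apply the chain rule $I(\mathcal{D}; S, T) = I(\mathcal{D}; S) + I(\mathcal{D}; T \mid S)$ and then show $I(\mathcal{D}; T \mid S) = I(\mathcal{D}; T) - I(S; T) \leq I(\mathcal{D}; T)$ using the symmetry of the interaction information together with $I(S; T \mid \mathcal{D}) = 0$; either derivation reaches the same conclusion.

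The main thing to get right is the correct use of the Markov assumption: the inequality is genuinely driven by the conditional independence of $S$ and $T$ given $\mathcal{D}$, not by any unconditional independence, so I would justify the factorization $H(S,T \mid \mathcal{D}) = H(S \mid \mathcal{D}) + H(T \mid \mathcal{D})$ directly from the chain rather than assuming $S \perp T$. There is no genuinely hard computation here; the only subtlety worth flagging is that the slack in the bound equals $I(S; T) \geq 0$, so equality holds exactly when $S$ and $T$ are unconditionally independent.
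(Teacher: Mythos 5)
Your proof is correct and takes essentially the same route as the paper's: both arguments reduce to the identity $I(\mathcal{D};S,T)=I(\mathcal{D};S)+I(\mathcal{D};T)-I(S;T)$, obtained from the Markov factorization $p(s,t\mid x)=p(s\mid x)\,p(t\mid x)$, and then drop the nonnegative term $I(S;T)$. The only difference is presentational---the paper expands the density ratio inside the expectation directly, while you do the equivalent bookkeeping with entropies via $H(S,T\mid\mathcal{D})=H(S\mid\mathcal{D})+H(T\mid\mathcal{D})$ and subadditivity---and your remark that the slack equals exactly $I(S;T)$ is precisely the intermediate identity the paper writes down before discarding that term.
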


The implementation of each term $I(\mathcal{D};\cdot)$ remains consistent with that in VIB~\cite{VIB} and GIB~\cite{GIB}, achieved by minimizing the KL divergence between the variational posterior $q(\cdot|\mathcal{D})$ and the prior $p(\cdot)$.

\begin{proposition}[Reformulation of $I(S,T| Y)$]
\label{prop3}
    Given the label $Y$ and the variable $S,T$, minimizing $I(S;T | Y)$ is equivalent to minimize $I(S,Y;T,Y)$.
\end{proposition}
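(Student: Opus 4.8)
The plan is to establish the exact identity $I(S,Y;T,Y) = H(Y) + I(S;T\mid Y)$ by two applications of the chain rule for mutual information, and then observe that the additive term $H(Y)$ is a constant that does not depend on the encoders producing $S$ and $T$. Once the identity holds, equivalence of the two minimization problems is immediate, since subtracting a constant does not change the minimizer.

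First I would peel off the shared copy of $Y$ from the second argument. Treating $(T,Y)$ as a pair and applying the chain rule $I(A;U,V)=I(A;V)+I(A;U\mid V)$ with $A=(S,Y)$, $U=T$, $V=Y$, gives $I(S,Y;T,Y)=I(S,Y;Y)+I(S,Y;T\mid Y)$. The first summand collapses to $H(Y)$: because $Y$ is a deterministic component of the pair $(S,Y)$, we have $H(Y\mid S,Y)=0$, hence $I(S,Y;Y)=H(Y)-H(Y\mid S,Y)=H(Y)$. This is the step where the shared-label structure does the work, so I would state the determinism fact explicitly.

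Next I would expand the residual conditional term $I(S,Y;T\mid Y)$ by splitting its first argument. Applying the chain rule on $(S,Y)$ yields $I(S,Y;T\mid Y)=I(Y;T\mid Y)+I(S;T\mid Y,Y)$. Here $I(Y;T\mid Y)=0$, since conditioning on $Y$ removes all remaining information that $Y$ could carry about $T$, and $I(S;T\mid Y,Y)=I(S;T\mid Y)$ as the repeated conditioning variable is redundant. Combining the two expansions gives exactly $I(S,Y;T,Y)=H(Y)+I(S;T\mid Y)$, and therefore $\arg\min I(S,Y;T,Y)=\arg\min I(S;T\mid Y)$ because $H(Y)$, fixed by the label distribution, is untouched by the optimization over the representations.

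I do not expect a genuine obstacle here: the result is a clean identity rather than an inequality, and the whole argument is chain-rule bookkeeping. The only points requiring care are (i) correctly invoking the chain rule on the appropriate argument in each step and (ii) justifying the two vanishing/collapsing terms via the determinism of $Y$ within the joint variables $(S,Y)$ and $(T,Y)$. If anything is delicate, it is merely making explicit that the $Y$ appearing in both arguments is the \emph{same} random variable, which is what allows $H(Y\mid S,Y)=0$ and $I(Y;T\mid Y)=0$; I would flag this assumption at the outset so the reduction to $H(Y)+I(S;T\mid Y)$ reads unambiguously.
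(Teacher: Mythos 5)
Your proof is correct, and it reaches exactly the identity the paper uses — $I(S,Y;T,Y)=I(S;T\mid Y)+H(Y)$ — but by a different mechanism. The paper works at the density level: it expands the integral definition of $I(S;T\mid Y)$, rewrites the ratio $\frac{p(s,t\mid y)}{p(s\mid y)\,p(t\mid y)}$ as $\frac{p(s,t,y)\,p(y)}{p(s,y)\,p(t,y)}$, and splits the expectation into $\mathbb{E}_{p(s,t,y)}\log\frac{p(s,t,y)}{p(s,y)p(t,y)}$ plus $\mathbb{E}_{p(y)}\log p(y)=-H(Y)$, identifying the first term as $I(S,Y;T,Y)$. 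Your route instead peels the shared $Y$ off via two chain-rule applications, with the collapsing steps $H(Y\mid S,Y)=0$, $I(Y;T\mid Y)=0$, and the redundancy of the repeated conditioner — all of which are valid, and your explicit flag that both occurrences of $Y$ denote the \emph{same} random variable is exactly the point that licenses them. Each approach buys something: your chain-rule bookkeeping is coordinate-free, avoids integral manipulation, and makes the role of the shared label transparent; the paper's density expansion has the practical payoff of exhibiting $I(S,Y;T,Y)$ directly as $D_{\mathrm{KL}}\bigl[q(S,T,Y)\,\|\,q(S,Y)q(T,Y)\bigr]$, which is precisely the form subsequently estimated via the density-ratio trick and discriminator in Eq.~\eqref{dis-eq}; if you adopt your derivation, it is worth stating that KL form as an immediate corollary, since that is what gets implemented.

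Two minor remarks. First, both proofs implicitly need $Y$ discrete (as labels are) so that $H(Y)$ is finite; for continuous $Y$ the shared component would make $I(S,Y;T,Y)$ infinite, so this shared caveat is not a gap relative to the paper. Second, your conclusion via equality of minimizers under an additive constant offset is actually stated more accurately than the paper's own closing line, which loosely writes $I(S;T\mid Y)\propto I(S,Y;T,Y)$ when the relation is an additive shift by $H(Y)$, not proportionality.
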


The Proposition~\ref{prop3} achieves the tractable transformation of conditional MI theoretically. However, minimizing the term 
$I(S,Y;T,Y) = D_{\mathrm{KL}}\big[q(S,T,Y) || q(S,Y)q(T,Y)\big]$ is intractable since both distributions involve mixtures with many components. Therefore, we use the density-ratio trick~\cite{sugiyama2012density} by introducing a discriminator $d$, that learns to distinguish between samples from the joint distribution $q(s,t,y)$ and those from the product of marginals $q(s,y)q(t,y)$. In particular, we sample negative pairs $((s,y), (t,y'))$ from $q(s,y)q(t,y)$, where $(s,y)$ and $(t,y')$ are drawn independently, and positive pairs $((s,y), (t,y))$ from the joint distribution $q(s,t,y)$, where both $s$ and $t$ correspond to the same sample. The discriminator $d((s,y), (t,y'))$ is trained to output the probability that a given pair comes from the joint distribution, and the objective is to minimize the MI by solving the following problem:
\begin{equation}
\label{dis-eq}
\begin{gathered}
     \min_q \max_d \mathbb{E}_{q(s,y)q(t,y)}\log d((s,y),(t,y')) \\ 
     +\mathbb{E}_{q(s,t,y)} \log(1-d((s,y),(t,y))).
\end{gathered}
\end{equation}
When the discriminator cannot distinguish between joint and independent samples, the MI is effectively minimized.

\begin{proposition}[Reformulation of the condition in Eq.~\eqref{core}: $\max(I(Y_n;S),I(Y_c;T))\leq K$]
\label{prop4}
    Minimizing $I(Y_c; T)$ and $I(Y_n; S)$ is equivalent to maximize $I(Y_n; T)$ and $I(Y_c; S)$.
\end{proposition}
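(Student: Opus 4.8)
The plan is to exploit a conservation relation on the total label information carried by each representation, turning the minimization of one component into the maximization of its complement. First I would invoke the assumption already used in Lemma~\ref{lemma2}, namely that the observed label $Y$ retains all information about its clean and noisy constituents, so that for either representation $Z \in \{S, T\}$ we have $I(Y; Z) = I(Y_c, Y_n; Z)$.

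Next I would decompose this joint mutual information into its clean and noisy parts. Using the chain rule together with the (conditional) independence of $Y_c$ and $Y_n$ that underlies the noise model, I would write $I(Y_c, Y_n; Z) = I(Y_c; Z) + I(Y_n; Z) + \Delta$, where $\Delta$ collects the interaction term $I(Y_c; Y_n \mid Z)$ and vanishes (or is treated as a constant) under the independence assumption. This yields the key identity $I(Y; Z) = I(Y_c; Z) + I(Y_n; Z)$.

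The crucial step is to observe that the sufficiency term $-I(Y; S, T)$ in the main objective of Eq.~\eqref{core} drives $I(Y; Z)$ to saturate at the total label entropy, so that $I(Y; Z)$ may be regarded as a fixed budget during the constrained phase. With $I(Y; T) = I(Y_c; T) + I(Y_n; T)$ held constant, lowering $I(Y_c; T)$ necessarily raises $I(Y_n; T)$ by the same amount, establishing that minimizing $I(Y_c; T)$ is equivalent to maximizing $I(Y_n; T)$. The symmetric argument applied to $S$ shows that minimizing $I(Y_n; S)$ is equivalent to maximizing $I(Y_c; S)$, completing the proof.

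I expect the main obstacle to be justifying the decomposition $I(Y; Z) = I(Y_c; Z) + I(Y_n; Z)$ rigorously: in realistic noise models (e.g.\ asymmetric or class-conditional noise) $Y_c$ and $Y_n$ are correlated, so the interaction term $\Delta$ does not vanish exactly. I would address this either by restricting to the regime where the constituents are independent so that $\Delta = 0$, or by absorbing $\Delta$ into a constant and arguing that its variation is dominated by the two first-order terms, so that the trade-off direction, and hence the claimed equivalence, is preserved.
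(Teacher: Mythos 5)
Your overall strategy---decompose the label information carried by each representation into clean and noisy components and argue a budget trade-off---is the same skeleton as the paper's proof, which simply asserts, under an assumed capacity bound $I(Y;S)\le A$, that $I(Y_c;S)+I(Y_n;S)\le I(Y;S)+I(Y_c;Y_n)\le A+\mathrm{const}$, so that enlarging one summand suppresses the other (and symmetrically for $T$). However, your version sharpens this into two exact claims, and both have genuine gaps. First, the identity $I(Y;Z)=I(Y_c;Z)+I(Y_n;Z)$ is false in general \emph{even when $Y_c\perp Y_n$}: the chain rule gives
\begin{equation*}
I(Y_c,Y_n;Z)=I(Y_c;Z)+I(Y_n;Z)-I(Y_c;Y_n)+I(Y_c;Y_n\mid Z),
\end{equation*}
and the term $I(Y_c;Y_n\mid Z)$ does not vanish under unconditional independence, because conditioning on $Z$ (which depends on $Y$, a mixture of both constituents) typically induces dependence between them (explaining-away). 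Concretely, take $Y_c,Y_n$ i.i.d.\ fair bits and $Z=Y=Y_c\oplus Y_n$: then $I(Y;Z)=1$ bit while $I(Y_c;Z)=I(Y_n;Z)=0$, so additivity fails completely, and so does your conservation law---the ``budget'' $I(Y;T)$ is full while both components are zero, so lowering one does not raise the other. Thus your proposed fix of ``restricting to the regime where the constituents are independent so that $\Delta=0$'' does not work; what survives is only the one-sided bound $I(Y_c;Z)+I(Y_n;Z)\le I(Y;Z)+I(Y_c;Y_n)$ (using $I(Y_c,Y_n;Z)\le I(Y;Z)$ from the assumption that $Y$ determines $(Y_c,Y_n)$), which is precisely the inequality the paper uses.

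Second, the ``fixed budget'' step---that the sufficiency term saturates $I(Y;Z)$ at the label entropy for each $Z\in\{S,T\}$ separately---is unjustified by the objective in Eq.~\eqref{core}: what is maximized is the joint $I(Y;S,T)$, whose tractable surrogate in Proposition~\ref{prop1} is $\max\left(I(Y;S),I(Y;T)\right)$ and hence exerts no pressure on the smaller marginal, while the compression term $\beta I(\mathcal{D};S,T)$ actively pushes each marginal down. Without saturation there is no exact conservation, only a bounded-sum trade-off; and since the interaction term varies during training, a decrease in $I(Y_c;T)$ can be absorbed by a change in $I(Y_c;Y_n\mid T)$ rather than an increase in $I(Y_n;T)$. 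The paper's (admittedly terse) proof sidesteps both issues by claiming only the inequality under a capacity hypothesis, which is all that the proposition's ``equivalence''---really a proxy relationship justifying the relaxed training signal---requires. If you replace your equality-plus-saturation argument with that one-sided bound, your proof essentially reduces to the paper's.
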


Proposition \ref{prop4} relaxes the condition in Eq. \eqref{core}. Since the original MI calculation is mismatched and thus intractable, the relaxed formulation provides a tractable alternative that can be optimized efficiently, as described in Eq. \eqref{prop1-eq}.

\subsection{Principle to Practice: \method Framework}
\label{sec:4.3}
Based on the theoretical analysis above, this section introduces the practical implementation of \method.
To optimize the objective in Eq. \eqref{core}, we adopt a three-phase training framework to \textbf{progressively disentangle the representation}.
Specifically, we first introduce a \textbf{Warmup} period to provide the model with initial discriminative ability. Building on this, a \textbf{Knowledge Injection} period enforces the constraint by applying InfoJS selector, guiding the learning of $\text{encoder}_{S/T}$ via selected samples. Finally the \textbf{Robust Training} period focuses on optimizing the complete objective with prior knowledge, refining the model’s robustness.

\subsubsection{Feature-Decomposed Dual Encoder Architecture Design.}
Based on the Observation~\ref{obser1}, we adopt the Jensen-Shannon (JS) divergence as a metric to evaluate the noise retention capacity of the two encoders.

\begin{observation}
\label{obser1}
    With the decoder kept fixed, we train the encoder using datasets that share the same input $X$ but differ in the level of label noise in $Y$. As the noise gap between the two datasets increases, the divergence between the resulting encodings from the encoder also becomes larger.
\end{observation}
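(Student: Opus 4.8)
The plan is to reduce the claim to a data-processing argument that links the divergence between the two learned encodings to the divergence between the two label-conditional posteriors, and then to show that the latter grows monotonically with the noise gap. First I would formalize the setup: with the decoder $q_\theta(y\mid z)$ held fixed, an encoder trained on data at noise level $\eta$ minimizes the expected cross-entropy, so at its optimum the decoder-induced conditional $\int q_\theta(y\mid z)\,p^{(\eta)}(z\mid x)\,dz$ matches the noisy label posterior $p^{(\eta)}(y\mid x)$ as closely as the encoder family permits. Writing $p_i(z\mid x):=p^{(\eta_i)}(z\mid x)$ for the two encodings and $p_i(y\mid x)$ for the corresponding induced label posteriors, the quantity the observation concerns is $D_{\mathrm{JS}}\big(p_1(z\mid x)\,\|\,p_2(z\mid x)\big)$, while the quantity I can control analytically is $D_{\mathrm{JS}}\big(p_1(y\mid x)\,\|\,p_2(y\mid x)\big)$.

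The key step is the data-processing inequality for $f$-divergences. Since JS divergence is an $f$-divergence and the fixed decoder acts as a single stochastic channel $Z\to Y$ applied identically to both encodings, pushing the two encoding distributions through it can only contract their divergence:
\begin{equation}
D_{\mathrm{JS}}\big(p_1(y\mid x)\,\|\,p_2(y\mid x)\big)\;\le\;D_{\mathrm{JS}}\big(p_1(z\mid x)\,\|\,p_2(z\mid x)\big).
\end{equation}
Hence the encoding divergence is bounded below by the label-posterior divergence, and it suffices to establish that the latter increases with the noise gap. For the label side I would instantiate a standard noise model: under symmetric noise at rate $\eta$ over $K$ classes, $p^{(\eta)}(y\mid x)$ places mass $1-\eta$ on the clean label and $\eta/(K-1)$ on each remaining class, and a direct computation shows that $D_{\mathrm{JS}}\big(p_1(y\mid x)\,\|\,p_2(y\mid x)\big)$ is a strictly increasing function of the gap $\Delta:=|\eta_1-\eta_2|$ in the informative regime. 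Averaging over $x$ and combining with the displayed inequality yields that the expected encoding divergence is lower-bounded by a monotonically increasing function of $\Delta$, which is the substance of the observation.

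The main obstacle I anticipate is converting this one-sided lower bound into the qualitative monotone trend the observation asserts, together with bridging the gap between the population-optimal encoder and the SGD-trained one. The DPI only guarantees that label-side separation forces \emph{at least} that much encoding-side separation; to rule out the encoder absorbing the change along a decoder null-direction (so that a larger $\Delta$ could fail to enlarge the encoding divergence), I would impose a faithfulness/regularity assumption on the fixed decoder — for instance that the channel $z\mapsto q_\theta(\cdot\mid z)$ is injective on the reachable encoding manifold with a bounded inverse modulus of continuity (bi-Lipschitz in a suitable transport metric) — so that distinct target posteriors force quantitatively distinct encodings and the inequality can be reversed up to a constant. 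I would additionally handle finite capacity and optimization error by assuming approximate realizability (the induced conditional lies within $\varepsilon$ of the target in total variation) and absorbing $\varepsilon$ into the bounds, or by phrasing the result purely for the population optima. Finally, asymmetric or instance-dependent noise breaks the clean closed form in the label-side step, which I would address by replacing the exact JS computation with a Pinsker- or total-variation-based lower bound that remains monotone in $\Delta$.
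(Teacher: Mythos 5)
The paper contains no proof of this statement, and none was intended: it is labeled an \emph{Observation}, the appendix proves only Theorem 1.1, Lemmas 4.1--4.2, Propositions 4.1--4.4 and Eq.~(10), and Observation 4.1 functions purely as an empirically motivated design heuristic justifying JS divergence in the dual-encoder architecture and the InfoJS selector (its empirical counterpart is the divergence-ablation study in Appendix E.5). So there is no paper argument to match; you have attempted to supply a theory for a claim the authors support only experimentally, which is a legitimate thing to do but should be judged on its own terms.

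On those terms, your lower-bound skeleton is sound: under realizability, cross-entropy minimization with a fixed decoder drives the induced predictive distribution toward the noisy posterior $p^{(\eta)}(y\mid x)$; under symmetric noise these posteriors lie on the segment between the clean posterior and the uniform distribution, so $D_{\mathrm{JS}}\bigl(p^{(\eta_1)}(y\mid x)\,\|\,p^{(\eta_2)}(y\mid x)\bigr)$ grows with $|\eta_1-\eta_2|$; and the data-processing inequality correctly transfers this to a floor on the encoding divergence. The genuine gap is the step you flag and then repair by fiat: reversing the DPI requires the decoder channel $z \mapsto q_\theta(\cdot\mid z)$ to be injective with controlled inverse on the reachable encoding set, and for the architectures this paper actually uses that assumption is false --- the decoder maps a $128$-dimensional Gaussian latent onto a $K$-class probability simplex, so it is massively many-to-one, and the decoder-null directions you worry about really exist (two encoders can differ arbitrarily in $z$ while inducing identical label posteriors, e.g.\ through initialization and SGD randomness). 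Consequently, what you have proved is only that the encoding divergence cannot remain small as the noise gap grows (a rising floor), not the monotone increase the observation asserts; at small gaps the actual divergence is unconstrained from above by your argument. That one-sided conclusion is arguably the operationally relevant content --- it is what makes JS divergence usable as a selection signal --- but you should present it as such rather than as a derivation of the observation, since closing the remaining gap would require structural assumptions on the decoder that typical instantiations violate.
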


Accordingly, Figure~\ref{fig:framework} illustrates the overall architecture of the \method: the model is designed with a dual-encoder, single-decoder framework, where the two encoders extract features $S$ and $T$, respectively. Each encoder maps the input features to a high-dimensional Gaussian distribution, and the embeddings are sampled using the reparameterization trick.

\subsubsection{Phase 1: Warmup with Discriminative Learning under Noise.}
To address the problem of noise memorization during training, we introduce a Warmup phase where the model \textbf{builds basic discriminative ability}. Specifically, we pretrain the clean encoder $\text{encoder}_S$ using the full dataset, providing a foundation for more effective separation of clean and noisy samples in subsequent stages. The loss function in Warmup period is defined based on prediction $\hat y_S$:
\begin{equation}
\label{loss1}
    \mathcal L_{Warmup}=\mathcal{L}_{CE}(\text{decoder}(S),Y)=\mathcal{L}_{CE}(\hat y_S,y).
\end{equation}

\subsubsection{Noise-Aware Sample Selection.}
\label{sec:sel_method}
Since the variables $Y_c$ and $Y_n$ are unobservable, we approximate the constraint $\max(I(Y_n;S), I(Y_c;T)) \leq K$ in Eq.~\eqref{core} by selecting a partial set of confident samples to act as proxies for clean and noisy labels. Samples are then grouped into three categories for training: \textbf{Clean Set, Noise Set, and Uncertain Set}.

\begin{observation}
\label{obser2}
    For two different encoders, samples with more consistent predictions after passing through the decoder tend to have smaller divergence between their embeddings. In contrast, samples with inconsistent predictions correspond to larger embedding divergence.
\end{observation}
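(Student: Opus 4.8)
The plan is to turn the qualitative statement into a two-sided monotone relationship between two divergences and prove each direction separately. First I would fix notation: for an input $x$ the two encoders output Gaussian posteriors $q(S\mid x)=\mathcal N(\mu_S,\Sigma_S)$ and $q(T\mid x)=\mathcal N(\mu_T,\Sigma_T)$, and the shared decoder $g$ induces predictive distributions $q_\theta(y\mid S)$ and $q_\theta(y\mid T)$. Following the metric already adopted after Observation~\ref{obser1}, I measure \emph{embedding divergence} by $D_{JS}\!\left(q(S\mid x)\,\|\,q(T\mid x)\right)$ and \emph{prediction (in)consistency} by $D_{JS}\!\left(q_\theta(y\mid S)\,\|\,q_\theta(y\mid T)\right)$. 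The target is then to show that these two quantities move together.

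The easy half — that inconsistent predictions force large embedding divergence — I would obtain from the data processing inequality. The shared decoder together with its softmax head is a fixed stochastic channel $K$, and since $D_{JS}$ is an $f$-divergence it cannot increase under $K$, so
\begin{equation}
D_{JS}\!\left(q_\theta(y\mid S)\,\|\,q_\theta(y\mid T)\right)=D_{JS}\!\left(Kq(S\mid x)\,\|\,Kq(T\mid x)\right)\le D_{JS}\!\left(q(S\mid x)\,\|\,q(T\mid x)\right).
\end{equation}
Hence a large prediction divergence certifies an at-least-as-large embedding divergence, which is exactly the second assertion; contrapositively, small embedding divergence already guarantees consistent predictions.

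The converse — consistent predictions imply small embedding divergence — is the hard part, and it does not follow from the inequality above because a many-to-one decoder can map far-apart embeddings to the same output. To close it I would impose a local bi-Lipschitz assumption on the trained decoder restricted to the embedding support, $\ell\,\|u-v\|\le\|g(u)-g(v)\|\le L\,\|u-v\|$ with $0<\ell\le L$, which is reasonable because a decoder trained to separate classes must preserve the class-relevant directions there. Combining this with the closed form of the Gaussian divergence, whose dominant term is the Mahalanobis mean gap $(\mu_S-\mu_T)^\top\Sigma^{-1}(\mu_S-\mu_T)$ together with the covariance-mismatch trace and log-determinant terms, lets me translate a small output gap $\|g(\mu_S)-g(\mu_T)\|$ into a small mean gap $\|\mu_S-\mu_T\|\le\ell^{-1}\|g(\mu_S)-g(\mu_T)\|$ and, after a matching bound on the covariance terms, into a small $D_{JS}$. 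Because the Gaussian divergence is monotone increasing in each of these gaps, I would sandwich embedding divergence between constant multiples of prediction divergence and conclude that the former vanishes as the latter does.

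The main obstacle is precisely this converse direction: without injectivity the observation holds only as a statistical tendency, so the crux is justifying the lower Lipschitz constant $\ell>0$ on the support actually explored by the reparameterized encoders. I would argue that the Warmup phase and the shared decoder keep both posteriors on a common low-dimensional region where $g$ is locally invertible, and that the disentanglement objective $\min I(S;T\mid Y)$ prevents degenerate collapse, so the lower bound is non-vacuous. Under this regularity the two divergences are equivalent up to the constants $\ell,L$, which upgrades the informal ``tend to'' into a genuine two-sided inequality and completes the proof.
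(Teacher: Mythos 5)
The first thing to note is that the paper never proves this statement deductively: it is labeled an \emph{Observation} (with the hedge ``tend to'') precisely because it is an empirical finding, supported by the selection-strategy experiments in Appendix~E.5 and by cited memorization studies, and the InfoJS selector built on it is further hedged with absolute confidence-score criteria. Within your attempt, the DPI half is sound and is actually a nice formalization the paper does not state: since $D_{JS}$ is an $f$-divergence and the shared decoder is a fixed Markov kernel from embedding space to label space, $D_{JS}\bigl(q_\theta(y\mid S)\,\|\,q_\theta(y\mid T)\bigr)\le D_{JS}\bigl(q(S\mid x)\,\|\,q(T\mid x)\bigr)$, so inconsistent predictions do certify at-least-as-large embedding divergence. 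That cleanly covers the second assertion of the observation.

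The genuine gap is the converse, and your bi-Lipschitz patch does not close it. A lower Lipschitz constant $\ell>0$ for the decoder on the embedding support is not merely unproven --- it is contrary to what a classification decoder is trained to do. The decoder's job is to be invariant along within-class directions (many-to-one by design), and the softmax head saturates, so on large regions of the support substantial embedding displacements produce negligible prediction changes, making the effective $\ell$ zero exactly where it matters. Worse, in the label-noise setting that motivates this observation, the problematic pairs are a clean sample and a memorized noisy sample assigned the same predicted class: these are same-prediction, far-embedding pairs, so a two-sided equivalence of the kind you claim would actually contradict the mechanism that makes divergence-based selection informative beyond prediction agreement. Note also that the paper's notion of consistency is argmax agreement of $\hat y_S$ and $\hat y_T$, which is strictly weaker than small $D_{JS}$ between the predictive distributions, weakening the converse further. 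Your appeal to Warmup and to $\min I(S;T\mid Y)$ cannot rescue $\ell>0$: the disentanglement term pushes $S$ and $T$ \emph{apart} conditionally on $Y$, which if anything manufactures more far-apart embeddings with matching labels. The defensible conclusion is your one-sided inequality plus empirical evidence for the tendency in the other direction --- which is exactly the epistemic status the paper assigns the claim by calling it an observation rather than a proposition.
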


Observation \ref{obser2} suggests that the divergence between encoders can be used to identify clean samples. Moreover, prior studies~\cite{arpit2017a, song2019does} have shown that models tend to fit clean samples earlier. 
Based on these insights, we designed the \textbf{InfoJS selector} as detailed in Algorithm B.2, which identifies clean (noisy) samples as those with MI between $S$ and $Y$ being in the top $\delta$\% (bottom $\delta$\%) and JS divergence between $S$ and $T$ in the bottom $\delta$\% (top $\delta$\%), respectively. Unselected samples are treated as the uncertain set.
Labels are assigned as follows: $y' = y$ for Clean and Noise Sets, and $y' = g(\hat y_S) / g(\hat y_T)$ for Uncertain Set when training the $\text{encoder}_{S/T}$, where $g$ denotes either a debiasing function~\cite{menonlong} or one-hot mapping.

However, the InfoJS selector performs selection based on relative feature scores. To improve the quality of each set, we further enrich the sample composition by incorporating predicted confidence scores as an absolute criterion.

\subsubsection{Phase 2: Knowledge Injection to Disentangle Representations.}
\label{period2}
Once the model has acquired basic discriminative ability, we proceed to optimize the objective in Eq.~\eqref{core}. Given the condition and its reformulated form:
\begin{equation}
\begin{gathered}
    \underbrace{\max(I(Y_n;S), I(Y_c;T)) \leq K}_{\text{The original constraint in Eq. \eqref{core}}} \\
    \Rightarrow \underbrace{\max(I(Y_n;T)),\max( I(Y_c;S))}_{\text{The reformulated constraint in Proposition \ref{prop4}}},
\end{gathered}
\end{equation}
to satisfy the constraint, we introduce a Knowledge Injection phase to encourage the $\text{encoder}_{S,T}$ to learn disentangled representations. Furthermore, to enforce difference in noise representation between the two encoders, we incorporate the JS divergence $D_{JS}$ based on Observation~\ref{obser1}:
\begin{equation}
\label{loss20}
    \left\{
    \begin{aligned}
        &\mathcal L_{Clean}=\mathcal{L}_{CE}(\hat{y}_{S}, y')-D_{JS}(s\ \|\ t),\\
        &\mathcal L_{Uncertain}=\mathcal{L}_{CE}(\hat{y}_S, y') + \mathcal{L}_{CE}(\hat{y}_T, y') +D_{JS}(s\ \|\ t),\\
        &\mathcal L_{Noise}=\mathcal{L}_{CE}(\hat{y}_{T}, y')-D_{JS}(s\ \|\ t),
    \end{aligned}
    \right.
\end{equation}

For Clean and Noise Sets, divergence is maximized to increase encoder discrepancy; and for the Uncertain set, divergence is minimized to guide $T$ towards meaningful patterns. It is worth noting that the Uncertain set is much smaller, thus has limited influence on the encoders' training process.

Empirically, minimizing the $I(\mathcal{D};S,T)$ helps to leading to a more robust encoding space. To progressively disentangle the representation and achieve a minimal representation, we introduce a regularization term $\mathcal{L}_{Minimal}$ that approximates the $\min I(\mathcal{D}; S, T)$ term base on Proposition \ref{prop2}, and incorporate it into the loss function during the Knowledge Injection period to learn a compact representation:
\begin{equation}
\label{loss21}
    \mathcal L_{Injection}=\mathcal L_{Clean}+\mathcal L_{Uncertain}+\mathcal L_{Noise}+ \mathcal L_{Minimal}.
\end{equation}
This facilitates a smoother transition to the third Robust Training stage. The implementation details of $\mathcal{L}_{Minimal}$ are provided in the Appendix D.1.

\subsubsection{Phase 3: Robust Training for Representation Consistency.}
\label{period3}
The Warmup stage establishes initial discriminative ability, while Knowledge Injection realized constraint to guide the model toward informative and reliable samples. To further disentangle and enhance representation robustness under label noise, this period focuses on optimizing the full objective in Eq.~\eqref{core-before}: $\min -I(Y;S,T) + I(\mathcal{D}; S,T) + I(S;T|Y)$, aiming to learn noise consistent representations.

Section~\ref{sec:bound} has introduced the implementation of each objective term. Among them we propose $\mathcal{L}_{ConCE}$ to optimize the term $I(Y;S,T)$ based Eq. \eqref{prop1-bound} and \eqref{prop1-eq}:
\begin{equation}
\label{conCE}
    \mathcal{L}_{ConCE} \leftarrow \sum \min(\mathcal{L}_{CE}(\hat y_S, y), \mathcal{L}_{CE}(\hat y_T, y)),
\end{equation}
encouraging consistency between encoders and clean/noisy labels. Detailed formulations are provided in Appendix B.1.

The loss function for the Robust Training period is:
\begin{equation}
\label{loss3}
    \begin{aligned}
        \mathcal{L}_{Robust} = \frac{1}{|\mathcal B|} \sum_{i=1}^{\mathcal B} 
        [ 
        &\underbrace{\mathcal{L}_{ConCE}(\hat y_S, \hat y_T, y)}_{\text{Eq. \eqref{conCE}}} + \beta \underbrace{\mathcal{L}_{Minimal}}_{\text{Eq. \eqref{prop2-bound}}}\\
        &  - \gamma \underbrace{\log d(s_i, y_i; t_i, y_i)}_{\text{Proposition \ref{prop3}, Eq. \eqref{dis-eq}}}
        ],
    \end{aligned}
\end{equation}
where $\mathcal{B}$ denotes a training batch. In addition, we alternately update the discriminator $d$ based on Eq. \eqref{dis-eq}, using a random permutation  $\pi$ to approximate the marginal distribution:
\begin{equation}
\label{loss_d}
\begin{aligned}
    \mathcal L_d =\frac{1}{|\mathcal B|} \sum_{i=1}^{\mathcal B} &-\log (1-d(s_{i}, y_{i}; t_{\pi(i)}, y_{\pi(i)})) \\
    &-\log d(s_i, y_i; t_i, y_i).
\end{aligned}
\end{equation}

\section{Experiment}
\label{sec:exp}

\begin{table*}[t]
  \centering
    \tabcolsep=0.27cm
    {\small
  \begin{tabular}{cc|cccccc|c}
    \toprule
    \multirow{2}{*}{\textbf{Method}} & \multirow{2}{*}{\textbf{Model}} 
      & \multicolumn{5}{c}{\textbf{CIFAR-10N}} & \textbf{CIFAR-100N} & \multirow{2}{*}{\makecell{\textbf{Animal}\\\textbf{-10N}}} \\
    \cmidrule(r){3-7} \cmidrule(lr){8-8}
      & & aggre & rand1 & rand2 & rand3 & worst & noisy100 \\
    \midrule
    \multirow{2}{*}{\makecell{\textbf{Classic}\\\textbf{IB}}} & VIB & 86.11\textsubscript{$\pm$0.34} & 83.69\textsubscript{$\pm$0.50} & 83.69\textsubscript{$\pm$0.46} & 83.76\textsubscript{$\pm$0.29} & 73.80\textsubscript{$\pm$0.59} & 53.29\textsubscript{$\pm$0.09} & 76.28\textsubscript{$\pm$0.51}\\
    ~ & NIB & 85.21\textsubscript{$\pm$0.44} & 84.03\textsubscript{$\pm$1.43} & 81.98\textsubscript{$\pm$0.68} & 82.39\textsubscript{$\pm$0.43} & 73.51\textsubscript{$\pm$0.82} & 48.11\textsubscript{$\pm$0.40} & 75.62\textsubscript{$\pm$0.64}       \\
    \midrule
    \multirow{2}{*}{\makecell{\textbf{Robust}\\\textbf{Loss}}} & VIB ($\mathcal L_{GCE}$) & 85.70\textsubscript{$\pm$0.08} & 84.32\textsubscript{$\pm$0.50} & 83.97\textsubscript{$\pm$0.38} & 84.25\textsubscript{$\pm$0.68} & 78.88\textsubscript{$\pm$0.27}  & ---  & 81.72\textsubscript{$\pm$1.77}       \\
    ~ & VIB ($\mathcal L_{SCE}$) & 83.95\textsubscript{$\pm$0.10} & 82.65\textsubscript{$\pm$0.25} & 82.84\textsubscript{$\pm$0.31} & 82.50\textsubscript{$\pm$0.24} & 73.81\textsubscript{$\pm$1.54}  & 50.71\textsubscript{$\pm$0.14} & 77.17\textsubscript{$\pm$0.44} \\
    \midrule
    \multirow{2}{*}{\makecell{\textbf{Improved}\\\textbf{IB}}} & SIB & 89.99\textsubscript{$\pm$0.08} & 84.75\textsubscript{$\pm$1.04} & 85.07\textsubscript{$\pm$0.72} & 85.39\textsubscript{$\pm$0.50} & 70.58\textsubscript{$\pm$0.50} & 50.82\textsubscript{$\pm$0.41} & 83.95\textsubscript{$\pm$0.14}\\
    ~ & DT-JSCC & 85.46\textsubscript{$\pm$0.44} & 81.85\textsubscript{$\pm$0.66} & 81.14\textsubscript{$\pm$0.55} & 81.03\textsubscript{$\pm$0.34} & 69.73\textsubscript{$\pm$1.15} & 43.61\textsubscript{$\pm$0.19} & 78.98\textsubscript{$\pm$0.23} \\
    \midrule
    \multirow{3}{*}{\makecell{\textbf{Deniose}\\\textbf{+ IB}}} & JoCoR+VIB &86.39\textsubscript{$\pm$0.18} & 86.45\textsubscript{$\pm$0.02} & 86.53\textsubscript{$\pm$0.29} & 86.60\textsubscript{$\pm$0.11} & 81.65\textsubscript{$\pm$0.15} & 54.24\textsubscript{$\pm$0.18} & 75.45\textsubscript{$\pm$0.27}\\
    ~ & (ELR+)+VIB & \underline{92.65\textsubscript{$\pm$0.27}} & 92.09\textsubscript{$\pm$0.25} & 92.01\textsubscript{$\pm$0.20} & 91.93\textsubscript{$\pm$0.15} & 86.68\textsubscript{$\pm$0.25} & 61.06\textsubscript{$\pm$0.34} & \underline{85.87\textsubscript{$\pm$0.15}}\\
    ~ & Promix+VIB & 92.35\textsubscript{$\pm$0.38} & \underline{92.59\textsubscript{$\pm$0.40}} & \underline{92.42\textsubscript{$\pm$0.17}} & \underline{92.54\textsubscript{$\pm$0.21}} & \textbf{91.24\textsubscript{$\pm$0.28}} & \textbf{63.91\textsubscript{$\pm$0.19}} & 85.47\textsubscript{$\pm$0.51} \\
    \midrule
    \multirow{1}{*}{\textbf{Ours}} & \method & \textbf{94.17\textsubscript{$\pm$0.12}} & \textbf{93.25\textsubscript{$\pm$0.11}} & \textbf{93.19\textsubscript{$\pm$0.09}} & \textbf{93.03\textsubscript{$\pm$0.11}} & \underline{87.95\textsubscript{$\pm$0.22}} & \underline{63.59\textsubscript{$\pm$0.67}} & \textbf{88.49\textsubscript{$\pm$0.11}}\\
    \bottomrule
  \end{tabular}
  }
  \caption{Classification accuracy (\%) on the CIFAR-10N/100N and Animal-10N dataset. All the best results are highlighted in \textbf{bold}, and the second-best results are \underline{underlined}.}
  \label{tab:cifarN}
\end{table*}

\begin{table*}[t]
  \centering
    \setlength{\tabcolsep}{1mm}
    {\small
  \begin{tabular}{cc|ccccc|cccc}
    \toprule
    \multirow{2}{*}{\textbf{Method}} & \multirow{2}{*}{\textbf{Model}} & \multirow{2}{*}{\textbf{Clean}}
      & \multicolumn{4}{c|}{\textbf{Uniform Noise}} 
      & \multicolumn{4}{c}{\textbf{Pair Noise}} \\
    \cmidrule(lr){4-7} \cmidrule(l){8-11}
      & & & 10\% & 20\% & 30\% & 40\% & 10\% & 20\% & 30\% & 40\% \\
    \midrule
    \multirow{1}{*}{\makecell{\textbf{Classic}}} 
      & GIB & 71.57\textsubscript{$\pm$1.18} & \underline{70.50\textsubscript{$\pm$1.85}} & 64.30\textsubscript{$\pm$6.45} & 63.90\textsubscript{$\pm$3.51} & \underline{62.67\textsubscript{$\pm$1.35}} & 68.67\textsubscript{$\pm$3.47} & 61.30\textsubscript{$\pm$14.57} & 67.53\textsubscript{$\pm$4.77} & 55.57\textsubscript{$\pm$14.33} \\
    \midrule
    \multirow{2}{*}{\makecell{\textbf{Robust}\\\textbf{Loss}}}
      & GIB ($\mathcal L_{GCE}$) & 69.93\textsubscript{$\pm$0.69} & 67.43\textsubscript{$\pm$3.21} & 61.67\textsubscript{$\pm$7.19} & 47.80\textsubscript{$\pm$18.62} & 43.47\textsubscript{$\pm$14.50} & 50.93\textsubscript{$\pm$0.52} & 55.33\textsubscript{$\pm$11.23} & 62.37\textsubscript{$\pm$6.99} & 36.90\textsubscript{$\pm$15.23} \\
      & GIB ($\mathcal L_{SCE}$) & \underline{72.53\textsubscript{$\pm$0.12}} & 70.17\textsubscript{$\pm$2.10} & \underline{71.63\textsubscript{$\pm$2.05}} & 62.90\textsubscript{$\pm$8.09} & 51.87\textsubscript{$\pm$6.03} & 69.30\textsubscript{$\pm$1.66} & 68.23\textsubscript{$\pm$3.41} & 65.13\textsubscript{$\pm$5.02} & 51.13\textsubscript{$\pm$11.30} \\
    \midrule
    \multirow{1}{*}{\makecell{\textbf{Improv-}\\\textbf{ed IB}}} 
      & CurvGIB & 64.63\textsubscript{$\pm$5.28} & 65.67\textsubscript{$\pm$5.85} & 54.67\textsubscript{$\pm$10.09} & 54.00\textsubscript{$\pm$2.41} & 54.97\textsubscript{$\pm$2.78} & 59.97\textsubscript{$\pm$9.00} & 62.07\textsubscript{$\pm$5.15} & 66.63\textsubscript{$\pm$1.94} & 54.57\textsubscript{$\pm$1.25} \\
      & IS-GIB & 71.00\textsubscript{$\pm$1.22} & 69.97\textsubscript{$\pm$1.41} & 64.30\textsubscript{$\pm$2.30} & 59.77\textsubscript{$\pm$3.70} & 53.77\textsubscript{$\pm$4.41} & 64.83\textsubscript{$\pm$2.34} & 62.50\textsubscript{$\pm$1.51} & 62.50\textsubscript{$\pm$1.31} & 55.40\textsubscript{$\pm$4.74} \\
    \midrule
    \multirow{2}{*}{\makecell{\textbf{Denoise}\\\textbf{+ IB}}} 
      & RNCGLN+GIB & 70.57\textsubscript{$\pm$0.99} & 69.50\textsubscript{$\pm$0.86} & 63.43\textsubscript{$\pm$5.90} & 62.83\textsubscript{$\pm$4.15} & 53.27\textsubscript{$\pm$14.47} & \underline{69.90\textsubscript{$\pm$1.69}} & 68.20\textsubscript{$\pm$2.14} & 66.47\textsubscript{$\pm$3.21} & \underline{56.77\textsubscript{$\pm$15.11}} \\
      & CGNN+GIB & 71.87\textsubscript{$\pm$1.99} & 68.97\textsubscript{$\pm$3.09} & 65.47\textsubscript{$\pm$4.77} & \underline{64.93\textsubscript{$\pm$2.46}} & 48.83\textsubscript{$\pm$6.48} & 59.03\textsubscript{$\pm$12.67} & \underline{69.77\textsubscript{$\pm$1.77}} & \underline{68.50\textsubscript{$\pm$2.83}} & 53.93\textsubscript{$\pm$13.85} \\
    \midrule
    \textbf{Ours} & \method & \textbf{74.97\textsubscript{$\pm$0.68}} & \textbf{74.90\textsubscript{$\pm$2.09}} & \textbf{73.40\textsubscript{$\pm$2.62}} & \textbf{70.50\textsubscript{$\pm$3.86}} & \textbf{72.20\textsubscript{$\pm$4.22}} & \textbf{75.63\textsubscript{$\pm$0.46}} & \textbf{73.03\textsubscript{$\pm$1.77}} & \textbf{70.07\textsubscript{$\pm$3.20}} & \textbf{68.77\textsubscript{$\pm$2.29}} \\
    \bottomrule
  \end{tabular}
  }
  \caption{Classification accuracy (\%) on the Pubmed dataset under different noise types and noise rates. All the best results are highlighted in \textbf{bold}, and the second-best results are \underline{underlined}.}
  \label{tab:pubmed}
\end{table*}

In this section, we conduct extensive experiments to evaluate the robustness and efficiency of the \method under diverse tasks and various types of noise, including real-world and synthetic label noise, as well as adversarial perturbation.
\footnote{Code available at: https://github.com/RingBDStack/LaT-IB}

\subsection{Experimental Settings}
\subsubsection{Datasets.} We evaluate the proposed \method method on multiple datasets. For image classification, we utilize the CIFARN~\cite{CIFARN}, Animal-10N~\cite{animal-10N} and CIFAR~\cite{cifar} datasets. For node classification tasks,  we evaluate on Cora, Citeseer, Pubmed~\cite{G_data}, and DBLP~\cite{dblp}. More descriptions about datasets are provided in Appendix E.1.

\subsubsection{Baselines.} We compare our \method with four categories, 16 baselines in two scenarios: \ding{172} Classic IB methods; \ding{173} IB with robust loss functions; \ding{174} Improved IB variants; \ding{175} Two-stage denoising + IB methods. They comprehensively evaluate our \methods performance from multiple perspectives.

\subsubsection{Label Noise Settings.}
To evaluate the robustness of \method and baselines against label noise, we conduct experiments in both image and graph classification tasks.
For image classification, we evaluate on both real-world noisy datasets and synthetic settings with symmetric and asymmetric label noise, simulated using custom transition matrices as described in \cite{promix}. For node classification, we follow the protocol in \cite{noisyGL} to inject uniform and pairwise label noise into graph labels.

\subsubsection{Adversarial Attack Settings.} 
As discussed in Appendix D.1, the implementation of $I(\mathcal{D}; S, T)$ in our \method framework aligns with prior work VIB and GIB, thereby theoretically inheriting their robustness properties. To empirically verify this claim, we evaluate \methods performance under adversarial perturbations in the image classification setting.
Specifically, we adopt the FGSM~\cite{fgsm} attack to perturb input images with $\varepsilon \in \{0.05, 0.1, 0.2\}$, controlling the perturbation strength. Combined with noisy labels during training, this setup evaluate the robustness of model under compound noise conditions.

\subsection{Robustness Against Label Noise}
\begin{table*}[t]
  \centering
    \setlength{\tabcolsep}{2.5mm}
    {\small
  \begin{tabular}{c|cccc|cccc}
    \toprule
    \multirow{2}{*}{\textbf{Model}} & \multicolumn{4}{c|}{\textbf{CIFAR-10N (aggre)}} & \multicolumn{4}{c}{\textbf{CIFAR-10N (worst)}} \\
    \cmidrule(lr){2-5} \cmidrule(l){6-9}
    & No attack & $\varepsilon=0.05$ & $\varepsilon=0.1$ & $\varepsilon=0.2$ & No attack & $\varepsilon=0.05$ & $\varepsilon=0.1$ & $\varepsilon=0.2$ \\
    \midrule
    VIB & 86.11\textsubscript{$\pm$0.34} & 52.33\textsubscript{$\pm$1.55} & 43.18\textsubscript{$\pm$2.10} & 36.63\textsubscript{$\pm$1.10} & 73.80\textsubscript{$\pm$0.59} & 43.29\textsubscript{$\pm$2.36} & \underline{36.56\textsubscript{$\pm$3.28}} & 32.17\textsubscript{$\pm$3.27} \\
    VIB ($\mathcal{L}_{GCE}$) & 85.70\textsubscript{$\pm$0.08} & 54.15\textsubscript{$\pm$1.85} & 44.84\textsubscript{$\pm$3.00} & 34.72\textsubscript{$\pm$2.43} & 78.88\textsubscript{$\pm$0.27} & 43.27\textsubscript{$\pm$1.56} & 31.24\textsubscript{$\pm$1.42} & 24.23\textsubscript{$\pm$1.45} \\
    SIB & 89.99\textsubscript{$\pm$0.08} & \underline{56.48\textsubscript{$\pm$2.50}} & \underline{46.62\textsubscript{$\pm$2.41}} & \underline{38.14\textsubscript{$\pm$2.37}} & 70.58\textsubscript{$\pm$0.50} & 43.39\textsubscript{$\pm$2.83} & 33.40\textsubscript{$\pm$2.87} & \underline{27.89\textsubscript{$\pm$2.93}} \\
    (ELR+)+VIB & \underline{92.65\textsubscript{$\pm$0.27}} & 39.88\textsubscript{$\pm$0.74} & 23.16\textsubscript{$\pm$0.40} & 14.60\textsubscript{$\pm$0.39} & 86.68\textsubscript{$\pm$0.25} & 42.72\textsubscript{$\pm$0.24} & 26.44\textsubscript{$\pm$0.64} & 14.70\textsubscript{$\pm$0.70} \\
    Promix+VIB & 92.35\textsubscript{$\pm$0.38} & 51.27\textsubscript{$\pm$1.53} & 36.43\textsubscript{$\pm$0.65} & 20.49\textsubscript{$\pm$1.79} & \textbf{91.24}\textsubscript{$\pm$0.28} & \underline{52.88\textsubscript{$\pm$1.46}} & 36.05\textsubscript{$\pm$0.68} & 23.79\textsubscript{$\pm$0.14} \\
    \textbf{\method} & \textbf{94.17}\textsubscript{$\pm$0.12} & \textbf{69.38}\textsubscript{$\pm$1.23} & \textbf{60.66}\textsubscript{$\pm$2.03} & \textbf{49.64}\textsubscript{$\pm$2.27} & \underline{87.95\textsubscript{$\pm$0.22}} & \textbf{64.36}\textsubscript{$\pm$1.67} & \textbf{54.18}\textsubscript{$\pm$2.53} & \textbf{43.91}\textsubscript{$\pm$3.05} \\
    \bottomrule
  \end{tabular}
  }
  \caption{Classification accuracy (\%) on CIFAR-10N (aggre and worst) under different adversarial perturbation levels. 
For Denoise + IB methods, adversarial attacks are applied in both stages: the VIB model is trained using the output of a denoising model that has itself been attacked. 
All the best results are highlighted in \textbf{bold}, and the second-best results are \underline{underlined}.}
  \label{tab:cifar10n_attack}
\end{table*}

In this section, we evaluate the representation capability of our proposed method under various label noise conditions. Specifically, we investigate whether the \method model can effectively learn robust representations when trained on data corrupted by different types and levels of label noise. 

\textbf{Results.} In most scenarios, our proposed \method method outperforms other baseline approaches as shown in Table \ref{tab:cifarN} and \ref{tab:pubmed}. In certain cases, however, methods that first perform denoising and then apply IB achieve better results, likely due to the strong denoising capacity of those models. Nevertheless, such two-stage methods involve longer training pipelines and are more vulnerable to adversarial attacks, as will be demonstrated in the next section. Additional experimental results on label noise are shown in Appendix E. 

\subsection{Robustness Against Adversarial Perturbations}
In this section, to further validate the ``Minimal-Sufficient" property in MSC of the proposed \method method, we apply perturbations to the input data $\mathcal{D}$. The perturbed data is then fed into models trained under noisy label settings. This setup enables a comprehensive evaluation of model robustness against diverse noise, including inputs and labels.

\textbf{Results.} The results demonstrate that the \method method exhibits strong robustness against adversarial attacks, significantly outperforming other approaches, as shown in Table \ref{tab:cifar10n_attack}. Notably, two-stage methods suffer a substantial performance drop under attack due to the increased number of vulnerable components, further highlighting their limitations.

\begin{figure}[!t]
    \centering
    \includegraphics[width=0.47\textwidth]{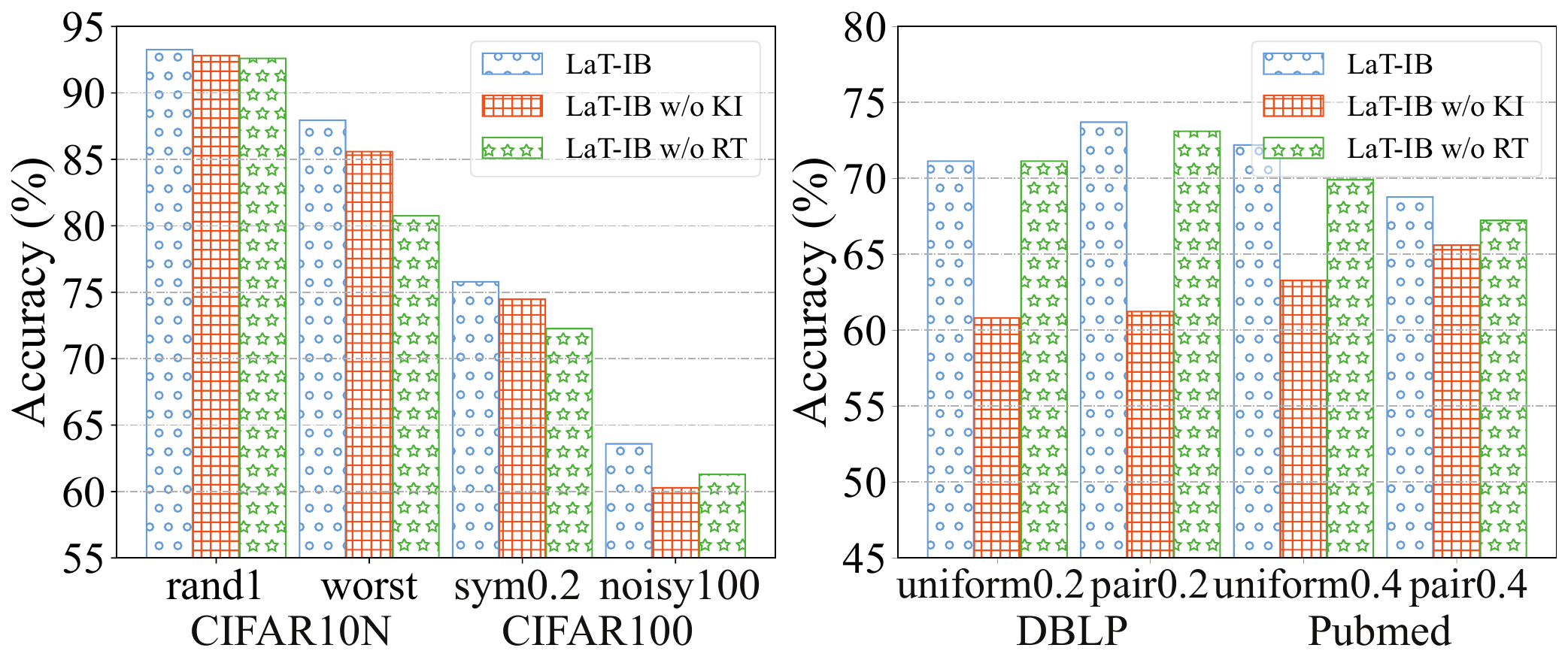}
    \caption{Ablation study.}
\end{figure}

\subsection{Ablation Study}
In this section, we analyze the effectiveness of different training stages in the \method framework.
To investigate the role of each phase in enhancing model robustness, we design two ablated variants:
\begin{itemize}
    \item \textbf{\method (\textit{w/o KI})}: We remove the Knowledge Injection period, thus $\max(I(Y_n;S), I(Y_c;T)) \leq K$ is not satisfied. weakening the ability to map $S \to Y_c$ and $T\to Y_n$.
    \item \textbf{\method (\textit{w/o RT})}: We remove the Robust Training period, meaning no further enhancement is applied to the representations from $S, T$. The \method model can only gain partial information from the three subsets. 
\end{itemize}

Note that we do not design an ablation variant without the Warmup period, as it is essential for establishing basic classification capability and stable later training.

\textbf{Results.} Overall, the full \method method achieves the best performance under all noisy label settings as shown in Figure 3, demonstrating the importance of different periods in the framework. For image classification tasks (with larger samples), the Robust Training stage is particularly critical, while for graph-based tasks (with fewer samples), the Knowledge Injection stage proves more influential. These findings highlight the necessity of each training stage in achieving robust representations under noisy supervision.

\subsection{Hyperparameter Sensitivity Analysis}
\label{hyper_beta}
We analyze the sensitivity of the model to the hyperparameter $\beta$, $\gamma$ and $\delta$.
The coefficient $\beta$ controls the feature compression term $I(\mathcal{D}; S, T)$, which encourages the model to learn noise invariant features. The coefficient $\gamma$ controls the feature separation term $I(S;T|Y)$, which encourages the $\text{encoder}_{S,T}$ to capture clean and noisy representations respectively. $\delta$ regulates how much information the $\text{encoder}_{S,T}$ learns during the Knowledge Injection phase.

\textbf{Results.} We observe that a large $\beta$ can dominate training and cause collapse as shown in Figure 4, indicating that $I(\mathcal{D}; S, T)$ partially limits the model’s expressiveness. However, our method is more tolerant to $\beta$ than the original VIB, which fails to train on CIFAR-10 with 50\% symmetric noise at $\beta = 0.01$. In contrast, our model performs better as $\beta$ decreases because the input compression level is reduced.

We also observe that the model's sensitivity to $\gamma$ varies across noisy settings, highlighting the importance of the separation term $I(S;T|Y)$ under different types of noise.

For $\delta$, a too-small $\delta$ limits the encoder’s training data exposure, while a too-large $\delta$ causes the encoders to converge, reducing their ability to separate clean and noisy information.
More detailed results in Appendix E.6.

\begin{figure}[!t]
    \centering
    \includegraphics[width=0.46\textwidth]{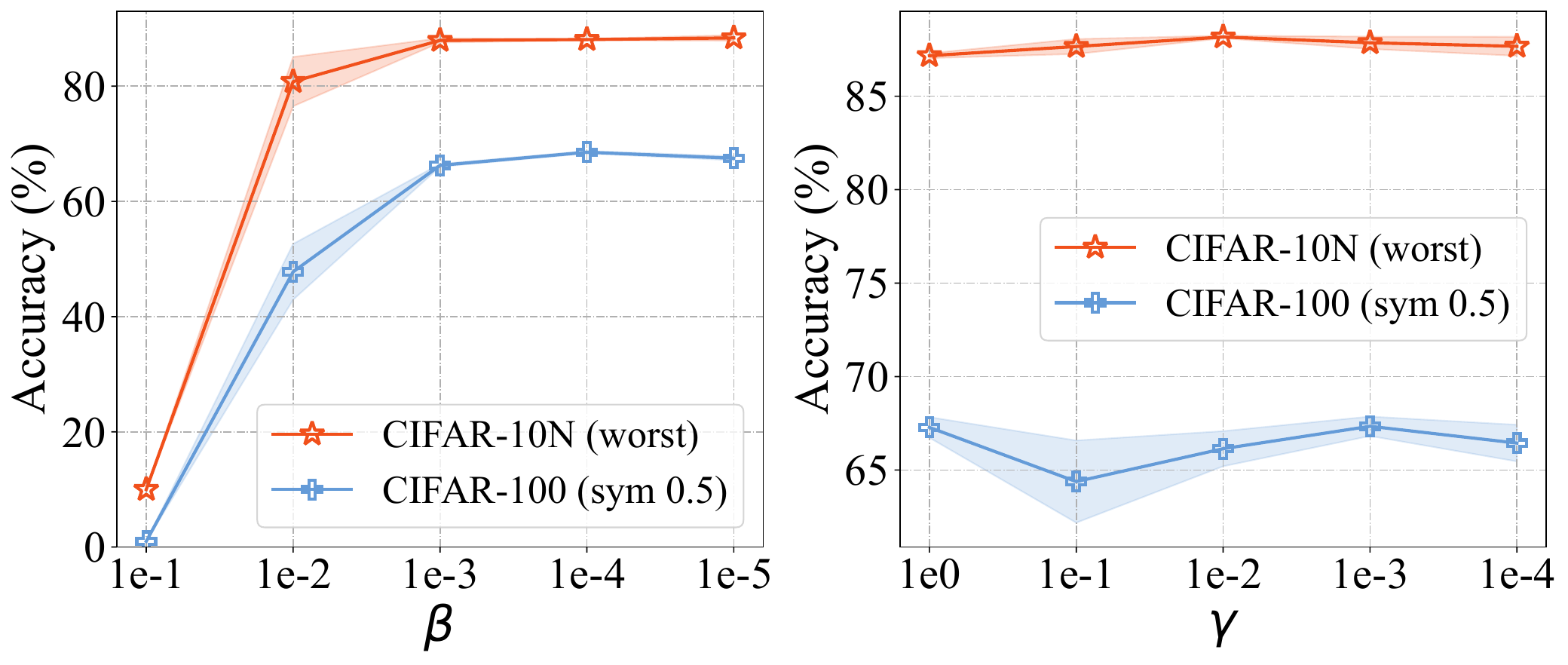}
    \caption{The influence of $\beta$ and $\gamma$.}
\end{figure}

\section{Conclusion}
In this work, we propose \textbf{\method}, a novel yet principled IB framework that enables robust representation learning under label noise while preserving the principle of learning minimally sufficient representations.
We disentangle features into representations related to clean and noisy label spaces, and theoretically demonstrate the noise-separating effect of our method through upper and lower bounds analysis. Furthermore, we design a  three-phase training framework comprising \periods, that facilitates the extraction of \textit{``Minimal-Sufficient-Clean"} representations. Extensive experiments across diverse noisy environments validate the superior performance of \method compared to existing IB-based methods, highlighting its potential to efficiently advance the practical application of IB theory in real-world learning scenarios with label noise.

\section*{Acknowledgments}
The corresponding author is Qingyun Sun. This work is supported by NSFC under grants No.62427808 and No.62225202, and by the Fundamental Research Funds for the Central Universities. We extend our sincere thanks to all reviewers for their valuable efforts.

\bibliography{aaai2026}

@article{GIB,
  title={Graph information bottleneck},
  author={Wu, Tailin and Ren, Hongyu and Li, Pan and Leskovec, Jure},
  journal={Advances in Neural Information Processing Systems},
  volume={33},
  pages={20437--20448},
  year={2020}
}

@article{IB,
  title={The information bottleneck method},
  author={Tishby, Naftali and Pereira, Fernando C and Bialek, William},
  journal={arXiv preprint physics/0004057},
  year={2000}
}

@inproceedings{VIB,
  title={Deep Variational Information Bottleneck},
  author={Alemi, Alexander A and Fischer, Ian and Dillon, Joshua V and Murphy, Kevin},
  booktitle={International Conference on Learning Representations},
  year={2017}
}

@inproceedings{CIFARN,
  title={Learning with Noisy Labels Revisited: A Study Using Real-World Human Annotations},
  author={Wei, Jiaheng and Zhu, Zhaowei and Cheng, Hao and Liu, Tongliang and Niu, Gang and Liu, Yang},
  booktitle={International Conference on Learning Representations},
  year={2022}
}

@article{ELR,
  title={Early-learning regularization prevents memorization of noisy labels},
  author={Liu, Sheng and Niles-Weed, Jonathan and Razavian, Narges and Fernandez-Granda, Carlos},
  journal={Advances in neural information processing systems},
  volume={33},
  pages={20331--20342},
  year={2020}
}

@article{LN-survey,
  title={Learning from noisy labels with deep neural networks: A survey},
  author={Song, Hwanjun and Kim, Minseok and Park, Dongmin and Shin, Yooju and Lee, Jae-Gil},
  journal={IEEE transactions on neural networks and learning systems},
  volume={34},
  number={11},
  pages={8135--8153},
  year={2022},
  publisher={IEEE}
}

@article{IB-survey,
  title={A survey on information bottleneck},
  author={Hu, Shizhe and Lou, Zhengzheng and Yan, Xiaoqiang and Ye, Yangdong},
  journal={IEEE Transactions on Pattern Analysis and Machine Intelligence},
  year={2024},
  publisher={IEEE}
}

@article{x_robust1,
  title={Learning and generalization with the information bottleneck},
  author={Shamir, Ohad and Sabato, Sivan and Tishby, Naftali},
  journal={Theoretical Computer Science},
  volume={411},
  number={29-30},
  pages={2696--2711},
  year={2010},
  publisher={Elsevier}
}

@article{link-GIB,
  title={Combating bilateral edge noise for robust link prediction},
  author={Zhou, Zhanke and Yao, Jiangchao and Liu, Jiaxu and Guo, Xiawei and Yao, Quanming and He, Li and Wang, Liang and Zheng, Bo and Han, Bo},
  journal={Advances in Neural Information Processing Systems},
  volume={36},
  pages={21368--21414},
  year={2023}
}

@inproceedings{DisenIB,
  title={Disentangled information bottleneck},
  author={Pan, Ziqi and Niu, Li and Zhang, Jianfu and Zhang, Liqing},
  booktitle={Proceedings of the AAAI Conference on Artificial Intelligence},
  volume={35},
  pages={9285--9293},
  year={2021}
}

@inproceedings{DGIB,
  title={Dynamic graph information bottleneck},
  author={Yuan, Haonan and Sun, Qingyun and Fu, Xingcheng and Ji, Cheng and Li, Jianxin},
  booktitle={Proceedings of the ACM Web Conference 2024},
  pages={469--480},
  year={2024}
}

@article{GCE,
  title={Generalized cross entropy loss for training deep neural networks with noisy labels},
  author={Zhang, Zhilu and Sabuncu, Mert},
  journal={Advances in neural information processing systems},
  volume={31},
  year={2018}
}

@inproceedings{sample-selection,
  title={Adaptive sample selection for robust learning under label noise},
  author={Patel, Deep and Sastry, PS},
  booktitle={Proceedings of the IEEE/CVF Winter Conference on Applications of Computer Vision},
  pages={3932--3942},
  year={2023}
}

@inproceedings{promix,
  title={ProMix: combating label noise via maximizing clean sample utility},
  author={Xiao, Ruixuan and Dong, Yiwen and Wang, Haobo and Feng, Lei and Wu, Runze and Chen, Gang and Zhao, Junbo},
  booktitle={Proceedings of the Thirty-Second International Joint Conference on Artificial Intelligence},
  pages={4442--4450},
  year={2023}
}

@article{noisyGL,
  title={NoisyGL: A Comprehensive Benchmark for Graph Neural Networks under Label Noise},
  author={Wang, Zhonghao and Sun, Danyu and Zhou, Sheng and Wang, Haobo and Fan, Jiapei and Huang, Longtao and Bu, Jiajun},
  journal={arXiv preprint arXiv:2406.04299},
  year={2024}
}

@inproceedings{arpit2017a,
  title={A closer look at memorization in deep networks},
  author={Arpit, Devansh and Jastrzebski, Stanis{\l}aw and Ballas, Nicolas and Krueger, David and Bengio, Emmanuel and Kanwal, Maxinder S and Maharaj, Tegan and Fischer, Asja and Courville, Aaron and Bengio, Yoshua and others},
  booktitle={International conference on machine learning},
  pages={233--242},
  year={2017},
  organization={PMLR}
}

@article{song2019does,
  title={How does early stopping help generalization against label noise?},
  author={Song, Hwanjun and Kim, Minseok and Park, Dongmin and Lee, Jae-Gil},
  journal={arXiv preprint arXiv:1911.08059},
  year={2019}
}

@article{sugiyama2012density,
  title={Density-ratio matching under the bregman divergence: a unified framework of density-ratio estimation},
  author={Sugiyama, Masashi and Suzuki, Taiji and Kanamori, Takafumi},
  journal={Annals of the Institute of Statistical Mathematics},
  volume={64},
  pages={1009--1044},
  year={2012},
  publisher={Springer}
}

@inproceedings{fgsm,
  author       = {Ian J. Goodfellow and
                  Jonathon Shlens and
                  Christian Szegedy},
  editor       = {Yoshua Bengio and
                  Yann LeCun},
  title        = {Explaining and Harnessing Adversarial Examples},
  booktitle    = {3rd International Conference on Learning Representations, {ICLR} 2015,
                  San Diego, CA, USA, May 7-9, 2015, Conference Track Proceedings},
  year         = {2015}
}

@article{fano,
  title={Class notes for course 6.574: Transmission of information},
  author={Fano, RM},
  journal={Lecture Notes},
  year={1952}
}

@inproceedings{animal-10N,
  title={Selfie: Refurbishing unclean samples for robust deep learning},
  author={Song, Hwanjun and Kim, Minseok and Lee, Jae-Gil},
  booktitle={International conference on machine learning},
  pages={5907--5915},
  year={2019},
  organization={PMLR}
}

@article{cifar,
  title={Learning multiple layers of features from tiny images},
  author={Krizhevsky, Alex and Hinton, Geoffrey and others},
  year={2009},
  publisher={Toronto, ON, Canada}
}

@article{G_data,
  title={Collective classification in network data},
  author={Sen, Prithviraj and Namata, Galileo and Bilgic, Mustafa and Getoor, Lise and Galligher, Brian and Eliassi-Rad, Tina},
  journal={AI magazine},
  volume={29},
  number={3},
  pages={93--93},
  year={2008}
}

@inproceedings{dblp,
  title={Tri-party deep network representation},
  author={Pan, Shirui and Wu, Jia and Zhu, Xingquan and Zhang, Chengqi and Wang, Yang},
  booktitle={International Joint Conference on Artificial Intelligence 2016},
  pages={1895--1901},
  year={2016},
  organization={Association for the Advancement of Artificial Intelligence (AAAI)}
}

@article{fmix,
  title={Fmix: Enhancing mixed sample data augmentation},
  author={Harris, Ethan and Marcu, Antonia and Painter, Matthew and Niranjan, Mahesan and Pr{\"u}gel-Bennett, Adam and Hare, Jonathon},
  journal={arXiv preprint arXiv:2002.12047},
  year={2020}
}

@article{menonlong,
  title={Long-tail learning via logit adjustment},
  author={Menon, Aditya Krishna and Jayasumana, Sadeep and Rawat, Ankit Singh and Jain, Himanshu and Veit, Andreas and Kumar, Sanjiv},
  journal={arXiv preprint arXiv:2007.07314},
  year={2020}
}

@inproceedings{SIB,
  title={Structured IB: Improving Information Bottleneck with Structured Feature Learning},
  author={Yang, Hanzhe and Wu, Youlong and Wen, Dingzhu and Zhou, Yong and Shi, Yuanming},
  booktitle={Proceedings of the AAAI Conference on Artificial Intelligence},
  volume={39},
  pages={21922--21928},
  year={2025}
}

@article{DT-JSCC,
  title={Robust information bottleneck for task-oriented communication with digital modulation},
  author={Xie, Songjie and Ma, Shuai and Ding, Ming and Shi, Yuanming and Tang, Mingjian and Wu, Youlong},
  journal={IEEE Journal on Selected Areas in Communications},
  volume={41},
  number={8},
  pages={2577--2591},
  year={2023},
  publisher={IEEE}
}

@inproceedings{JoCoR,
  title={Combating noisy labels by agreement: A joint training method with co-regularization},
  author={Wei, Hongxin and Feng, Lei and Chen, Xiangyu and An, Bo},
  booktitle={Proceedings of the IEEE/CVF conference on computer vision and pattern recognition},
  pages={13726--13735},
  year={2020}
}

@inproceedings{SCE,
  title={Symmetric cross entropy for robust learning with noisy labels},
  author={Wang, Yisen and Ma, Xingjun and Chen, Zaiyi and Luo, Yuan and Yi, Jinfeng and Bailey, James},
  booktitle={Proceedings of the IEEE/CVF international conference on computer vision},
  pages={322--330},
  year={2019}
}

@article{IS-GIB,
  title={Individual and structural graph information bottlenecks for out-of-distribution generalization},
  author={Yang, Ling and Zheng, Jiayi and Wang, Heyuan and Liu, Zhongyi and Huang, Zhilin and Hong, Shenda and Zhang, Wentao and Cui, Bin},
  journal={IEEE Transactions on Knowledge and Data Engineering},
  volume={36},
  number={2},
  pages={682--693},
  year={2023},
  publisher={IEEE}
}

@inproceedings{CurvGIB,
  title={Discrete curvature graph information bottleneck},
  author={Fu, Xingcheng and Wang, Jian and Gao, Yisen and Sun, Qingyun and Yuan, Haonan and Li, Jianxin and Li, Xianxian},
  booktitle={Proceedings of the AAAI Conference on Artificial Intelligence},
  volume={39},
  pages={16666--16673},
  year={2025}
}

@inproceedings{RNCGLN,
  title={Robust node classification on graph data with graph and label noise},
  author={Zhu, Yonghua and Feng, Lei and Deng, Zhenyun and Chen, Yang and Amor, Robert and Witbrock, Michael},
  booktitle={Proceedings of the AAAI conference on artificial intelligence},
  volume={38},
  pages={17220--17227},
  year={2024}
}

@inproceedings{cgnn,
  title={Learning on graphs under label noise},
  author={Yuan, Jingyang and Luo, Xiao and Qin, Yifang and Zhao, Yusheng and Ju, Wei and Zhang, Ming},
  booktitle={ICASSP 2023-2023 IEEE International Conference on Acoustics, Speech and Signal Processing (ICASSP)},
  pages={1--5},
  year={2023},
  organization={IEEE}
}

@article{NIB,
  title={Nonlinear information bottleneck},
  author={Kolchinsky, Artemy and Tracey, Brendan D and Wolpert, David H},
  journal={Entropy},
  volume={21},
  number={12},
  pages={1181},
  year={2019},
  publisher={MDPI}
}

@inproceedings{mixup,
  title={mixup: Beyond Empirical Risk Minimization},
  author={Zhang, Hongyi and Cisse, Moustapha and Dauphin, Yann N and Lopez-Paz, David},
  booktitle={International Conference on Learning Representations},
  year={2018}
}

@article{pensia2020extracting,
  title={Extracting robust and accurate features via a robust information bottleneck},
  author={Pensia, Ankit and Jog, Varun and Loh, Po-Ling},
  journal={IEEE Journal on Selected Areas in Information Theory},
  volume={1},
  number={1},
  pages={131--144},
  year={2020},
  publisher={IEEE}
}

@article{li2025simplified,
  title={Simplified PCNet with robustness},
  author={Li, Bingheng and Xie, Xuanting and Lei, Haoxiang and Fang, Ruiyi and Kang, Zhao},
  journal={Neural Networks},
  volume={184},
  pages={107099},
  year={2025},
  publisher={Elsevier}
}

\clearpage
\newpage
\appendix
\counterwithin{table}{section}
\counterwithin{figure}{section}
\counterwithin{equation}{section}
\counterwithin{equation}{section}
\counterwithin{algorithm}{section}

\section{Code Source}
\label{appendix_code}
The code for \method is publicly available at: https://github.com/RingBDStack/LaT-IB.

\section{Algorithms and Complexity Analysis}
\label{appendix_alg}
\subsection{The Overall Process of \method}
The overall training pipeline of \method is outlined in Algorithm~\ref{alg_overall}.

\begin{algorithm}[ht]
\caption{The Overall Process of \method}
\label{alg_overall}
\textbf{Input}: Input data $\mathcal{D}$ with label $Y$, epoch number of each period $E_{Warmup}, E_{Injection}, E_{Robust}$, overall epoch number $E$ \\
\textbf{Output}: Predicted label $\hat Y$

\begin{algorithmic}[1]
\STATE Parameter initialization
\FOR{epoch $i = 1, \dots, E$}
    \STATE Encode inputs to obtain $(\mu_S, \sigma_S), (\mu_T, \sigma_T)$
    \STATE Decode the reparameterized embedding to generate predictions $(\hat{y}_S, \hat{y}_T)$
    \IF{$i \leq E_{Warmup}$}
        \STATE  Optimize the $\text{encoder}_S$ with Eq. (13)
    \ENDIF
    \IF{$E_{Warmup} < i \leq E_{Injection}$}
        \STATE  Obtain Clean/Noise/Uncertain Set using InfoJS selector
        \STATE  Optimize the \method model with Eq. (15) and (16)
    \ENDIF
    \IF{$i > E_{Injection}$}
        \STATE  Optimize the \method model with Eq. (18)
        \STATE  Optimize the discriminator with Eq. (19)
    \ENDIF
\ENDFOR
\end{algorithmic}
\end{algorithm}

\subsection{InfoJS Selector Algorithms Details}
Algorithm \ref{alg_sel} implements a sample selection scheme based on MI and JS divergence.

\begin{algorithm}[ht]
\caption{InfoJS Selector}
\label{alg_sel}
\textbf{Input}: Trained model $f_\theta$, data $(\mathcal{D},Y)$, selection ratio $\delta$ \\
\textbf{Output}: Binary selection mask: $\text{mask}_S$, $\text{mask}_T$ 

\begin{algorithmic}[1]
\STATE Run $f_\theta$ to obtain $(\mu_S, \sigma_S), (\mu_T, \sigma_T)$ and predictions $(\hat{y}_S, \hat{y}_T)$
\STATE Compute $I(S;Y)$: $\ell_S \gets \mathcal L_{CE}(\hat{y}_S, y)$ \COMMENT{Eq. (10)}
\STATE Compute JS divergence: $JS \gets D_{JS}(\mu_S, \sigma_S, \mu_T, \sigma_T)$
\STATE Initialize $\text{mask}_S$, $\text{mask}_T$ 
\FOR{each class $j = 1, \dots, C$}
    \STATE $\mathcal{I}_j \gets$ indices of class-$j$ samples in $\mathcal{M}$
    \STATE $k_j \gets \max(1, \min(\lceil \delta \cdot |\mathcal{I}_j| \rceil, |\mathcal{D}|/C))$
    \STATE Select top-$k_j$ and bottom-$k_j$ samples by $\ell_S$ and update $\text{mask}_S$, $\text{mask}_T$ %
    \STATE Select top-$k_j$ and bottom-$k_j$ samples by $JS$ and update $\text{mask}_S$, $\text{mask}_T$ %
\ENDFOR
\RETURN $\text{mask}_S$, $\text{mask}_T$ \COMMENT{$\text{mask}_S$ for clean set; $\text{mask}_T$ for noise set}
\end{algorithmic}
\end{algorithm}

\subsection{$\mathcal{L}_{ConCE}$ Algorithms Details}
Proposition 4.1 states:
\begin{equation}
    -I(Y;S,T) \leq -\max(I(Y;S), I(Y;T)).
\end{equation}

Further reformulated as a cross-entropy loss, this provides an implementation method for $I(Y;S,T)$:
\begin{equation}
\label{alg_from}
\begin{aligned}
    -I(Y;S,T) \leq \sum_i\min(\mathcal L(\hat y_{S,i}, y_i), \mathcal L(\hat y_{T,i}, y_i)).
\end{aligned}
\end{equation}

In the loss function design, we tailor the loss for two scenarios: image classification (many samples) and node classification (few samples). 

\subsubsection{Graph Tasks.}
First, we present the loss function design for the node classification task under the few-sample condition, as it is closest to Equation \eqref{alg_from}. Since the $\min$ function is not continuous and may hinder model learning, we instead use the smooth approximation:
\begin{equation}
\label{eq_min}
    f(a,b)=\frac{1}{\beta} \cdot \log \left( \exp(-\beta \cdot a) + \exp(-\beta \cdot b) \right).
\end{equation}
Notably, as $b\to +\infty$, this expression becomes equivalent to $\min(a,b)$.

\begin{algorithm}[ht]
\caption{$\mathcal L_{ConCE}$ for node classification}
\label{alg1}
\textbf{Input}: Predictions $prob_1$, $prob_2$; Labels $y$; Threshold $\beta$ \\
\textbf{Output}: Final loss value $\mathcal L_{ConCE}$

\begin{algorithmic}[1]
\STATE $a \gets \mathcal{L}_{CE}(prob_1, y)$ \COMMENT{Per-sample $\mathcal L_{CE}$}
\STATE $b \gets \mathcal{L}_{CE}(prob_2, y)$
\STATE $L \gets \frac{1}{\beta} \cdot \log \left( \exp(-\beta \cdot a) + \exp(-\beta \cdot b) \right)$ \COMMENT{Eq. \eqref{eq_min}}
\RETURN $\text{Mean}(L)$ 
\end{algorithmic}
\end{algorithm}

\subsubsection{Image Tasks.}
We then directly apply Algorithm~\ref{alg1} to the image classification task and observe that it causes lazy training, where the model tends to produce identical outputs. We hypothesize that this behavior arises because: as the number of training samples increases, the gradient contributions from noisy samples become diluted. Meanwhile, Algorithm~\ref{alg1} imposes a strict separation, causing each encoder to receive only a limited portion of the input data. As a result, when the predictions from the two encoders become overly similar, the model lacks sufficient supervision signals, potentially leading to training collapse.

To address this issue, we train both encoders on the samples where the two prediction heads produce consistent outputs, rather than assigning them to a single encoder. This significantly increases the amount of usable information for each encoder. Moreover, we relax the condition in Equation~\ref{alg_from} to:
\begin{equation}
\begin{aligned}
    -I(Y;S,T) \leq \sum_i\min(\mathcal L(\hat y_{S,i}, y_i), \mathcal L(\hat y_{T,i}, y_i)) \\
     \leq \sum_i \max(\mathcal L(\hat y_{S,i}, y_i), \mathcal L(\hat y_{T,i}, y_i))
\end{aligned}
\end{equation}

The final loss function is shown in Algorithm~\ref{alg2}.

\begin{algorithm}[ht]
\caption{$\mathcal L_{ConCE}$ for image classification}
\label{alg2}
\textbf{Input}: Predictions $prob_1$, $prob_2$; Labels $y$; Threshold $t$ \\
\textbf{Output}: Final loss value $\mathcal L_{ConCE}$

\begin{algorithmic}[1]
\STATE Compute $a \gets \mathcal{L}_{CE}(prob_1.\text{copy()}, y)$ 
\STATE Compute $b \gets \mathcal{L}_{CE}(prob_2.\text{copy()}, y)$
\STATE $pred_1 \gets \arg\max(prob_1.\text{copy()})$ \COMMENT{Per-sample $\mathcal L_{CE}$}
\STATE $pred_2 \gets \arg\max(prob_2.\text{copy()})$
\STATE $mask_a \gets (a < b)$
\STATE $mask_b \gets (a > b) \lor (pred_1 = pred_2)$ \COMMENT{The second part prevents $T$ from learning nothing.}
\IF{$\sum (pred_1 = pred2) /|y| > t$}
    \STATE $mask_a \gets (a < b) \lor (pred_1 = pred2)$
    \STATE $mask_b \gets (a > b)$
\ENDIF
\STATE Initialize $y_1 \gets y$, $y_2 \gets y$
\STATE Replace $y_1[mask_a] \gets pred_1[mask_a]$ \COMMENT{Focus on clean samples}
\STATE Replace $y_2[mask_b] \gets pred_2[mask_b]$ \COMMENT{Focus on noise samples}
\STATE $loss_1 \gets \mathcal{L}_{CE}(prob_1, y_1)$
\STATE $loss_2 \gets \mathcal{L}_{CE}(prob_2, y_2)$
\RETURN $(loss_1 + loss_2)/2$
\end{algorithmic}
\end{algorithm}

\subsection{Time Consumption}
We adopt the constant definitions from Section 3.
For Algorithm~\ref{alg_sel}, the $\mathcal L_{CE}$ has a time complexity of $\mathcal{O}(NC)$. The JS divergence, computed between all sample pairs, requires $\mathcal{O}(N^2k)$ operations, where $k$ is the dimensionality of the Gaussian embeddings. Assuming class-balanced data, the per-class selection of top-$k$ and bottom-$k$ samples involves sorting subsets of size approximately $N/C$, giving a total sorting complexity of $\mathcal{O}(N \log(N/C))$. Overall, the dominant term is $\mathcal{O}(N^2k)$, since typically $C \ll N$, making the pairwise divergence computation the main computational bottleneck.

Algorithms \ref{alg1} and \ref{alg2} perform a finite number of cross-entropy loss computations. Therefore, their overall time complexity is equivalent to that of the cross-entropy loss, which is $\mathcal{O}(NC)$.

To further assess the efficiency of different methods, we report the training time consumption of several top-performing approaches.

\subsubsection{Image Classification.}
We selected three baselines including VIB, (ELR+)+VIB and (Promix\footnote{For the Promix method, strictly following the original experimental setup results in a runtime exceeding 24 hours, leading to out-of-time (OOT) termination. To address this, we reduced the number of training epochs, with only a minor performance drop, to ensure timely completion.})+VIB as well as compared their time consumption under the condition of achieving a performance like Table 2 on CIFAR-10N dataset, as shown in the Table~\ref{tab:time_consumption_IC}. The results demonstrate that although our model incurs higher computation than VIB, it achieves better performance within fewer epochs, and overall outperforms two-stage models in terms of efficiency.

\begin{table}[ht]
\centering
\hspace{-0.17cm}
\resizebox{0.48\textwidth}{!}{
\begin{tabular}{lccc|c}
\toprule
\textbf{Method} & VIB & (ELR+)+VIB & (Promix)+VIB &  \method \\
\midrule
\textbf{Time (h)} & 1.6 & 4.9+1.6 & 12.9+1.6 & 5.18\\ 
\textbf{Epoch} & 100 & 200+100 & 300+100 & 200\\
\textbf{Per Epoch (min)} & 0.94 & 1.30 & 2.18 & 1.55 \\
\bottomrule
\end{tabular}
}
\caption{Comparison of training time consumption (IC)}
\label{tab:time_consumption_IC}
\end{table}

\subsubsection{Node Classification.}
We selected GIB, GIB ($\mathcal{L}_{GCE}$) and RNCGLN+GIB as baselines. Table \ref{tab:time_consumption_NC} presents the results on the DBLP dataset, supporting similar conclusions as in the image classification task. Notably, although RNCGLN achieves strong performance, it incurs significant time overhead, as also reported in NoisyGL~\cite{noisyGL}.

\begin{table}[ht]
\centering
\hspace{-0.17cm}
\resizebox{0.48\textwidth}{!}{
\begin{tabular}{lccc|c}
\toprule
\textbf{Method} & GIB & GIB ($\mathcal{L}_{GCE}$) & RNCGLN+GIB &  \method \\
\midrule
\textbf{Time (s)} & 37.8 & 37.8 & 3445.4+37.8 & 55.9\\ 
\textbf{Epoch} & 100 & 100 & 500+100 & 100\\
\textbf{Per Epoch (s)} & 0.38 & 0.38 & 5.80 & 0.56 \\
\bottomrule
\end{tabular}
}
\caption{Comparison of training time consumption (NC)}
\label{tab:time_consumption_NC}
\end{table}

We evaluate the time cost at each period of the pipeline. The results are shown in Figure~\ref{fig:time_cinsumption}. The results indicate that for image tasks with a \textbf{large} number of samples, Knowledge Injection consumes the most time, which aligns with our theoretical time complexity analysis. In contrast, for graph tasks with \textbf{fewer} samples, the time cost of Knowledge Injection is slightly lower than that of Robust Training. We speculate that this is due to lower actual time complexity than the theoretical value, possibly resulting from low-level computational optimizations, and the relatively small sample size helps offset part of the time overhead.

\begin{figure}[ht]
\centering
\begin{subfigure}[b]{0.46\linewidth}
    \includegraphics[width=\linewidth]{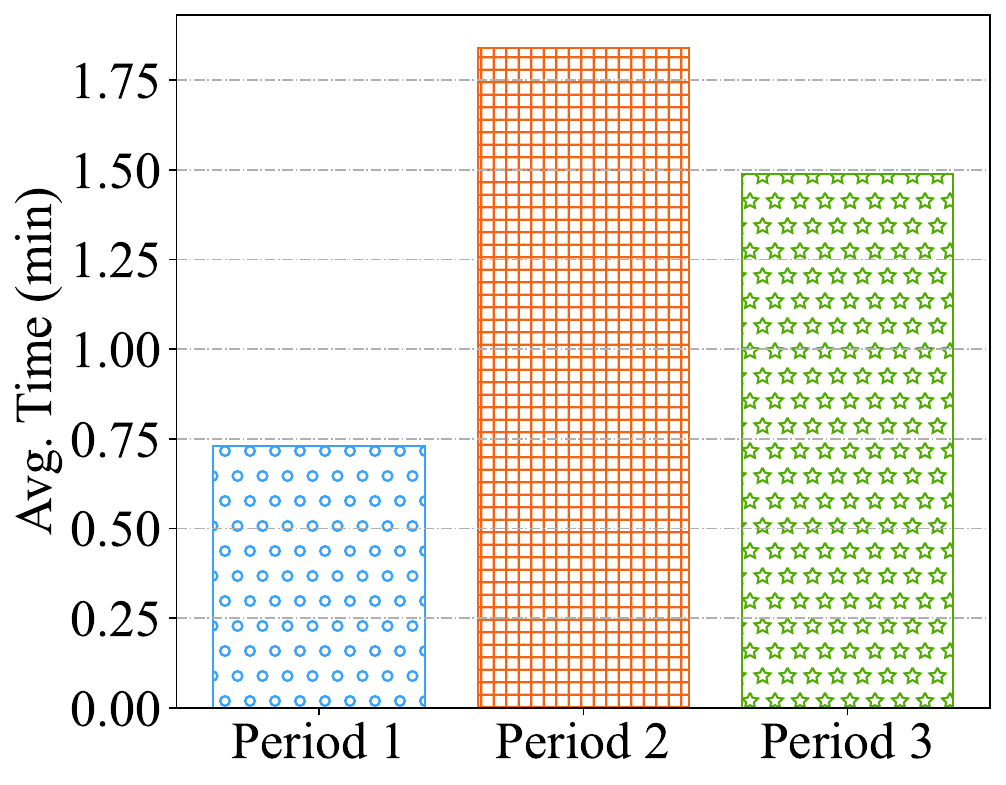}
    \caption{CIFAR-10N (worst)}
    \label{fig:cifar_time}
\end{subfigure}
\begin{subfigure}[b]{0.45\linewidth}
    \includegraphics[width=\linewidth]{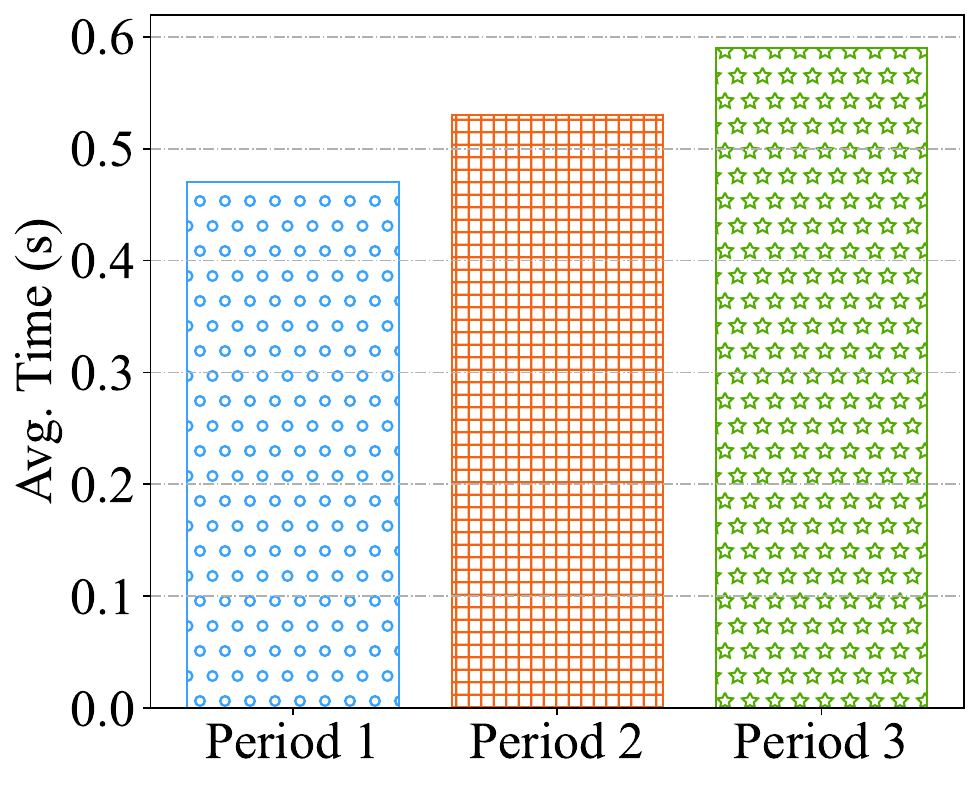}
    \caption{DBLP}
    \label{fig:dblp_time}
\end{subfigure}
\caption{Time Consumption Analysis}
\label{fig:time_cinsumption}
\end{figure}

\section{Proofs}
\label{appendix_proof}
In this section, we present the relevant proofs from the paper. We first restate the statements to be proved, followed by detailed proofs.

\subsection{The proof of Theorem 1.1}
\label{appendix_theorem_proof}
\begin{theorem}[Cumulative Degradation]
    In the two-stage approach, $f_1$ is used to modify the labels $Y'=f_1(\mathcal{D})$, and $f_2$ is responsible for extracting valid information from $\mathcal{D}$ while approximating the prediction result to $f_1(\mathcal{D})$ . For one-stage model $g(\mathcal{D})$, it extracts the relevant information while removing noise. If the denoising abilities of $f_1$ and $g$ are the same, the following inequality holds:
    \begin{equation}
        P(f_2(\mathcal{D})\neq g(\mathcal{D})) \geq \frac{H(Y'|\mathcal{D})-1}{\log(|\mathcal{Y}| - 1)},
    \end{equation}
    where $\mathcal{Y}$ denotes the support of $Y$, and $|\mathcal{Y}|$ denotes the number of elements in  $\mathcal{Y}$. The two models perform identically iff $f_2$ achieves the error lower bound and $H(Y'|\mathcal{D}) = 0$.
\end{theorem}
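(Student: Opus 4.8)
The plan is to recognize the stated bound as an instance of Fano's inequality applied to $f_2$'s recovery of the two-stage target, with the data-processing inequality supplying the conditional-entropy term. First I would use the equal-denoising hypothesis to identify the one-stage output with the two-stage target: since $f_1$ and $g$ denoise equally well, $g(\mathcal{D})$ is exactly the label that $f_2$ is supposed to reproduce, so we may take $g(\mathcal{D}) = f_1(\mathcal{D}) = Y'$ and therefore $P(f_2(\mathcal{D}) \neq g(\mathcal{D})) = P(f_2(\mathcal{D}) \neq Y') =: P_e$. This reframes the claim as a lower bound on the error with which the second-stage learner can reconstruct the denoised labels produced in the first stage.

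Next I would set up the Markov structure $Y' \to \mathcal{D} \to f_2(\mathcal{D})$: because $f_2$'s prediction depends on the data only through $\mathcal{D}$ (possibly with independent internal randomness), we have $f_2(\mathcal{D}) \perp Y' \mid \mathcal{D}$. Fano's inequality for the $|\mathcal{Y}|$-ary variable $Y'$ estimated by $f_2(\mathcal{D})$ then yields $H(P_e) + P_e \log(|\mathcal{Y}| - 1) \geq H(Y' \mid f_2(\mathcal{D}))$. Applying the data-processing inequality along this chain gives $I(Y'; \mathcal{D}) \geq I(Y'; f_2(\mathcal{D}))$, hence $H(Y' \mid f_2(\mathcal{D})) \geq H(Y' \mid \mathcal{D})$, so that $H(P_e) + P_e \log(|\mathcal{Y}| - 1) \geq H(Y' \mid \mathcal{D})$. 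Bounding the binary entropy by $H(P_e) \leq 1$ and rearranging produces exactly $P_e \geq (H(Y' \mid \mathcal{D}) - 1)/\log(|\mathcal{Y}| - 1)$.

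For the equality characterization I would argue both directions from $P_e = 0$, which is precisely the statement that the two pipelines agree almost surely. Substituting $P_e = 0$ into the chain of inequalities forces $H(Y' \mid \mathcal{D}) = 0$, so identical performance requires both that $f_2$ attain its error floor and that the residual denoising entropy vanish; conversely, $H(Y' \mid \mathcal{D}) = 0$ makes $Y'$ a deterministic function of $\mathcal{D}$, so the optimal $f_2$ reproduces $Y' = g(\mathcal{D})$ with zero error. This gives the asserted ``iff''.

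I expect the main obstacle to be the bridging step rather than the Fano mechanics: one must justify identifying $g(\mathcal{D})$ with $Y'$ from the informal ``same denoising ability'' assumption, and---more importantly---ensure the residual entropy $H(Y' \mid \mathcal{D})$ is genuinely positive so the bound is non-vacuous. If $f_1$ were a deterministic map of the full $\mathcal{D}$, then $H(Y' \mid \mathcal{D}) = 0$ and the inequality is trivially satisfied; the content of the theorem therefore rests on $f_2$ accessing the data only through the lossy IB representation (so the effective conditioning is a compressed view of $\mathcal{D}$), or on $f_1$'s denoising being stochastic. This is precisely the mechanism by which the \emph{extended information path} of the two-stage pipeline is forced to incur strictly positive error relative to the unified model $g$.
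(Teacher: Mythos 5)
Your proposal is correct and takes essentially the same route as the paper's proof: both identify $g(\mathcal{D})=f_1(\mathcal{D})=Y'$ via the equal-denoising assumption, apply Fano's inequality with $H_b(e)\leq 1$, and characterize equality through $P_e=0 \Leftrightarrow H(Y'\mid\mathcal{D})=0$ together with $f_2$ attaining the error floor. Your explicit split into Fano on $H(Y'\mid f_2(\mathcal{D}))$ plus the data-processing step $H(Y'\mid f_2(\mathcal{D}))\geq H(Y'\mid\mathcal{D})$ is just the derivation of the conditional form $H(Y'\mid\mathcal{D})\leq H_b(e)+P(e)\log(|\mathcal{Y}|-1)$ that the paper invokes directly, so the arguments coincide; your closing remark that the bound is vacuous when $f_1$ is a deterministic function of the full $\mathcal{D}$ is a fair observation about the theorem's scope, but it is commentary rather than a deviation in the proof.
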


\begin{proof}
    Without loss of generality, we illustrate this using the image classification task, where $\mathcal{D}=X$.

    We begin by introducing \textbf{Fano's inequality}~\cite{fano}: Let the discrete random variables $X$ and $Y$ denote the input and output messages, respectively, with joint distribution $P(x, y)$. Let $e$ represent the event of an error, i.e., $X \neq \tilde{X}$, where $\tilde{X} = f(Y)$ is an estimate of $X$. Then \textbf{Fano's inequality} states:
    \begin{equation}
        H(X | Y) \leq H_b(e) + P(e) \log(|\mathcal{X}| - 1),
    \end{equation}
    where $\mathcal{X}$ denotes the support set of the random variable $X$, and $|\mathcal{X}|$ is its cardinality (i.e., the number of elements in $\mathcal{X}$). Here, $H(X | Y) = -\sum_{i,j} P(x_i, y_j) \log P(x_i | y_j)$ is the conditional entropy, $P(e) = P(X \neq \hat{X})$ is the probability of a communication error, and $H_b(e) = -P(e) \log P(e) - (1 - P(e)) \log (1 - P(e))$ is the binary entropy.

    \textbf{Part 1:}
    Consider the second stage $f_2$ of the two-stage model, whose input $Y'$ serves as a reference for the final model output $f_2(x)$. According to the inequality above, we have:
    \begin{equation}
    \label{usage}
        H(Y' | X) \leq H_b(e) + P(e) \log(|\mathcal{Y}| - 1),
    \end{equation}
    where $P(e) := P(f_2(x) \neq f_1(x))$. Since the binary entropy has an upper bound of \textnormal{1}, the lower bound of the error probability is:
    \begin{equation}
    \begin{aligned}
        H(Y' | X) &\leq H_b(e) + P(e) \log(|\mathcal{Y}| - 1) \\
        &\leq 1 + P(e) \log(|\mathcal{Y}| - 1),
    \end{aligned}
    \end{equation}
    Given the assumption that $f_1$ and $g$ have the same denoising capability, we can, without loss of generality, assume $f_1(x) = g(x)$, thus:
    \begin{equation}
        P(f_2(x) \neq g(x)) = P(e) \geq \frac{H(Y' | X) - 1}{\log(|\mathcal{Y}| - 1)}.
    \end{equation}

    \textbf{Part 2:}
    When the two models behave identically, i.e., $P(f_2(x) \neq g(x)) = 0$, from inequality \textnormal{\ref{usage}} we know that $H(Y' | X) \leq 0$. Since entropy is non-negative, it follows that $H(Y' | X) = 0$. At this point, $f_2$ has optimized its classification error to the theoretical lower bound.

    When $H(Y' | X) = 0$, it is easy to see that $P(f_2(x) \neq g(x)) = 0$ satisfies inequality \textnormal{\ref{usage}}. Since $P(f_2(x) \neq g(x)) \geq 0$, and \textnormal{0} is the lower bound of the error rate, if $f_2$ reaches this bound, the performance of the two models is exactly the same.
\end{proof}

\subsection{The proof of Lemma 4.1 and 4.2}
\label{appendix_lemma_proof}
\begin{lemma}[Nuisance Invariance]
    Taking the part of $\mathcal{D}$ that does not contribute to $Y$ as $\mathcal{D}_n$ ($D_n$ is independent of $Y$), and considering the Markov chain $(Y,\mathcal{D}_n) \to \mathcal{D} \to (S,T)$, the following inequality holds:
    \begin{equation}
        I(\mathcal{D}_n;S,T)\leq -I(Y;S,T)+I(\mathcal{D};S,T).
    \end{equation} 
\end{lemma}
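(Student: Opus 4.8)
The plan is to reduce the claim to two standard information-theoretic facts: the data processing inequality applied along the given Markov chain, and the symmetry of the interaction information (co-information) combined with the independence hypothesis $I(\mathcal{D}_n; Y) = 0$. Throughout I treat the pair $(S,T)$ as a single variable, so that $I(\cdot\,; S,T)$ always denotes mutual information with the joint representation.

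First I would exploit the Markov chain $(Y, \mathcal{D}_n) \to \mathcal{D} \to (S,T)$, which says that $(S,T)$ is conditionally independent of $(Y,\mathcal{D}_n)$ given $\mathcal{D}$. The data processing inequality then yields
\[
I(Y, \mathcal{D}_n; S, T) \leq I(\mathcal{D}; S, T).
\]
Next I would expand the left-hand side with the chain rule for mutual information,
\[
I(Y, \mathcal{D}_n; S, T) = I(Y; S, T) + I(\mathcal{D}_n; S, T \mid Y),
\]
so that the target bound will follow once I establish the auxiliary inequality $I(\mathcal{D}_n; S, T \mid Y) \geq I(\mathcal{D}_n; S, T)$.

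This auxiliary inequality is the crucial step, and it is precisely where the independence assumption is used. Writing the two chain-rule expansions of $I\big(\mathcal{D}_n; (S,T), Y\big)$ and equating them gives the symmetry identity
\[
I(\mathcal{D}_n; S, T \mid Y) - I(\mathcal{D}_n; S, T) = I(\mathcal{D}_n; Y \mid S, T) - I(\mathcal{D}_n; Y).
\]
By hypothesis $\mathcal{D}_n$ is independent of $Y$, so $I(\mathcal{D}_n; Y) = 0$; and since conditional mutual information is nonnegative, $I(\mathcal{D}_n; Y \mid S, T) \geq 0$. Hence the right-hand side is nonnegative, which delivers $I(\mathcal{D}_n; S, T \mid Y) \geq I(\mathcal{D}_n; S, T)$.

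Chaining the three displays then closes the argument:
\[
\begin{aligned}
I(\mathcal{D}_n; S, T) + I(Y; S, T) &\leq I(\mathcal{D}_n; S, T \mid Y) + I(Y; S, T) \\
&= I(Y, \mathcal{D}_n; S, T) \leq I(\mathcal{D}; S, T),
\end{aligned}
\]
and rearranging gives exactly $I(\mathcal{D}_n; S, T) \leq -I(Y; S, T) + I(\mathcal{D}; S, T)$. The only genuine obstacle is the symmetry step: interaction information carries no fixed sign in general, so the proof must lean on the stated independence $\mathcal{D}_n$ and $Y$ to force it nonnegative. Everything else is the data processing inequality and routine bookkeeping with the chain rule.
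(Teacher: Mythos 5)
Your proof is correct, and it reaches the bound by a genuinely different (if closely related) decomposition than the paper's. The paper never invokes the data processing inequality as a black box: it extracts from the Markov chain the two vanishing conditional terms $I(S,T;\mathcal{D}_n\mid\mathcal{D})=0$ and $I(S,T;Y\mid\mathcal{D},\mathcal{D}_n)=0$, then conditions on $\mathcal{D}_n$ first, expanding $I(S,T;\mathcal{D},\mathcal{D}_n)$ and $I(S,T;Y,\mathcal{D}\mid\mathcal{D}_n)$ each in two ways to arrive at $I(S,T;\mathcal{D})\geq I(S,T;\mathcal{D}_n)+I(S,T;Y\mid\mathcal{D}_n)$, and finally replaces $I(S,T;Y\mid\mathcal{D}_n)$ by $I(S,T;Y)$ using the independence of $Y$ and $\mathcal{D}_n$. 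You instead compress the Markov step into the single DPI $I(Y,\mathcal{D}_n;S,T)\leq I(\mathcal{D};S,T)$, split off $Y$ first via the chain rule, and handle the conditional-versus-unconditional swap through the interaction-information identity $I(\mathcal{D}_n;S,T\mid Y)-I(\mathcal{D}_n;S,T)=I(\mathcal{D}_n;Y\mid S,T)-I(\mathcal{D}_n;Y)$. The routes are mirror images (you condition on $Y$, the paper on $\mathcal{D}_n$), but your bookkeeping is actually tighter at the one delicate point: the paper asserts the \emph{equality} $I(S,T;Y\mid\mathcal{D}_n)=I(S,T;Y)$, which under the stated assumption only holds as ``$\geq$'' --- by the same symmetry identity, $I(Y;S,T\mid\mathcal{D}_n)=I(Y;S,T)+I(Y;\mathcal{D}_n\mid S,T)$ when $I(Y;\mathcal{D}_n)=0$, and the conditional term $I(Y;\mathcal{D}_n\mid S,T)$ need not vanish (explaining-away through the representation). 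The inequality runs in the direction the lemma needs, so the paper's conclusion is unaffected, but your version makes explicit both where independence enters and why only an inequality, not an identity, is available there; the paper's version, in exchange, exhibits the exact residual terms ($I(S,T;\mathcal{D}\mid Y,\mathcal{D}_n)$ etc.) that quantify the slack in the bound.
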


\begin{proof}
    We proof this lemma in three steps.

    \textbf{Step 1:}
    According to the property of the Markov chain, we have:
    \begin{equation}
        I(S, T; \mathcal{D}_n, Y | \mathcal{D})=0.
    \end{equation}
    And because: 
    \begin{equation}
    \begin{aligned}
        &I(S, T; \mathcal{D}_n, Y | \mathcal{D})\\
        =&\underbrace{I(S, T; \mathcal{D}_n | \mathcal{D})}_{\geq 0} + \underbrace{I(S, T; Y |  \mathcal{D}_n, \mathcal{D})}_{\geq 0}\\
        =&0.
    \end{aligned}
    \end{equation}
    So we have:
    \begin{equation}
        I(S, T; \mathcal{D}_n | \mathcal{D}) =I(S, T; Y |  \mathcal{D}_n, \mathcal{D}) = 0
    \end{equation}
    
    By expanding $I(S, T; \mathcal{D}, \mathcal{D}_n)$, we obtain:
    \begin{equation}
    \label{step1_1}
        \begin{aligned}
            I(S, T; \mathcal{D}, \mathcal{D}_n) &= I(S, T; \mathcal{D}) + I(S, T; \mathcal{D}_n | \mathcal{D}) \\
            &=I(S, T; \mathcal{D}).
        \end{aligned}   
    \end{equation}

    By an alternative expansion, we obtain:
    \begin{equation}
    \label{step1_2}
        I(S, T; \mathcal{D}, \mathcal{D}_n) = I(S, T; \mathcal{D}_n) + I(S, T; \mathcal{D} | \mathcal{D}_n).
    \end{equation}
    
    Combining Eq. \eqref{step1_1} and \eqref{step1_2}, we obtain the following equality:
    \begin{equation}
    \label{step1_3}
        I(S, T; \mathcal{D}) = I(S, T; \mathcal{D}_n) + I(S, T; \mathcal{D} | \mathcal{D}_n).
    \end{equation}

    \textbf{Step 2:}
    By expanding $I(S, T; Y,\mathcal{D}|\mathcal{D}_n)$, we obtain:
    \begin{equation}
        I(S, T; Y,\mathcal{D}|\mathcal{D}_n) = I(S, T; \mathcal{D}|\mathcal{D}_n) + I(S, T; Y | \mathcal{D},\mathcal{D}_n).
    \end{equation}
    Since we have $I(S, T; Y | \mathcal{D},\mathcal{D}_n)=0$:
    \begin{equation}
    \label{step2_1}
        I(S, T; Y,\mathcal{D}|\mathcal{D}_n) = I(S, T; \mathcal{D}|\mathcal{D}_n).
    \end{equation}

    Similarly, by expanding in another way:
    \begin{equation}
    \label{step2_2}
        I(S, T; Y,\mathcal{D}|\mathcal{D}_n) = I(S, T; Y|\mathcal{D}_n) + I(S, T; \mathcal{D} | Y,\mathcal{D}_n).
    \end{equation}

    Combining Eq. \eqref{step2_1} and \eqref{step2_2}, we obtain
    \begin{equation}
    \label{step2_3}
        \begin{gathered}
            I(S, T; \mathcal{D}|\mathcal{D}_n) = I(S, T; Y|\mathcal{D}_n) + I(S, T; \mathcal{D} | Y,\mathcal{D}_n) \\
            \Rightarrow I(S, T; \mathcal{D}|\mathcal{D}_n) \geq I(S, T; Y|\mathcal{D}_n).
        \end{gathered}
    \end{equation}
    
    \textbf{Step 3:}
    Substituting Eq. \eqref{step2_3} into Eq. \eqref{step1_3}, we obtain:
    \begin{equation}
        \begin{aligned}
            I(S, T; \mathcal{D}) &\geq I(S, T; \mathcal{D}_n) + I(S, T; Y|\mathcal{D}_n) \\
            & = I(S, T; \mathcal{D}_n) + I(S, T; Y).
        \end{aligned}
    \end{equation}
    
    Therefore, we obtain the conclusion:
    \begin{equation}
        I(\mathcal{D}_n; S, T) \leq -I(Y; S, T) + I(\mathcal{D}; S, T).
    \end{equation}
\end{proof}

\begin{lemma}[Feature Convergence]
    Assuming that $Y$ can potentially contain all information about $Y_r$ and $Y_n$, the following inequality holds when $\max(I(Y_n;S), I(Y_r;T))\leq \max(I(S;T), \varepsilon) / 2 = K, \varepsilon\in \mathbb{R}$ is satisfied: 
    \begin{equation}
        -I(Y_r;S)-I(Y_n;T)-\varepsilon\leq -I(Y;S,T)+I(S;T| Y).
    \end{equation}
\end{lemma}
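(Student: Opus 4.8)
The plan is to first collapse the two multivariate information quantities on the right-hand side into a sum of purely pairwise terms, and then discharge the resulting scalar inequality using the stated constraint together with a short case split. The first step should be exact (an identity), and only the middle step will require invoking the modelling hypothesis.

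First I would rewrite the right-hand side using the symmetry of the interaction (co-)information $I(S;T;Y)$. Combining the chain rule $I(Y;S,T) = I(Y;S) + I(Y;T\mid S)$ with the identities $I(Y;T\mid S) = I(Y;T) - I(S;T;Y)$ and $I(S;T;Y) = I(S;T) - I(S;T\mid Y)$, the conditional term $I(S;T\mid Y)$ cancels and I obtain
\begin{equation}
-I(Y;S,T) + I(S;T\mid Y) = -I(Y;S) - I(Y;T) + I(S;T).
\end{equation}
This reduces the lemma to the pairwise inequality $I(Y;S) + I(Y;T) - I(S;T) \leq I(Y_r;S) + I(Y_n;T) + \varepsilon$.

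Next I would use the hypothesis that $Y$ carries all the information contained in $Y_r$ and $Y_n$ to bound each coupling of $Y$ with an encoder by its clean and noisy contributions, i.e. $I(Y;S) \leq I(Y_r;S) + I(Y_n;S)$ and $I(Y;T) \leq I(Y_r;T) + I(Y_n;T)$. Substituting these and cancelling the two matched terms $I(Y_r;S)$ and $I(Y_n;T)$, the target further reduces to the single scalar condition $I(Y_n;S) + I(Y_r;T) - I(S;T) \leq \varepsilon$. I would then discharge this using the constraint $\max(I(Y_n;S), I(Y_r;T)) \leq K = \max(I(S;T),\varepsilon)/2$, which yields $I(Y_n;S) + I(Y_r;T) \leq 2K = \max(I(S;T),\varepsilon)$. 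A two-way case split finishes the argument: if $I(S;T) \geq \varepsilon$ then $2K = I(S;T)$ and the left side is at most $I(S;T) - I(S;T) = 0 \leq \varepsilon$; if $I(S;T) < \varepsilon$ then $2K = \varepsilon$ and the left side is at most $\varepsilon - I(S;T) \leq \varepsilon$ since $I(S;T) \geq 0$. Either way the scalar condition holds, completing the proof.

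The step I expect to be the main obstacle is the middle one: rigorously justifying the subadditive decomposition $I(Y;S) \leq I(Y_r;S) + I(Y_n;S)$ from the informal premise that ``$Y$ contains all information about $Y_r$ and $Y_n$''. The naive chain-rule expansion $I(Y_r,Y_n;S) = I(Y_r;S) + I(Y_n;S\mid Y_r)$ is in general \emph{superadditive} — for instance, when $Y_r \perp Y_n$ one has $I(Y_n;S\mid Y_r) \geq I(Y_n;S)$ — so the decomposition is not free and needs an explicit assumption on how clean and noisy labels interact through the encoders. This is precisely where the modelling premise must be made formal, and I would state it as an explicit hypothesis rather than attempt to derive it from the looser verbal assumption. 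Everything downstream of that bound (the co-information identity and the case analysis) is routine.
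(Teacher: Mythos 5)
Your proposal follows essentially the same route as the paper's own proof: the appendix likewise collapses $-I(Y;S,T)+I(S;T\mid Y)$ via the interaction-information identity $I(Y;S,T)=I(Y;S)+I(Y;T)-I(S;T)+I(S;T\mid Y)$, replaces $I(Y;\cdot)$ by $I(Y_r,Y_n;\cdot)$, bounds it by $I(Y_r;\cdot)+I(Y_n;\cdot)$, and finishes with the identical two-case split on whether $I(S;T)\geq\varepsilon$. Your reservation about the middle step is well founded — the paper simply asserts the subadditivity $I(Y_r,Y_n;S)\leq I(Y_r;S)+I(Y_n;S)$ without justification, and as you observe it can fail (under $Y_r\perp Y_n$ the chain-rule expansion is superadditive), so your suggestion to state it as an explicit hypothesis in fact repairs a gap in the published proof rather than adding an unnecessary assumption.
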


\begin{proof}
    By expanding the mutual information and combining the assumptions, we have:
    \begin{equation}
        \begin{aligned}
            &I(Y; S, T) \\
            =& I(Y; S) + I(Y; T) - I(Y; S; T) \\
            =& I(Y; S) + I(Y; T) - \left(I(S; T) - I(S; T | Y)\right) \\
            =& I(Y; S) + I(Y; T) - I(S; T) + I(S; T | Y) \\
            =& I(Y_r, Y_n; S) + I(Y_r, Y_n; T) - I(S; T) + I(S; T | Y) \\
            \leq& I(Y_r; S) + I(Y_n; S) + I(Y_r; T) + I(Y_n; T) \\
            &- I(S; T) + I(S; T | Y).
        \end{aligned}
    \end{equation}
    
    Rearranging the terms, we obtain:
    \begin{equation}
        \begin{aligned}
        &-I(Y; S, T) + I(S; T | Y) \\
        \geq &-I(Y_r; S) - I(Y_n; T) + \left(I(S; T) - I(Y_n; S) - I(Y_r; T)\right).
        \end{aligned}
    \end{equation}
    
    Now, we consider two cases:

    \textbf{Case 1:} When $I(S, T) \geq \varepsilon$:
    \begin{equation}
        \begin{aligned}
            &-I(Y; S, T) + I(S; T | Y) \\
            \geq& -I(Y_r; S) - I(Y_n; T) + \left(I(S; T) - I(Y_n; S) - I(Y_r; T)\right) \\
            \geq& -I(Y_r; S) - I(Y_n; T).
        \end{aligned}
    \end{equation}

    \textbf{Case 2:} When $I(S, T) < \varepsilon$:
    \begin{equation}
        \begin{aligned}
            &-I(Y; S, T) + I(S; T | Y) \\
            \geq& -I(Y_r; S) - I(Y_n; T) + \left(I(S; T) - I(Y_n; S) - I(Y_r; T)\right) \\
            \geq& -I(Y_r; S) - I(Y_n; T) - I(Y_n; S) - I(Y_r; T) \\
            \geq& -I(Y_r; S) - I(Y_n; T) - \varepsilon.
        \end{aligned}
    \end{equation}
    Thus, the conclusion is proven.
\end{proof}

\subsection{The proof of Proposition 4.1 $\sim$ 4.4}
\label{appendix_prop_proof}
\begin{proposition}[The upper bound of $-I(Y;S,T)$]
    Given the label $Y$ and the variable $S,T$ that learns the characteristics of the real label space and the noise label space respectively, we have:
    \begin{equation}
        -I(Y;S,T)\leq -\max\left(I(Y;S), I(Y;T)\right).
    \end{equation}
\end{proposition}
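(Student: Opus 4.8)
The plan is to negate both sides and prove the equivalent inequality $I(Y;S,T) \geq \max(I(Y;S), I(Y;T))$, which states that jointly observing the pair $(S,T)$ is at least as informative about $Y$ as observing either component alone. This is precisely the monotonicity of mutual information under enlargement of the second argument, so the proof reduces to the chain rule for mutual information together with the non-negativity of conditional mutual information.

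First I would apply the chain rule to expand $I(Y;S,T)$ in the two symmetric ways:
\begin{equation}
    I(Y;S,T) = I(Y;S) + I(Y;T\mid S) = I(Y;T) + I(Y;S\mid T).
\end{equation}
Since conditional mutual information is always non-negative, both $I(Y;T\mid S) \geq 0$ and $I(Y;S\mid T) \geq 0$ hold, which immediately yields $I(Y;S,T) \geq I(Y;S)$ and $I(Y;S,T) \geq I(Y;T)$ respectively. Taking the larger of the two lower bounds gives $I(Y;S,T) \geq \max(I(Y;S), I(Y;T))$, and negating both sides produces the claimed bound.

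The main obstacle here is conceptual rather than computational: one must be comfortable that the chain-rule decomposition for the joint variable $(S,T)$ holds regardless of any dependence structure between $S$ and $T$, and that $I(Y;T\mid S)\geq 0$ follows from its being an expectation of KL divergences. Notably, no Markov assumption is needed (in contrast to Proposition~\ref{prop2}), so the result is unconditional. I would also record the equality condition, namely $I(Y;T\mid S)=0$ or $I(Y;S\mid T)=0$, meaning one encoder already captures all of the $Y$-information contained in the pair; this connects to the intuition that $S$ and $T$ should each specialize in a distinct label subspace, which is exactly what the subsequent disentanglement objective enforces.
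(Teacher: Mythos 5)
Your proof is correct and takes essentially the same approach as the paper's: both expand $I(Y;S,T)$ via the chain rule in the two symmetric ways, $I(Y;S)+I(Y;T\mid S)$ and $I(Y;T)+I(Y;S\mid T)$, then invoke non-negativity of conditional mutual information to obtain the $\max$ lower bound and negate. Your additional remarks (no Markov assumption needed, and the equality condition $I(Y;T\mid S)=0$ or $I(Y;S\mid T)=0$) are accurate observations beyond what the paper states, but the core argument is identical.
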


\begin{proof}
Expand directly using the definition of mutual information:

    \begin{equation}
        \begin{aligned}
        I(Y;S,T) &= I(Y;S) + I(Y;T|S) \\
        &= I(Y;T) + I(Y;S|T) \\
        &\geq \max(I(Y;S), I(Y;T))
        \end{aligned}
    \end{equation}

So that $-I(Y;S,T)\leq -\max\left(I(Y;S), I(Y;T)\right)$.
\end{proof}

\begin{proposition}[The upper bound of $I(\mathcal{D};S,T)$]
    Let $\mathcal{D}$, $S$, $T$ be random variables. Assume the probabilistic mapping $p(\mathcal{D}, S, T)$ follows the Markov chain $S \leftrightarrow \mathcal{D} \leftrightarrow T$. Then:
    \begin{equation}
        I(\mathcal{D}; S, T) \leq I(\mathcal{D}; S) + I(\mathcal{D}; T).
    \end{equation}
\end{proposition}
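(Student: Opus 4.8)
The plan is to prove Proposition 4.2, namely $I(\mathcal{D}; S, T) \leq I(\mathcal{D}; S) + I(\mathcal{D}; T)$ under the Markov assumption $S \leftrightarrow \mathcal{D} \leftrightarrow T$, by applying the chain rule for mutual information and then exploiting the conditional independence that the Markov chain supplies. First I would expand the joint term using the chain rule in the form $I(\mathcal{D}; S, T) = I(\mathcal{D}; S) + I(\mathcal{D}; T \mid S)$. This isolates a conditional mutual information term $I(\mathcal{D}; T \mid S)$, and the whole task reduces to showing that $I(\mathcal{D}; T \mid S) \leq I(\mathcal{D}; T)$.

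The key step is to recognize what the Markov chain $S \leftrightarrow \mathcal{D} \leftrightarrow T$ tells us: conditioned on $\mathcal{D}$, the variables $S$ and $T$ are independent, i.e. $I(S; T \mid \mathcal{D}) = 0$. The cleanest route is to compare $I(\mathcal{D}; T \mid S)$ with $I(\mathcal{D}; T)$ through the interaction information $I(S; \mathcal{D}; T)$. Using the identity
\begin{equation}
I(\mathcal{D}; T \mid S) - I(\mathcal{D}; T) = I(S; T \mid \mathcal{D}) - I(S; T),
\end{equation}
and substituting $I(S; T \mid \mathcal{D}) = 0$, I obtain $I(\mathcal{D}; T \mid S) - I(\mathcal{D}; T) = -I(S; T) \leq 0$, since mutual information is always nonnegative. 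Hence $I(\mathcal{D}; T \mid S) \leq I(\mathcal{D}; T)$, and combining with the chain-rule expansion yields the claim.

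The main obstacle, such as it is, lies in correctly deploying the symmetric three-way expansion of interaction information rather than in any deep estimate; one must be careful that the identity relating $I(\mathcal{D}; T \mid S)$ and $I(\mathcal{D}; T)$ is applied with the right pair of conditioned and unconditioned terms, and that the Markov assumption is invoked for the correct triple. An equivalent and perhaps more transparent alternative would be to expand $I(\mathcal{D}; S, T)$ two ways and directly bound the conditional term, but the interaction-information route makes the role of $I(S; T \mid \mathcal{D}) = 0$ explicit and keeps the nonnegativity argument clean. Either way the proof is short, and the only thing to verify carefully is that the Markov chain genuinely gives conditional independence of $S$ and $T$ given $\mathcal{D}$, which is precisely its definition.
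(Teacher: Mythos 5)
Your proof is correct. The identity you invoke, $I(\mathcal{D};T\mid S) - I(\mathcal{D};T) = I(S;T\mid\mathcal{D}) - I(S;T)$, follows from expanding $I(S,\mathcal{D};T)$ two ways by the chain rule, and the Markov chain $S \leftrightarrow \mathcal{D} \leftrightarrow T$ gives exactly $I(S;T\mid\mathcal{D})=0$, so every step goes through. Your route differs from the paper's in mechanics but not in substance: the paper expands $I(\mathcal{D};S,T)$ directly as an expectation of a log density ratio, uses the Markov factorization $p(s,t\mid x)=p(s\mid x)\,p(t\mid x)$ to regroup that ratio, and reads off the exact identity $I(\mathcal{D};S,T)=I(\mathcal{D};S)+I(\mathcal{D};T)-I(S;T)$ before dropping the nonnegative $I(S;T)$. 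Your chain-rule-plus-interaction-information argument arrives at precisely the same identity --- combining your two steps gives $I(\mathcal{D};S,T)=I(\mathcal{D};S)+I(\mathcal{D};T|S)=I(\mathcal{D};S)+I(\mathcal{D};T)-I(S;T)$ --- and concludes by the same nonnegativity. What your version buys is that it stays entirely at the level of mutual-information identities, so it needs no explicit densities (the paper's proof tacitly sets $\mathcal{D}=X$ and assumes a joint density exists) and it makes the role of $I(S;T\mid\mathcal{D})=0$ explicit; what the paper's version buys is a one-display derivation that exposes the slack in the bound as exactly $I(S;T)$. Both are rigorous modulo the usual caveat that the manipulations presume the relevant mutual informations are finite.
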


\begin{proof}
Without loss of generality, here we take $\mathcal D=X$ as an example to prove it.

Expanded by definition and processed over the probability distributions, we can obtain:

\begin{equation}
    \begin{aligned}
    &I(\mathcal{D};S,T) \\
    =& \mathbb{E}_{p(x,s,t)} \log \frac{p(s,t,x)}{p(x)p(s,t)} \\
    =& \mathbb{E}_{p(x,s,t)} \log \frac{p(x)p(s|x)p(t|x)}{p(x)p(s,t)} \\
    =& \mathbb{E}_{p(x,s,t)} \log \left[ \frac{p(s|x)p(x)}{p(x)p(s)} \cdot \frac{p(t|x)p(x)}{p(x)p(t)} \cdot \frac{p(s)p(t)}{p(s,t)} \right] \\
    =& I(\mathcal{D};S) + I(\mathcal{D};T) - I(S;T) \\
    \leq &I(\mathcal{D};S) + I(\mathcal{D};T)
    \end{aligned}
\end{equation}

\end{proof}

\begin{proposition}[Reformulation of $I(S,T| Y)$]
    Given the label $Y$ and the variable $S,T$, minimizing $I(S;T | Y)$ is equivalent to minimize $I(S,Y;T,Y)$.
\end{proposition}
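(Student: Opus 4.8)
The plan is to establish the equivalence by expanding the mutual information $I(S,Y;T,Y)$ and showing it decomposes into $I(S;T|Y)$ plus terms that do not depend on the encoder parameters being optimized. The key observation is that when we form the augmented variables $(S,Y)$ and $(T,Y)$, both share the common component $Y$, so their mutual information must capture both the shared $Y$ and the conditional dependence of $S$ and $T$ given $Y$.

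First I would expand $I(S,Y;T,Y)$ using the chain rule for mutual information. Writing $I(S,Y;T,Y) = I(Y;T,Y) + I(S;T,Y\mid Y)$, the first term simplifies because $Y$ is contained in $(T,Y)$, giving $I(Y;T,Y) = H(Y)$. For the second term, conditioning on $Y$ collapses the $Y$ component inside $(T,Y)$, so $I(S;T,Y\mid Y) = I(S;T\mid Y)$. This yields the clean identity
\begin{equation}
I(S,Y;T,Y) = H(Y) + I(S;T\mid Y).
\end{equation}
Since $H(Y)$ is a constant determined solely by the label distribution and is unaffected by the choice of encoders producing $S$ and $T$, minimizing $I(S,Y;T,Y)$ over the encoder parameters is equivalent to minimizing $I(S;T\mid Y)$, which is exactly the claim.

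The main obstacle, and the step requiring the most care, is justifying the simplification $I(S;T,Y\mid Y) = I(S;T\mid Y)$. This requires arguing that adding the deterministic information $Y$ into the second argument contributes nothing once we already condition on $Y$; formally, $I(S;Y\mid Y) = 0$ and the chain rule gives $I(S;T,Y\mid Y) = I(S;Y\mid Y) + I(S;T\mid Y,Y) = I(S;T\mid Y)$. An equivalent and perhaps cleaner route is to expand via entropies, $I(S,Y;T,Y) = H(S,Y) + H(T,Y) - H(S,Y,T)$, and then substitute the decompositions $H(S,Y) = H(Y) + H(S\mid Y)$, $H(T,Y) = H(Y) + H(T\mid Y)$, and $H(S,T,Y) = H(Y) + H(S,T\mid Y)$; collecting terms reproduces $H(Y) + H(S\mid Y) + H(T\mid Y) - H(S,T\mid Y) = H(Y) + I(S;T\mid Y)$. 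I would present whichever derivation reads most transparently, emphasizing that the constancy of $H(Y)$ is what converts the reformulated objective into an exact surrogate for the intractable conditional term, thereby establishing the equivalence of the two minimization problems.
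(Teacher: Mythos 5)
Your proof is correct and takes essentially the same approach as the paper: both establish the exact identity $I(S,Y;T,Y) = I(S;T\mid Y) + H(Y)$ and conclude from the constancy of $H(Y)$ (it depends only on the label distribution, not on the encoders) that the two minimization problems coincide. The paper derives this identity by directly expanding the density ratio in the definition of $I(S;T\mid Y)$, whereas you use the chain rule $I(S,Y;T,Y)=I(Y;T,Y)+I(S;T,Y\mid Y)$ together with $I(Y;T,Y)=H(Y)$ and $I(S;T,Y\mid Y)=I(S;T\mid Y)$, but this is only a difference in algebraic bookkeeping, not in the underlying argument.
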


\begin{proof}
\begin{equation}
    \begin{aligned}
        &I(S;T|Y) \\
        =& \int_y p(y) \iint_{s,t} p(s,t|y) \log \frac{p(s,t|y)}{p(s|y)p(t|y)} \, ds \, dt \, dy \\
        =& \iiint_{(s,t,y)} p(s,t,y) \log \frac{p(s,t,y)}{p(s,y)p(t,y)} \, p(y) \, ds \, dt \, dy \\
        =& \mathbb{E}_{p(s,t,y)} \log \frac{p(s,t,y)}{p(s,y)p(t,y)} + \mathbb{E}_{p(y)} \log p(y) \\
        =& I(S,Y;T,Y) - H(Y)
    \end{aligned}
\end{equation}

Since \( H(Y) \) is a constant, it follows that \( I(S;T | Y) \propto I(S,Y;T,Y) \), thus proved.

\end{proof}

\begin{proposition}[Reformulation of the condition in Eq. (8): $\max(I(Y_n;S),I(Y_c;T))\leq K$]
    Minimizing $I(Y_r; T)$ and $I(Y_n; S)$ is equivalent to increase $I(Y_n; T)$ and $I(Y_r; S)$.
\end{proposition}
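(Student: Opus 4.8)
The plan is to lean on the standing assumption already invoked in Lemma~\ref{lemma2}, namely that the observed label $Y$ carries all the information contained in its clean and noisy constituents, so that for any code $Z \in \{S, T\}$ we may identify $I(Y; Z)$ with the joint quantity $I(Y_c, Y_n; Z)$. Under this identification the two assertions of the proposition are mirror images of one another, so I would establish only the claim for $T$ (that minimizing $I(Y_c; T)$ is equivalent to maximizing $I(Y_n; T)$) and recover the claim for $S$ verbatim by swapping the roles of $Y_c$ and $Y_n$.

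First I would write the chain-rule decomposition of the joint label information held by $T$,
\begin{equation}
    I(Y; T) = I(Y_c, Y_n; T) = I(Y_c; T) + I(Y_n; T \mid Y_c).
\end{equation}
The pivotal observation is that within the \method objective of Eq.~\eqref{core} the sufficiency term $-I(Y;S,T)$ is being maximized while the compression term $\beta I(\mathcal{D}; S, T)$ caps the admissible amount of information, so at the optimum $I(Y; T)$ is driven to and held at its maximal admissible value, a saturated budget $M$. Treating $M$ as fixed turns the decomposition into the conservation identity $I(Y_c; T) = M - I(Y_n; T \mid Y_c)$, whence minimizing the clean-label leakage $I(Y_c; T)$ is exactly the same as maximizing the residual noise information $I(Y_n; T \mid Y_c)$. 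Under the disentanglement regime, where $T$ is simultaneously steered toward the noise channel, I would then equate this conditional quantity with the unconditional $I(Y_n; T)$ appearing in the statement.

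The symmetric computation for $S$ gives $I(Y_n; S) = M' - I(Y_c; S \mid Y_n)$ with $M' = I(Y;S)$ saturated, so that suppressing the noise leakage $I(Y_n; S)$ coincides with enlarging $I(Y_c; S)$; together these two lines establish the claimed equivalence and render the intractable constraint $\max(I(Y_n;S), I(Y_c;T)) \le K$ optimizable through the accessible quantities $I(Y_n;T)$ and $I(Y_c;S)$.

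The main obstacle will be making ``equivalent'' rigorous, since the conservation identity only yields equivalence between minimizing $I(Y_c; T)$ and maximizing the \emph{conditional} information $I(Y_n; T \mid Y_c)$, whereas the proposition is phrased with the unconditional $I(Y_n; T)$. Bridging this gap amounts to controlling the interaction term in $I(Y_n; T \mid Y_c) = I(Y_n; T) - I(Y_c; Y_n; T)$; I would argue that when the disentanglement penalty $I(S;T\mid Y)$ is driven small and the clean and noisy label channels are treated as (conditionally) separable, this interaction information is negligible, so the conditional and unconditional noise informations move in lockstep. A secondary point requiring care is justifying that $I(Y;T)$ is genuinely saturated at the optimum rather than merely upper-bounded, which I would derive from the sufficiency term being maximized jointly with an active compression constraint.
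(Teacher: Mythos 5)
Your underlying instinct --- a capped information budget turning the minimization of one term into the maximization of the other --- is exactly the mechanism the paper's proof uses, but your specific route has two genuine gaps. First, the saturation premise. To turn the chain rule $I(Y;T)=I(Y_c;T)+I(Y_n;T\mid Y_c)$ into a conservation identity you need $I(Y;T)$ pinned at a \emph{fixed} value $M$ at the optimum, and nothing in the objective delivers this: the sufficiency term is $-I(Y;S,T)$, which by Proposition~\ref{prop1} is optimized through $\max(I(Y;S),I(Y;T))$, so $I(Y;T)$ individually is neither directly maximized nor held constant --- least of all during the Knowledge Injection phase, which is when the constraint $\max(I(Y_n;S),I(Y_c;T))\le K$ is actually enforced. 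The paper's argument needs only a capacity \emph{upper bound} $I(Y;S)\le A$, not saturation, which is a strictly weaker and more defensible assumption.

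Second, and more seriously, your bridge from the conditional $I(Y_n;T\mid Y_c)$ to the unconditional $I(Y_n;T)$ fails. The discrepancy between them is the interaction information $I(Y_n;T)-I(Y_n;T\mid Y_c)=I(Y_c;Y_n)-I(Y_c;Y_n\mid T)$, which measures the dependence between the clean and noisy labels as seen through $T$. The disentanglement penalty $I(S;T\mid Y)$ does not control this quantity at all: it constrains the two codes $S$ and $T$ given $Y$, not $Y_c$ versus $Y_n$ given $T$. Moreover, in any realistic label-noise setting $Y_c$ and $Y_n$ agree on most samples, so $I(Y_c;Y_n)$ is large and the interaction term is emphatically not negligible. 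The paper sidesteps both issues with a single inequality that bounds the sum of the two \emph{unconditional} informations directly,
\begin{equation*}
\begin{aligned}
I(Y_c;S)+I(Y_n;S) &= I(Y_c,Y_n;S)+I(Y_c;Y_n)-I(Y_c;Y_n\mid S)\\
&\le I(Y;S)+I(Y_c;Y_n)\;\le\; A+\mathrm{const},
\end{aligned}
\end{equation*}
using the Lemma~\ref{lemma2} assumption that $Y$ carries all the information in $(Y_c,Y_n)$. Here the troublesome interaction term is absorbed into the model-independent constant $I(Y_c;Y_n)$ rather than argued away, and the trade-off follows from the capped sum alone: enlarging $I(Y_c;S)$ necessarily suppresses $I(Y_n;S)$ under limited capacity, and symmetrically for $T$. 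If you replace your conservation identity with this bound on the sum, your argument collapses to the paper's and both gaps disappear.
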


\begin{proof}
For $S$, with limited capacity $(I(Y;S)\le A)$, we have:
    \begin{equation}
        \begin{aligned}
        &I(Y_r;S)+I(Y_n;S)
        \le I(Y;S) +I(Y_r;Y_n)\le A+const
        \end{aligned}
    \end{equation}

Similarly, for $T$, enlarging one part constrains the other under limited capacity.
\end{proof}

\subsection{The proof of Eq. (10): $\max I(Y; Z)$ is equivalent to $\min \mathcal{L}_{CE}$}

\begin{proof}

    \begin{equation}
        \begin{aligned}
        I(Y;Z) &= \iint_{(y,z)} p(y,z) \log \frac{p(y,z)}{p(y)p(z)} \, dy \, dz \\
        &= \iint_{(y,z)} p(y,z) \log \frac{p(y|z)}{p(y)} \, dy \, dz \\
        &= \mathbb{E}_{p(y,z)}\left(\log(q_\theta(y|z))\right) + \\
        &\quad \ \mathbb{E}_{p(z)}(D_{KL}(p(y|z)\|q_\theta(y|z))) + H(Y) \\
        &\geq \mathbb{E}_{p(y,z)}\left(\log(q_\theta(y|z))\right) \:= -\mathcal{L}_{CE} (Z,Y),
        \end{aligned}
    \end{equation}
    where $q_{\theta_i}(\cdot)$ is variational approximation of $p(\cdot)$.
\end{proof}

\section{Implement Details}
\label{appendix_implement}
\subsection{Implement Details of $\mathcal L_{Minimal}$}
\label{appendix_IXZ}
As established in Proposition 4.2, minimizing the $I(\mathcal{D};S,T)$ objective reduces to minimizing $I(\mathcal{D}; Z)$, where $Z\in \{S,T\}$. We now discuss the methodology for estimating and optimizing $I(\mathcal{D}; Z)$.

\subsubsection{The input is an image.}
In this case, $\mathcal{D} = X$, i.e., the input to the model consists solely of image features. Therefore, minimizing $I(\mathcal{D}; Z)$ reduces to minimizing the divergence between the approximate posterior and a fixed prior.

\begin{proof}
    Expanded via the definition of mutual information:
    \begin{equation}
        \begin{aligned}
        &I(X;Z) \\
        =& \iint_{(x,z)} p(x,z) \log \frac{p(x,z)}{p(x)p(z)} \, dx \, dz \\
        =& \int_x p(x) \left[ \int_z p(z|x) \log \frac{p(z|x)}{q(z)} \, dz \right] dx - \int_z p(z) \log \frac{p(z)}{q(z)} \, dz \\
        =& \int_x p(x) \left[ D_{KL}[p(z|x) \parallel q(z)] \right] dx - D_{KL}[p(z) \parallel q(z)] \\
        \leq & \mathbb{E}_{p(x)} \left[ D_{KL}[p(z|x) \parallel q(z)] \right].
        \end{aligned}
    \end{equation}   
\end{proof}

Since $q(z)$ represents the marginal distribution of the latent variable and is not constrained during training, we can assume without loss of generality that $q(z) \sim \mathcal{N}(0, I_n)$. The encoder maps input features to a Gaussian distribution, i.e., $p(z|x) \sim \mathcal{N}(\mu_n, \sigma_n)$. In this case, the KL divergence between two Gaussian distributions admits a closed-form solution and can be computed as:
\begin{equation}
D_{\mathrm{KL}} [p(z|x) | q(z)] = \frac{1}{2} \sum_{i=1}^{n} \left( \mu_i^2 + \sigma_i^2 - \log \sigma_i^2 - 1 \right).
\end{equation}

\subsubsection{The input is a graph.}
In this case, the input data is the graph $\mathcal{D} = \mathcal{G} = (X, A)$, where $X$ denotes node features and $A$ denotes the adjacency matrix. As a result, the estimation of the mutual information $I(\mathcal{D}; Z)$ becomes more intricate compared to scenarios with only feature inputs. 

Following the framework of Graph Information Bottleneck (GIB), we consider two groups of indices $S_X, S_A \subseteq [L]$ that satisfy the Markovian dependence. Specifically, we assume $\mathcal{D} \perp Z_X^{(L)} \setminus \{Z_X^{(l)}\}_{l \in S_X} \cup \{Z_A^{(l)}\}_{l \in S_A}$, where $Z_X^{(l)}$ denotes the node feature representation at layer $l$, and $Z_A^{(l)}$ denotes the structural representation at layer $l$. Based on this condition, for any set of distributions $\mathbb{Q}(Z_X^{(l)})$ with $l \in S_X$, and $\mathbb{Q}(Z_A^{(l)})$ with $l \in S_A$, the following upper bound holds:
\begin{equation}
I(\mathcal{D}; Z_X^{(L)}) \leq \sum_{l \in S_A} \text{AIB}^{(l)} + \sum_{l \in S_X} \text{XIB}^{(l)},
\end{equation}
where $\text{AIB}^{(l)}$ and $\text{XIB}^{(l)}$ denote the information contributions from the adjacency and feature paths, respectively, and are defined as:
\begin{equation}
\begin{aligned}
\text{AIB}^{(l)} &= \mathbb{E} \left[ \log \frac{\mathbb{P}(Z_A^{(l)} | A, Z_X^{(l-1)})}{\mathbb{Q}(Z_A^{(l)})} \right], \\
\text{XIB}^{(l)} &= \mathbb{E} \left[ \log \frac{\mathbb{P}(Z_X^{(l)} | Z_X^{(l-1)}, Z_A^{(l)})}{\mathbb{Q}(Z_X^{(l)})} \right].
\end{aligned}
\end{equation}

For the structural branch, we adopt a Bernoulli-based KL divergence estimator:
\begin{equation}
\widehat{\text{AIB}}^{(l)} = \sum_{v \in V, t \in [\mathcal T]} D_{KL} \left( \text{Bernoulli}(\phi^{(l)}_{v,t}) || \text{Bernoulli}(\alpha) \right),
\end{equation}
where $\phi^{(l)}{v,t}$ denotes the probability of an edge between node $v$ and its $t$-hop neighbors, and $\alpha$ is the prior class probability.

To estimate $\text{XIB}^{(l)}$, we model $\mathbb{Q}(Z_X^{(l)})$ as a mixture of Gaussians with learnable parameters. For any node $v$, we assume $Z_{X,v}^{(l)} \sim \sum_{i=1}^m w_i \mathcal{N}(\mu_{0,i}, \sigma_{0,i}^2)$, where $w_i$, $\mu_{0,i}$ and $\sigma_{0,i}$ are shared parameters across all nodes, and $Z_{X,v} \perp Z_{X,u}$ if $v \neq u$. We compute:
\begin{equation}
\begin{aligned}
\widehat{\text{XIB}}^{(l)} =& \log \frac{\mathbb{P}(Z_X^{(l)} \mid Z_X^{(l-1)}, Z_A^{(l)})}{\mathbb{Q}(Z_X^{(l)})} \\
=&\sum_{v \in V} \log \Phi(Z^{(l)}_{X,v}; \mu_v, \sigma_v^2) \\
&- \log \left( \sum_{i=1}^m w_i \Phi(Z^{(l)}_{X,v}; \mu_{i}, \sigma_{0,i}^2) \right),
\end{aligned}
\end{equation}
where $\Phi(\cdot)$ denotes the probability density function of a Gaussian distribution.

In conclusion, we select proper sets of indices $S_X,S_A$ and use substitution:
\begin{equation}
    I(\mathcal D, Z) \leftarrow \sum_{l \in S_A} \widehat{\text{AIB}}^{(l)} + \sum_{l \in S_X} \widehat{\text{XIB}}^{(l)}.
\end{equation}

More detailed content and proof can be found in GIB~\cite{GIB}.

\subsection{Implement Details of Knowledge Injection}
Building on the proven success of mixup-based techniques in computer vision, we apply FMix~\cite{fmix} for image data augmentation across both clean and noisy subsets. To counteract potential class bias during training, we incorporate Debiased Margin-based Loss~\cite{promix} and Debiased Pseudo Labeling~\cite{menonlong}, fostering unbiased model predictions.

\subsection{Implement Details of Robust Training}
To further improve model performance, we perform certain modifications to the label $Y$ during robust training period.

For image classification, every $k$ epochs, the model predictions are combined with the original labels using exponential moving average to replace the original labels.

For node classification, at each epoch, the labels of samples with prediction confidence higher than a threshold $\tau$ are replaced with the model’s predicted results.

\section{Experiments Details and Results}
\label{appendix_exp}
\subsection{Data}
\label{appendix_data}
\subsubsection{Image Classification.}
We select CIFAR-based datasets to simulate both synthetic and real-world label noise. To mitigate the risk of overfitting to a specific dataset, we also include the Animal-10N dataset. Specifically:

\begin{itemize}
    \item \textbf{CIFAR-10/100~\cite{cifar}\footnote{https://www.cs.toronto.edu/\textasciitilde kriz/cifar.html}:} These are classic image classification datasets with 10 and 100 categories, respectively. By constructing a noise transition matrix, we introduce symmetric noise (Sym Noise, where each noisy label is uniformly sampled from all classes) and asymmetric noise (Asym Noise, where each class is flipped to a specific incorrect class based on semantic similarity). In our experiments, we consider symmetric noise with noise rates of 20\% and 50\%, which are \textbf{independent of the input features}. Additionally, based on class-wise correlations, we design a 40\% asymmetric noise setting on CIFAR-10.
    
    \item \textbf{CIFAR-10N/100N\cite{CIFARN}\footnote{https://github.com/UCSC-REAL/cifar-10-100n/tree/main}:} These datasets introduce real-world label noise based on standard CIFAR-10 and CIFAR-100. Each image is annotated by multiple human workers, and the final label is obtained via majority voting, thereby reflecting more natural and realistic label noise. Previous studies have shown that \textbf{the noise is not independent of the input features}. CIFAR-10N includes five noise levels (Aggregate: 9.03\%, Random 1: 17.23\%, Random 2: 18.12\%, Random 3: 17.64\%, Worst: 40.21\%), while CIFAR-100N includes one noise level (40.20\%).
    
    \item \textbf{Animal-10N\cite{animal-10N}\footnote{https://dm.kaist.ac.kr/datasets/animal-10n/}:} This dataset is constructed based on animal categories from ImageNet. It consists of images from 10 common animal classes collected and labeled by non-expert annotators. The label noise mainly stems from \textbf{confusion between fine-grained categories}, such as dog vs. wolf or cow vs. horse. The estimated noise rate is around 8\%.
\end{itemize}

A more intuitive comparison of the selected datasets is presented in Table~\ref{dataset}.

\begin{table}[H]
    \centering
    \hspace{-0.17cm}
    \resizebox{0.48\textwidth}{!}{
    \begin{tabular}{lccc}
        \toprule
        \textbf{Dataset} & \textbf{\# Class} & \textbf{Noise Type} & \textbf{Noise Ratio} \\
        \midrule
        CIFAR & 10 / 100 & Sym/Asym & 20\%, 40\%, 50\% \\
        CIFARN & 10 / 100 & Real-world & 9.03\% $\sim$ 40.21\% \\
        Animal-10N & 10 & Real-world & $\approx$8\% \\
        \bottomrule
    \end{tabular}
    }
    \caption{Comparison of image datasets}
    \label{dataset}
\end{table}

\subsubsection{Node Classification.}
We select three classic citation network datasets: Cora, Citeseer, and Pubmed. In addition, we include the one author collaboration network DBLP for evaluation. To ensure consistency across methods, we randomly sample 20 nodes per class for training. For validation and testing, 500 and 1000 nodes are randomly selected from the graph, respectively. Specifically:

\begin{itemize}
    \item \textbf{Cora, Citeseer and Pubmed~\cite{G_data}\footnote{https://github.com/kimiyoung/planetoid/tree/master/data}:} These citation network datasets are widely adopted in graph learning research involving label noise. In each dataset, nodes correspond to academic papers, and edges represent citation connections among them. The node features consist of binary word vectors indicating whether particular words from a vocabulary are present or absent. Each node is labeled according to the research topic category of the corresponding paper.
    
    \item \textbf{DBLP~\cite{dblp}\footnote{https://github.com/abojchevski/graph2gauss/raw/master/data}:} This dataset represents an author collaboration network within the field of computer science. Nodes represent documents, while edges correspond to citation relationships between these documents. Node features are derived from word vectors extracted from the text, and labels reflect the category of the research topic.
\end{itemize}

A more intuitive comparison of the selected datasets is presented in Table~\ref{graph_dataset}.

\begin{table}[H]
    \centering
    \begin{tabular}{lcccc}
        \toprule
        \textbf{Dataset} & \textbf{\# Class} & \textbf{\# Node} & \textbf{\# Edge} & \textbf{\# Feat.} \\
        \midrule
        Cora & 7 & 2,708 & 5,278 & 1,433 \\
        Citeseer & 6 & 3,327 & 4,552 & 3,703 \\
        Pubmed & 3 & 19,717 & 44,324 & 500 \\
        DBLP & 4 & 17,716 & 52,867 & 1,639 \\
        \bottomrule
    \end{tabular}
    \caption{Comparison of graph datasets}
    \label{graph_dataset}
\end{table}

\subsection{Baselines}
We compare with four categories, 16 baselines in two scenarios: \ding{172} Classic IB methods; \ding{173} IB with robust loss functions; \ding{174} Improved IB variants; \ding{175} Two-stage denoising + IB methods. They comprehensively evaluate our \methods performance from multiple perspectives. Specifically:

\subsubsection{Image Classification.}
\begin{itemize}
    \item Classical IB Models: This paper selects two IB methods, \textbf{VIB}~\cite{VIB} and \textbf{NIB}~\cite{NIB}, as baseline approaches. The core idea of VIB is to approximate the optimization of the IB objective using variational inference. NIB further extends the original IB principle to address the limitations of KL divergence estimation in VIB, which is restricted by the assumptions and simplicity of the prior distribution. Instead of relying on an analytical KL computation, NIB adopts kernel density estimation (KDE) for learning.
    \item IB with robust loss functions: In the IB framework, the mutual information $I(Z;Y)$ is typically optimized via cross-entropy. The \textbf{Generalized Cross-Entropy (GCE)} loss~\cite{GCE} combines cross-entropy and mean absolute error to enhance robustness. The \textbf{Symmetric Cross-Entropy (SCE)} loss~\cite{SCE} integrates standard cross-entropy with reverse cross-entropy, improving resistance to label noise. By replacing the original loss with these robust alternatives, new IB variants are constructed under robust loss functions.
    \item Improved IB Methods: Many subsequent studies have enhanced the feature extraction capability and robustness of IB. The \textbf{SIB}~\cite{SIB} framework leverages an auxiliary encoder to capture missing structural information, improving learning performance. \textbf{DT-JSCC}~\cite{DT-JSCC} proposes a robust encoding architecture based on the IB principle to improve transmission resilience under varying communication channels. These two methods represent improved IB frameworks and are included in the baseline comparisons.
    \item Two-Stage Denoise + IB Methods: Although Theorem 3.1 indicates that denoising followed by IB may not yield optimal results, this study further explores this approach. Three representative denoising methods are adopted: \textbf{JoCoR}~\cite{JoCoR}, a robust learning method based on co-training; \textbf{ELR+}~\cite{ELR}, which uses noise-robust regularization for label correction; and \textbf{ProMix}~\cite{promix}, which integrates Mixup data augmentation and dynamic confidence modeling, representing a state-of-the-art denoising approach. These denoised datasets are subsequently used for IB training.
\end{itemize}

\subsubsection{Node Classification.}
\begin{itemize}
    \item Classical IB Models: This paper selects the \textbf{GIB}~\cite{GIB} method as the classical IB baseline. GIB leverages the IB principle by learning graph representations that compress input feature and structure while preserving label-relevant information. 
    \item IB with robust loss functions: Consistent with the robust loss settings used in the image classification task.
    \item Improved IB Methods: Two representative improvements are included. \textbf{CurvGIB}~\cite{CurvGIB} introduces discrete Ricci curvature into the IB framework to better capture topological structures in graphs, enabling the model to discard spurious connectivity information and preserve label-relevant substructures. \textbf{IS-GIB}~\cite{IS-GIB} enhances generalization under distribution shifts by jointly modeling individual and structural information bottlenecks, improving the robustness and transferability of graph representations.
    \item Two-Stage Denoise + IB Methods: For denoise model, \textbf{RNCGLN}~\cite{RNCGLN} first applies graph contrastive learning and multi-head self-attention to learn local-global representations, followed by pseudo-labeling strategies to address graph and label noise. \textbf{CGNN}~\cite{cgnn} performs neighborhood-based label correction and contrastive learning to smooth representations across graph views. It iteratively corrects noisy labels using neighbors’ predictions before applying the IB objective for final node classification.
\end{itemize}

\subsection{Preliminary Experiment}
\label{appendix_pre_exp}
\subsubsection{Vulnerability of IB Methods to Noisy Labels.}
To investigate the sensitivity of IB methods to label noise, we first conduct preliminary experiments on image classification and node classification to examine the relationship between IB performance and label corruption.

For image classification, Figure~\ref{fig:IC-a} demonstrates that the VIB~\cite{VIB} framework suffers performance degradation on the CIFAR-10N~\cite{CIFARN} dataset as label noise increases. Moreover, the decline becomes more pronounced with higher noise levels. In extreme cases, excessive noise can even lead to training collapse under the IB framework as shown in Figure~\ref{fig:IC-b}.

\begin{figure}[ht]
\centering
\begin{subfigure}[b]{0.49\linewidth}
    \includegraphics[height=3.5cm]{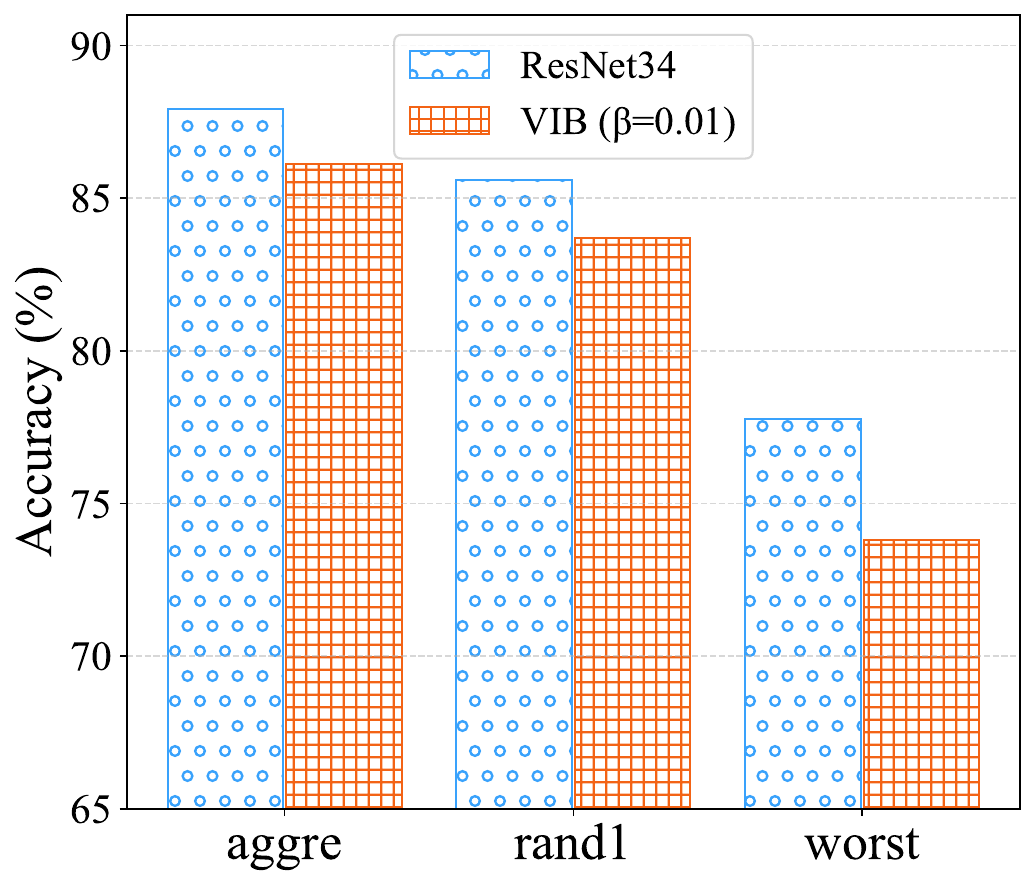}
    \caption{With feature-dependent noise}
    \label{fig:IC-a}
\end{subfigure}
\begin{subfigure}[b]{0.47\linewidth}
    \includegraphics[height=3.43cm]{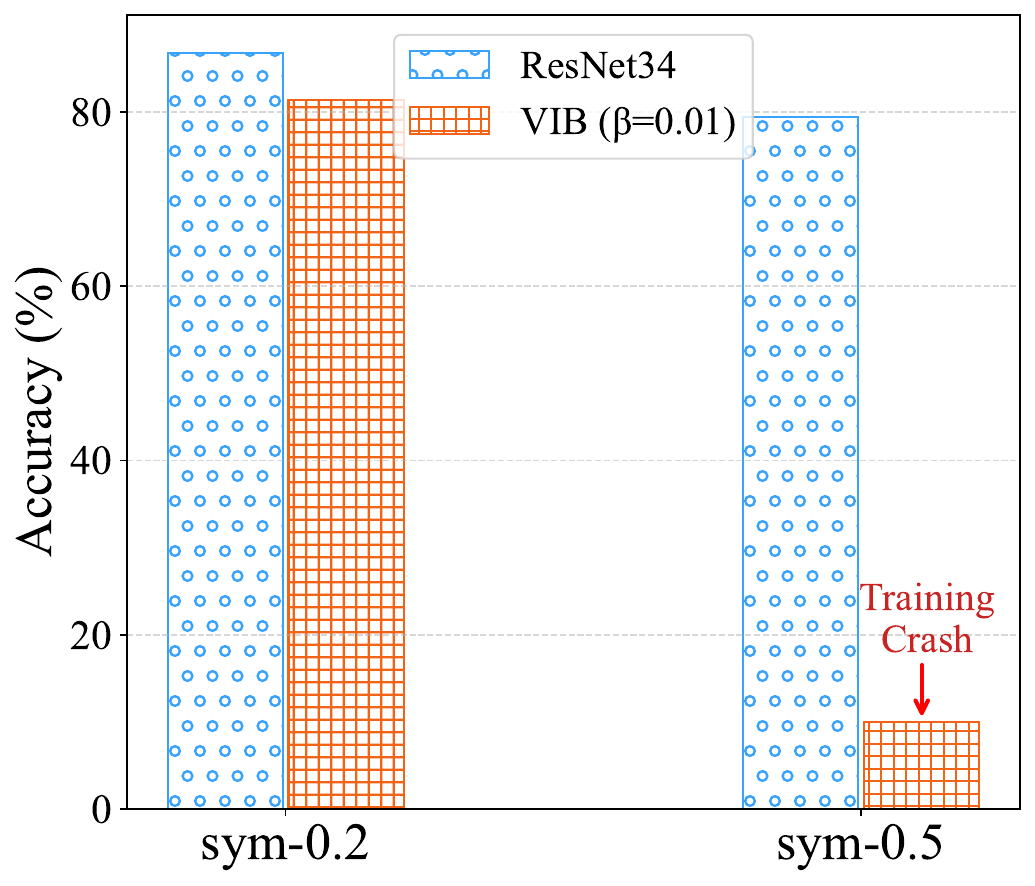}
    \caption{With symmetric noise}
    \label{fig:IC-b}
\end{subfigure}
\caption{Preliminary Experiments on Image Classification.}
\label{fig:toy-IC}
\end{figure}

For node classification, Figure~\ref{fig:NC-a} shows that GIB~\cite{GIB} gradually fits the noisy labels during training, leading to a steady decline in performance on the testing set. Furthermore, Figure~\ref{fig:NC-b}, using the Cora dataset as an example, illustrates that this trend becomes increasingly severe as the level of label noise increases.

\begin{figure}[ht]
\centering
\begin{subfigure}[b]{0.47\linewidth}   
    \includegraphics[height=3.5cm]{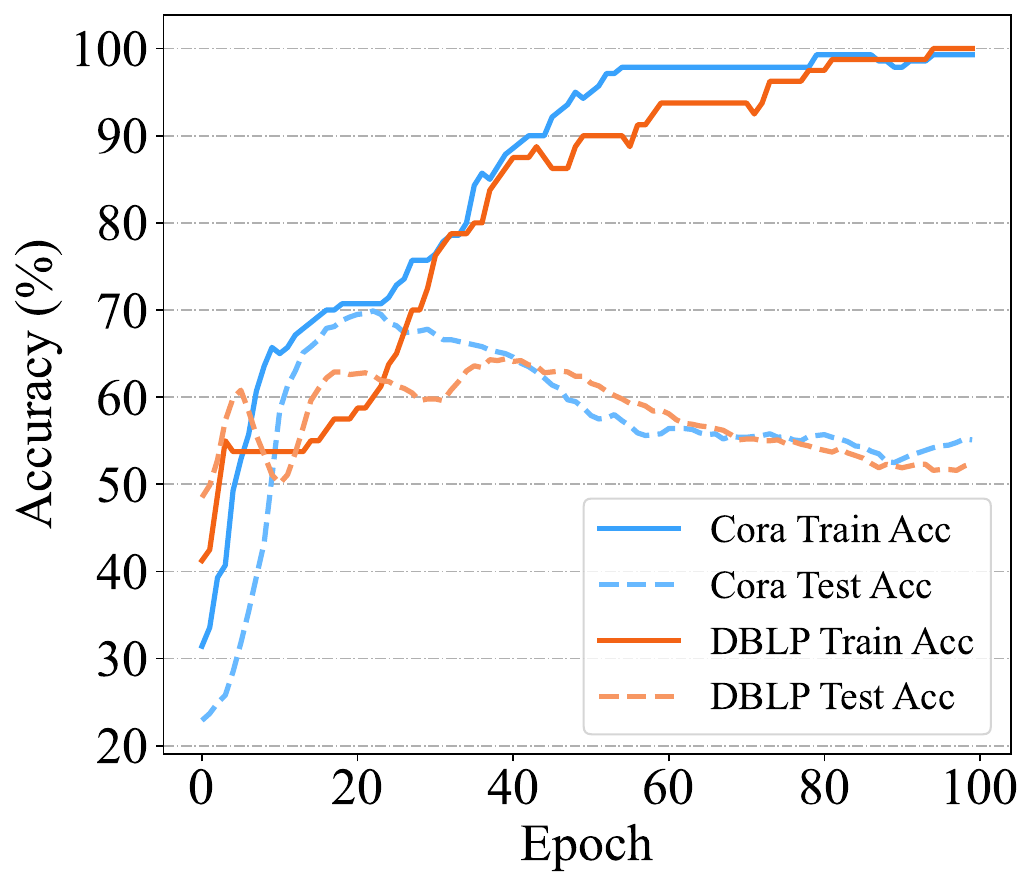}
    \caption{With 40\% uniform noise}
    \label{fig:NC-a}
\end{subfigure}
\begin{subfigure}[b]{0.47\linewidth}
    \includegraphics[height=3.5cm]{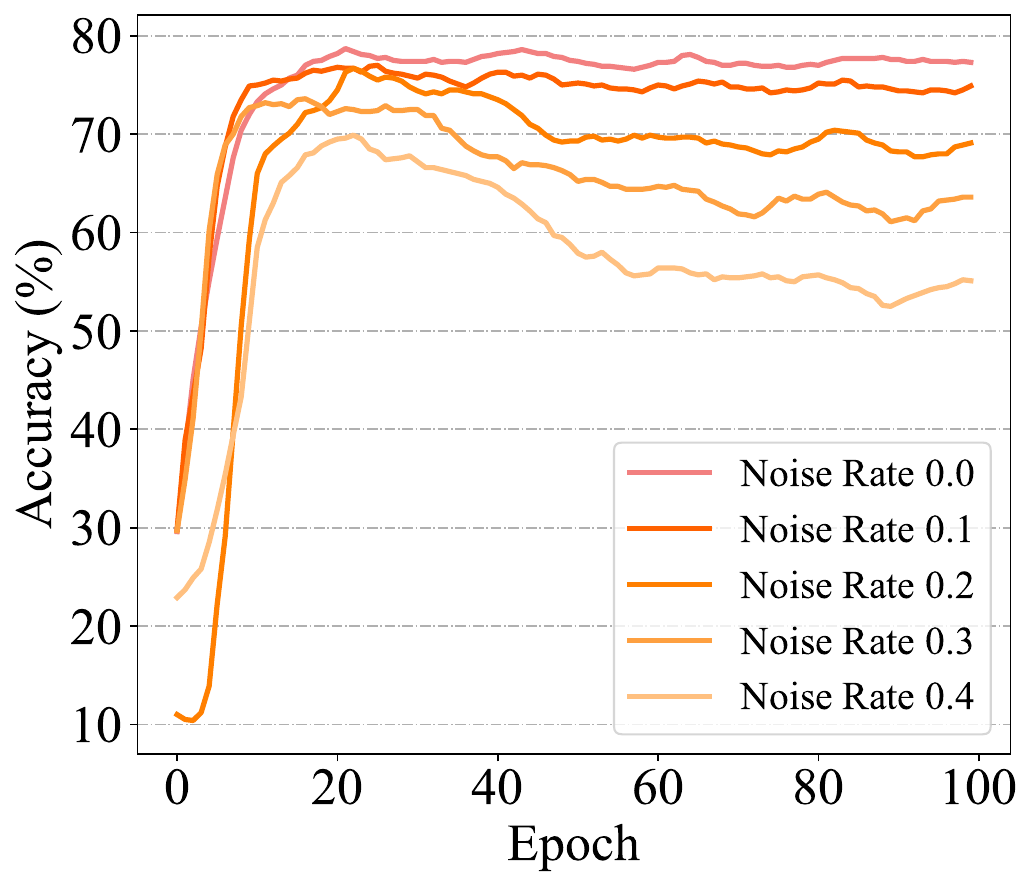}
    \caption{Cora (varying noise levels)}
    \label{fig:NC-b}
\end{subfigure}
\caption{Preliminary Experiments on Node Classification.}
\label{fig:toy-NC}
\end{figure}

\subsubsection{Performance Degradation in Cascaded Models.}
Theorem 3.1 shows that cascading models weakens the denoising effect of the first stage. This phenomenon is further illustrated in Figure~\ref{fig:decay}, where the data is first denoised using the ELR+ model and then used to train with the IB method. The resulting performance often falls short of the accuracy achieved after denoising alone.

\begin{figure}[H]
    \centering
    \includegraphics[height=4.3cm]{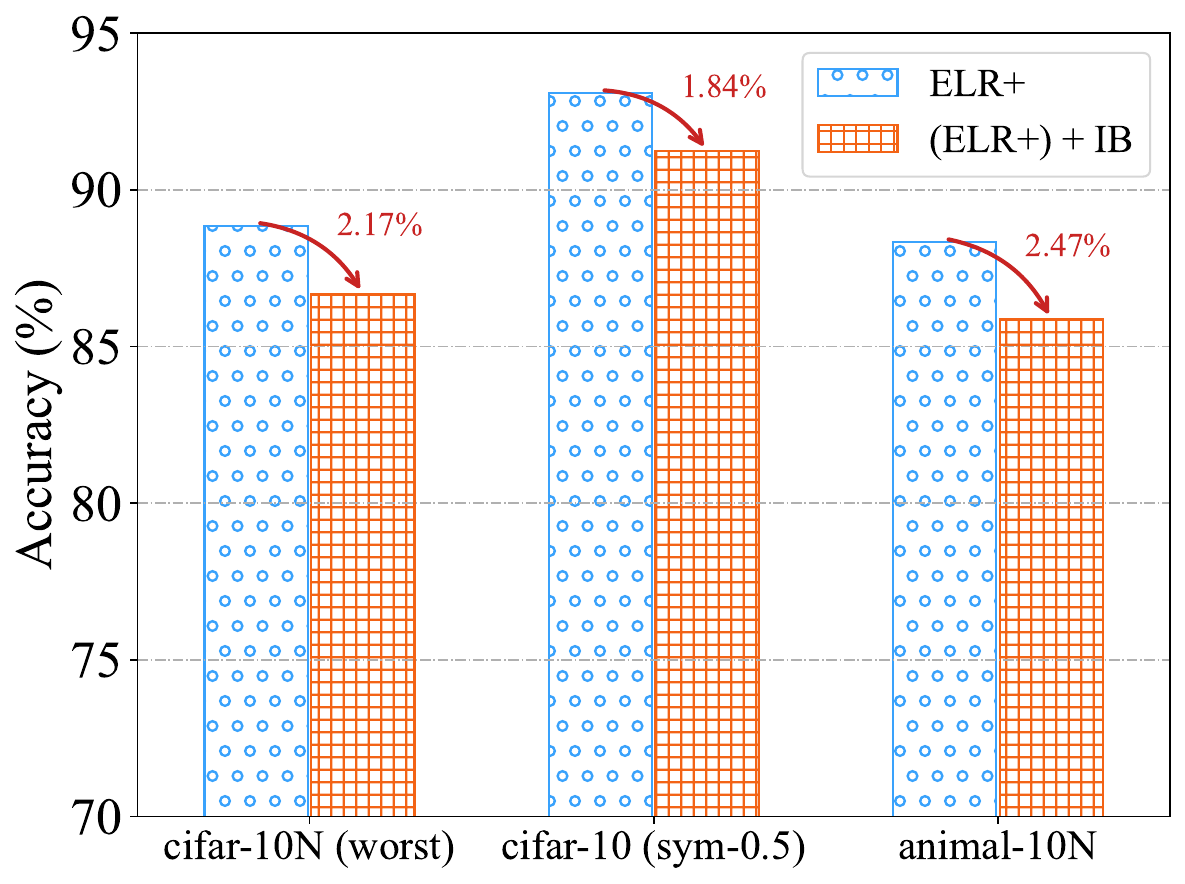}
\caption{Performance degradation.}
\label{fig:decay}
\end{figure}

\subsection{Additional Results}
\label{appendix_results}
\begin{table*}[!t]
  \centering
    \setlength{\tabcolsep}{4.5mm}
    \begin{tabular}{ccccccc}
    \toprule
    \multirow{2}{*}{\textbf{Method}} & \multirow{2}{*}{\textbf{Model}} 
      & \multicolumn{3}{c}{\textbf{CIFAR-10}} & \multicolumn{2}{c}{\textbf{CIFAR-100}} \\
    \cmidrule(r){3-5} \cmidrule(l){6-7}
      & & 20\%(sym) & 50\%(sym) & 40\%(asym) & 20\%(sym) & 50\%(sym) \\
    \midrule
    \multirow{2}{*}{\makecell{\textbf{Classic}\\\textbf{IB}}} & VIB & 81.49\textsubscript{$\pm$1.00} & 72.58\textsubscript{$\pm$1.63} & 79.27\textsubscript{$\pm$1.13} & 53.86\textsubscript{$\pm$0.47} & 44.25\textsubscript{$\pm$0.98} \\
    ~ & NIB &  83.44\textsubscript{$\pm$0.83} & 76.16\textsubscript{$\pm$1.27} & 78.16\textsubscript{$\pm$1.32} & 55.99\textsubscript{$\pm$0.79} & 46.20\textsubscript{$\pm$0.77} \\
    \midrule
    \multirow{2}{*}{\makecell{\textbf{Robust}\\\textbf{Loss}}} & VIB ($L_{GCE}$) & 88.43\textsubscript{$\pm$0.17} & 84.82\textsubscript{$\pm$0.33} & 81.32\textsubscript{$\pm$1.61} & ---        & --- \\
    ~ & VIB ($L_{SCE}$) & 84.36\textsubscript{$\pm$0.13} & 77.67\textsubscript{$\pm$1.40} & 76.59\textsubscript{$\pm$0.73} & 53.06\textsubscript{$\pm$1.83} & ---       \\
    \midrule
    \multirow{2}{*}{\makecell{\textbf{Improved}\\\textbf{IB}}} & SIB & 86.40\textsubscript{$\pm$0.55} & 65.52\textsubscript{$\pm$1.53} & 80.06\textsubscript{$\pm$1.52} & 57.64\textsubscript{$\pm$1.92} & 35.01\textsubscript{$\pm$1.23}\\
    ~ & DT-JSCC & 84.51\textsubscript{$\pm$0.41} & 72.49\textsubscript{$\pm$0.77} & 80.95\textsubscript{$\pm$0.41} & 49.58\textsubscript{$\pm$0.13} & 35.80\textsubscript{$\pm$0.96}\\
    \midrule
    \multirow{3}{*}{\makecell{\textbf{Deniose}\\\textbf{+ IB}}} & JoCoR+VIB &88.71\textsubscript{$\pm$0.18} & 81.71\textsubscript{$\pm$0.21} & 60.66\textsubscript{$\pm$0.08} & 62.61\textsubscript{$\pm$0.27} & 53.69\textsubscript{$\pm$0.11} \\
    ~ & (ELR+)+VIB & \underline{93.16\textsubscript{$\pm$0.23}} & \underline{91.28\textsubscript{$\pm$0.06}} & 84.75\textsubscript{$\pm$0.11} & 71.44\textsubscript{$\pm$0.93} & 54.12\textsubscript{$\pm$0.20} \\
    ~ & Promix+VIB & 92.98\textsubscript{$\pm$0.14} & \textbf{92.40\textsubscript{$\pm$0.10}} & \textbf{91.87\textsubscript{$\pm$0.05}} & 71.89\textsubscript{$\pm$0.16} & \textbf{69.77\textsubscript{$\pm$0.56}} \\
    \midrule
    \multirow{1}{*}{\textbf{Ours}} & \method & \textbf{94.56\textsubscript{$\pm$0.12}} & {91.13\textsubscript{$\pm$0.16}} & \underline{88.89\textsubscript{$\pm$0.73}} & \textbf{75.79\textsubscript{$\pm$0.19}} & \underline{67.28\textsubscript{$\pm$0.55}} \\
    \bottomrule
    \end{tabular}
  \caption{Classification accuracy (\%) of CIFAR under Symmetric/Asymmetric Noise. All the best results are highlighted in \textbf{bold}, and the second-best results are \underline{underlined}.}
  \label{tab:cifar}
\end{table*}

\begin{table*}[t]
  \centering
    \setlength{\tabcolsep}{1.2mm}
    {\small
  \begin{tabular}{cc|ccccc|cccc}
    \toprule
    \multirow{2}{*}{\textbf{Method}} & \multirow{2}{*}{\textbf{Model}} & \multirow{2}{*}{\textbf{Clean}}
      & \multicolumn{4}{c|}{\textbf{Uniform Noise}} 
      & \multicolumn{4}{c}{\textbf{Pair Noise}} \\
    \cmidrule(lr){4-7} \cmidrule(l){8-11}
      & & & 10\% & 20\% & 30\% & 40\% & 10\% & 20\% & 30\% & 40\% \\
    \midrule
    \multirow{1}{*}{\makecell{\textbf{Classic}\textbf{IB}}} 
      & GIB & \textbf{75.33\textsubscript{$\pm$3.19}} & \underline{75.10\textsubscript{$\pm$2.48}} & \textbf{73.03\textsubscript{$\pm$5.67}} & \underline{72.77\textsubscript{$\pm$2.36}} & 57.17\textsubscript{$\pm$7.83} & 74.73\textsubscript{$\pm$4.39} & 69.80\textsubscript{$\pm$6.18} & \underline{66.30\textsubscript{$\pm$6.60}} & \underline{46.60\textsubscript{$\pm$5.45}} \\
    \midrule
    \multirow{2}{*}{\makecell{\textbf{Robust}\\\textbf{Loss}}}
      & GIB ($\mathcal L_{GCE}$) & 75.03\textsubscript{$\pm$2.79} & 74.43\textsubscript{$\pm$3.01} & 72.03\textsubscript{$\pm$6.91} & 72.27\textsubscript{$\pm$4.07} & 57.57\textsubscript{$\pm$6.42} & 73.67\textsubscript{$\pm$3.85} & 71.03\textsubscript{$\pm$4.62} & 62.60\textsubscript{$\pm$6.99} & 43.63\textsubscript{$\pm$6.69} \\
      & GIB ($\mathcal L_{SCE}$) & 74.23\textsubscript{$\pm$3.56} & 72.67\textsubscript{$\pm$2.55} & 71.10\textsubscript{$\pm$6.45} & 70.63\textsubscript{$\pm$4.03} & \underline{58.47\textsubscript{$\pm$5.17}} & 73.33\textsubscript{$\pm$4.64} & 70.07\textsubscript{$\pm$4.93} & 59.87\textsubscript{$\pm$6.40} & 43.10\textsubscript{$\pm$6.80} \\
    \midrule
    \multirow{1}{*}{\makecell{\textbf{Improved}\\\textbf{IB}}} 
      & CurvGIB & 70.67\textsubscript{$\pm$3.23} & 67.67\textsubscript{$\pm$2.71} & 64.63\textsubscript{$\pm$6.52} & 61.97\textsubscript{$\pm$4.46} & 54.47\textsubscript{$\pm$6.13} & 66.93\textsubscript{$\pm$1.79} & 64.03\textsubscript{$\pm$4.55} & 56.27\textsubscript{$\pm$7.96} & 45.03\textsubscript{$\pm$1.68} \\
      & IS-GIB & 54.73\textsubscript{$\pm$0.15} & 53.17\textsubscript{$\pm$1.48} & 42.93\textsubscript{$\pm$2.46} & 45.53\textsubscript{$\pm$0.79} & 38.77\textsubscript{$\pm$6.34} & 49.73\textsubscript{$\pm$0.82} & 46.97\textsubscript{$\pm$1.19} & 40.33\textsubscript{$\pm$3.64} & 38.80\textsubscript{$\pm$1.22} \\
    \midrule
    \multirow{2}{*}{\makecell{\textbf{Denoise}\\\textbf{+ IB}}} 
      & RNCGLN+GIB & \underline{74.70\textsubscript{$\pm$2.65}} & 73.37\textsubscript{$\pm$0.65} & \underline{72.97\textsubscript{$\pm$5.09}} & 70.80\textsubscript{$\pm$2.57} & 52.27\textsubscript{$\pm$6.65} & \underline{75.00\textsubscript{$\pm$3.45}} & \underline{69.93\textsubscript{$\pm$3.40}} & 64.67\textsubscript{$\pm$6.10} & 39.83\textsubscript{$\pm$7.13} \\
      & CGNN+GIB & 73.17\textsubscript{$\pm$1.86} & 69.37\textsubscript{$\pm$6.01} & 67.67\textsubscript{$\pm$5.28} & 67.70\textsubscript{$\pm$1.85} & 54.70\textsubscript{$\pm$5.06} & 66.53\textsubscript{$\pm$5.74} & 64.80\textsubscript{$\pm$6.16} & 52.00\textsubscript{$\pm$5.40} & 44.33\textsubscript{$\pm$4.74} \\
    \midrule
    \textbf{Ours} & \method & 74.57\textsubscript{$\pm$0.99} & \textbf{75.87\textsubscript{$\pm$0.33}} & 71.13\textsubscript{$\pm$2.22} & \textbf{75.60\textsubscript{$\pm$0.71}} & \textbf{62.53\textsubscript{$\pm$7.56}} & \textbf{75.43\textsubscript{$\pm$1.04}} & \textbf{73.70\textsubscript{$\pm$3.10}} & \textbf{68.57\textsubscript{$\pm$3.02}} & \textbf{53.00\textsubscript{$\pm$7.26}} \\
    \bottomrule
  \end{tabular}
  }
  \caption{Classification accuracy (\%) on the DBLP dataset under different noise types and noise rates. All the best results are highlighted in \textbf{bold}, and the second-best results are \underline{underlined}.}
  \label{tab:dblp}
\end{table*}

\begin{table*}[!t]
  \centering
    \setlength{\tabcolsep}{1.9mm}
    {\small
  \begin{tabular}{cc|cc|cc|cc|cc}
    \toprule
    \multirow{3}{*}{\textbf{Method}} & \multirow{3}{*}{\textbf{Model}} & \multicolumn{4}{c|}{\textbf{Cora}} & \multicolumn{4}{c}{\textbf{Citeseer}}\\
    \cmidrule(lr){3-6} \cmidrule(l){7-10}
    & & \multicolumn{2}{c|}{\textbf{Uniform}} & \multicolumn{2}{c|}{\textbf{Pair}} & \multicolumn{2}{c|}{\textbf{Uniform}} & \multicolumn{2}{c}{\textbf{Pair}}\\
    \cmidrule(lr){3-6} \cmidrule(l){7-10}
      & &20\% & 40\% &20\% & 40\% &20\% & 40\% &20\% & 40\% \\
    \midrule
    \multirow{1}{*}{\makecell{\textbf{Classic}\textbf{IB}}} 
      & GIB & \textbf{75.37\textsubscript{$\pm$2.47}} & 
            71.60\textsubscript{$\pm$1.51} & 
            73.73\textsubscript{$\pm$0.39} & 
            65.27\textsubscript{$\pm$3.84} & 
            60.00\textsubscript{$\pm$3.37} & 
            48.20\textsubscript{$\pm$2.20} & 
            57.50\textsubscript{$\pm$2.35} & 
            \underline{43.67\textsubscript{$\pm$2.90}}\\
    \midrule
    \multirow{2}{*}{\makecell{\textbf{Robust}\\\textbf{Loss}}}
      & GIB ($\mathcal L_{GCE}$) & \underline{74.77\textsubscript{$\pm$0.76}} &
        70.47\textsubscript{$\pm$2.25} & 
        75.00\textsubscript{$\pm$1.77} & 
        65.27\textsubscript{$\pm$2.52} & 
        60.20\textsubscript{$\pm$3.93} & 
        50.80\textsubscript{$\pm$2.14} & 
        58.73\textsubscript{$\pm$2.31} & 
        41.03\textsubscript{$\pm$1.25} \\
      & GIB ($\mathcal L_{SCE}$) & 74.40\textsubscript{$\pm$0.45}& 
        69.17\textsubscript{$\pm$1.09} & 
        \underline{75.97\textsubscript{$\pm$2.02}} & 
        \underline{66.73\textsubscript{$\pm$4.08}} & 
        58.83\textsubscript{$\pm$5.19} & 
        50.93\textsubscript{$\pm$0.84} & 
        \textbf{59.10\textsubscript{$\pm$3.61}} & 
        41.40\textsubscript{$\pm$1.35} \\
    \midrule
    \multirow{1}{*}{\makecell{\textbf{Improved}\\\textbf{IB}}} 
      & CurvGIB & 65.90\textsubscript{$\pm$3.69} & 52.63\textsubscript{$\pm$3.23} & 67.17\textsubscript{$\pm$2.11} & 52.00\textsubscript{$\pm$3.19} & 49.80\textsubscript{$\pm$4.19} & 46.20\textsubscript{$\pm$1.69} & 48.77\textsubscript{$\pm$2.09} & 38.70\textsubscript{$\pm$3.70} \\
      & IS-GIB & 69.20\textsubscript{$\pm$0.62} & 54.73\textsubscript{$\pm$1.28} & 70.30\textsubscript{$\pm$0.99} & 56.47\textsubscript{$\pm$4.54} & 55.73\textsubscript{$\pm$3.44} & 39.03\textsubscript{$\pm$1.10} & 54.90\textsubscript{$\pm$4.28} & 40.13\textsubscript{$\pm$2.36}\\
    \midrule
    \multirow{2}{*}{\makecell{\textbf{Denoise}\\\textbf{+ IB}}} 
      & RNCGLN+GIB & 74.53\textsubscript{$\pm$1.58} & \underline{71.67\textsubscript{$\pm$1.49}} & 73.57\textsubscript{$\pm$1.59} & 63.60\textsubscript{$\pm$3.40} & \underline{60.90\textsubscript{$\pm$2.95}} & \underline{52.83\textsubscript{$\pm$4.82}} & 55.77\textsubscript{$\pm$3.17} & \textbf{46.00\textsubscript{$\pm$3.01}} \\
      & CGNN+GIB & 70.53\textsubscript{$\pm$4.69} & 64.73\textsubscript{$\pm$6.75} & 73.57\textsubscript{$\pm$1.37} & 59.00\textsubscript{$\pm$3.29} & 57.53\textsubscript{$\pm$3.73} & 45.73\textsubscript{$\pm$4.29} & 54.43\textsubscript{$\pm$3.30} & 41.57\textsubscript{$\pm$1.90} \\
    \midrule
    \textbf{Ours} & \method & 74.30\textsubscript{$\pm$2.01} & \textbf{74.07\textsubscript{$\pm$1.46}} & \textbf{76.87\textsubscript{$\pm$1.06}} & \textbf{66.80\textsubscript{$\pm$3.14}} & \textbf{61.63\textsubscript{$\pm$2.24}} & \textbf{55.17\textsubscript{$\pm$3.86}} & \underline{58.93\textsubscript{$\pm$2.77}} & \textbf{46.00\textsubscript{$\pm$0.71}}\\
    \bottomrule
  \end{tabular}
  }
  \caption{Classification accuracy (\%) on the Cora and Citeseer dataset under different noise types and noise rates. All the best results are highlighted in \textbf{bold}, and the second-best results are \underline{underlined}.}
  \label{tab:cora_citeseer}
\end{table*}

Table \ref{tab:cifar}, \ref{tab:dblp} and \ref{tab:cora_citeseer} presents a broader comparison of various \method methods under different types and levels of noise. We use a dash (--) to denote cases where the model fails or produces invalid results. Classification accuracy (\%) is used as the evaluation metric, where a higher value indicates better model performance.

For image classification, the model does not consistently outperform denoise + IB approaches in some cases. A possible explanation is that the denoising models are particularly effective on the CIFAR dataset, thereby significantly enhancing the IB performance. To further validate this, we also evaluated on the Animal-10N dataset, as shown in Table 2, where \method achieves the best performance. Moreover, Table 4 demonstrates that our method substantially outperforms two-stage approaches under adversarial attacks, further confirming the superiority of \method.

For the node classification task, it is evident that under high noise conditions, our model significantly outperforms other competitive baselines as shown in Table \ref{tab:dblp} and \ref{tab:cora_citeseer}, indicating the strong robustness of \method to label noise.

\subsection{Analysis of Sample Selection Strategy}
Our proposed sample selection strategy is based on two key components: mutual information and divergence. Mutual information is indirectly measured through the cross-entropy loss, whose effectiveness has been validated in numerous prior works~\cite{arpit2017a, song2019does}. As for divergence, following Observation 4.2, we adopt Jensen-Shannon (JS) divergence as a criterion to filter samples. This section primarily investigates how divergence affects the learning behavior of the encoder.

Figure~\ref{fig:sel_wo_div} shows the training process during the knowledge injection stage on the CIFAR-10N (worst) dataset, comparing the original method (sel\_all) with a variant that excludes divergence-based selection (sel\_w/o div). Figure~\ref{fig:sel_acc} presents the final performance of both methods. Experimental results demonstrate that removing divergence leads to a larger accuracy gap between the two encoders but ultimately worse performance. We attribute this to encoder $T$ failing to learn meaningful representations of noisy samples, rendering its predictions less informative. As a result, it is unable to provide effective guidance for feature separation in the third stage. These findings confirm that divergence-based sample selection plays a critical role in training the encoder effectively.

\begin{figure}[ht]
\centering
\begin{subfigure}[b]{0.472\linewidth}
    \includegraphics[height=3.5cm]{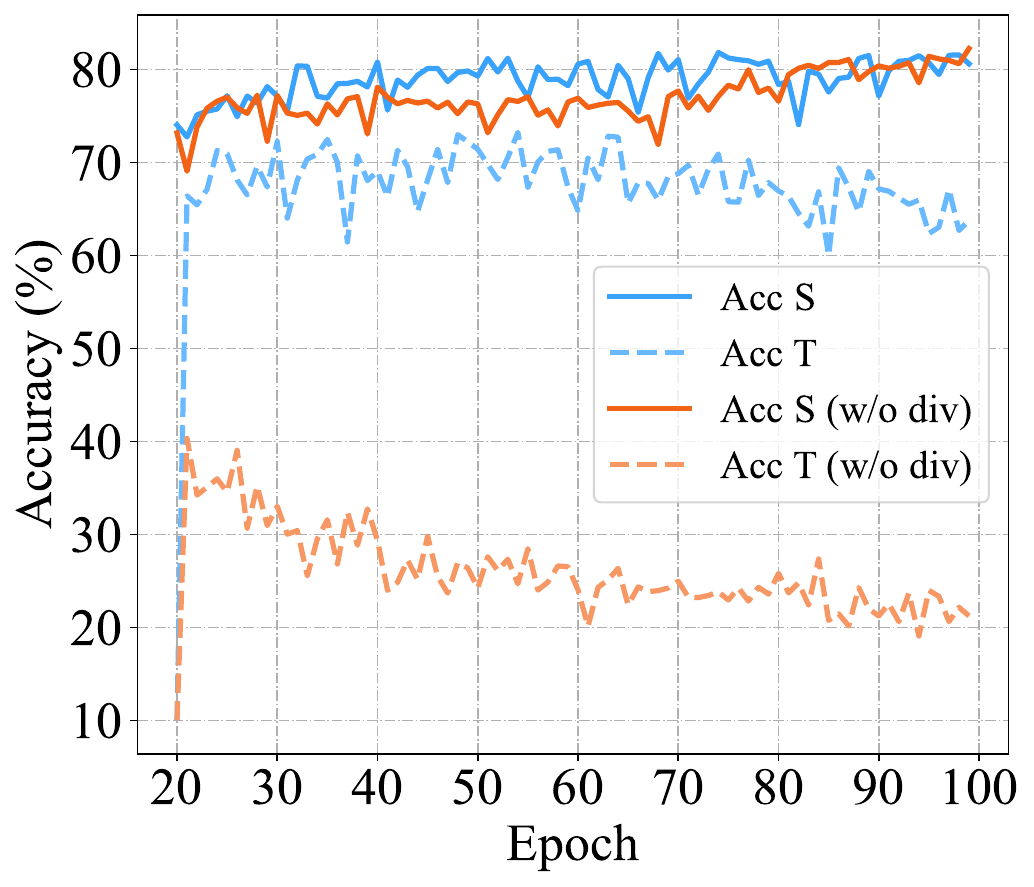}
    \caption{Training Process of Knowledge Injection}
    \label{fig:sel_wo_div}
\end{subfigure}
\begin{subfigure}[b]{0.48\linewidth}
    \includegraphics[height=3.5cm]{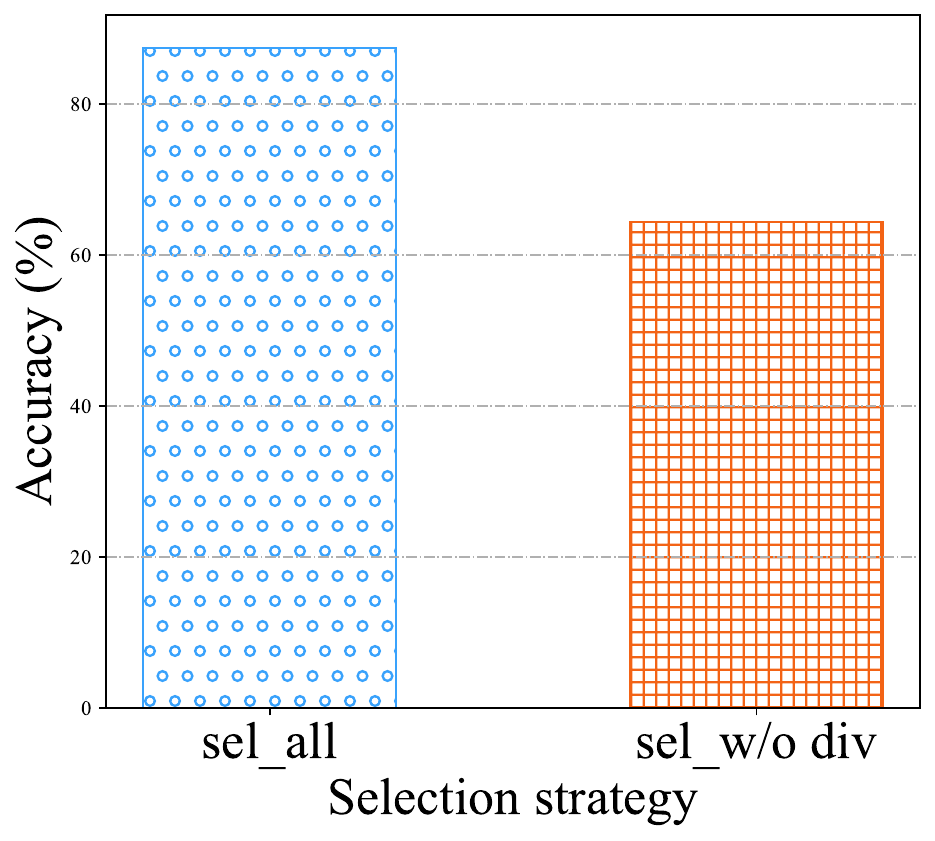}
    \caption{Training Accuracy under Different Select Strategy.}
    \label{fig:sel_acc}
\end{subfigure}
\caption{The influence of $D_{JS}$}
\label{fig:DJS}
\end{figure}

\subsection{Hyperparameter Sensitivity Analysis of $\delta$ in Algorithm~\ref{alg_sel}}
\label{hyper_delta}
In this section, we investigate the effect of the hyperparameter $\delta$, which controls the amount of knowledge the encoder $(S,T)$ acquires during the Knowledge Injection phase. Figure~\ref{fig:sel_ratio} illustrates the model's performance on the Cora and Citeseer datasets under various noise types and levels.

The results show that neither excessively high nor low values of $\delta$ yield optimal performance. We hypothesize that a too-small $\delta$ limits the encoder's exposure to training data, impairing its ability to learn useful representations. Conversely, a too-large $\delta$ causes the two encoders to converge in their learning, reducing their ability to distinguish between clean and noisy information effectively.

Furthermore, the influence of $\delta$ varies across datasets, indicating that the model's capacity to separate clean and noisy information is dataset-dependent.

\begin{figure}[ht]
\centering
\begin{subfigure}[b]{0.472\linewidth}
    \includegraphics[width=\linewidth]{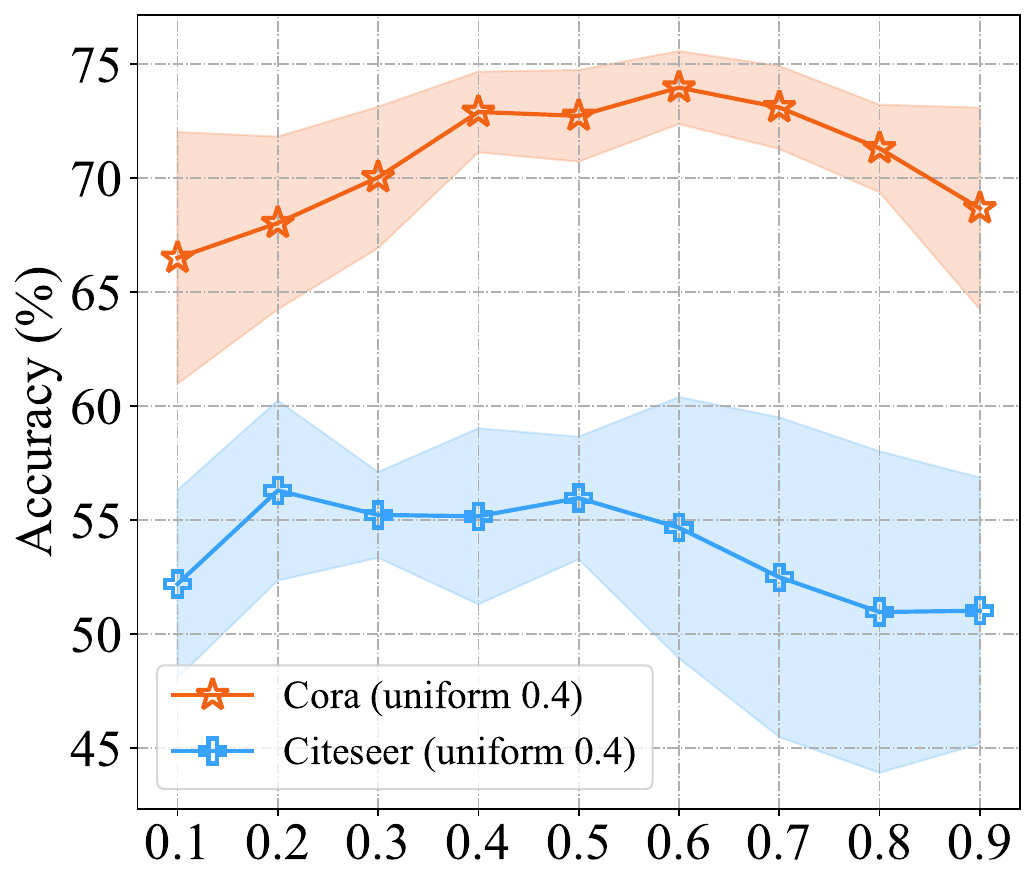}
    \caption{Under 40\% Uniform Noise}
    \label{fig:uniform0.4}
\end{subfigure}
\begin{subfigure}[b]{0.48\linewidth}
    \includegraphics[width=\linewidth]{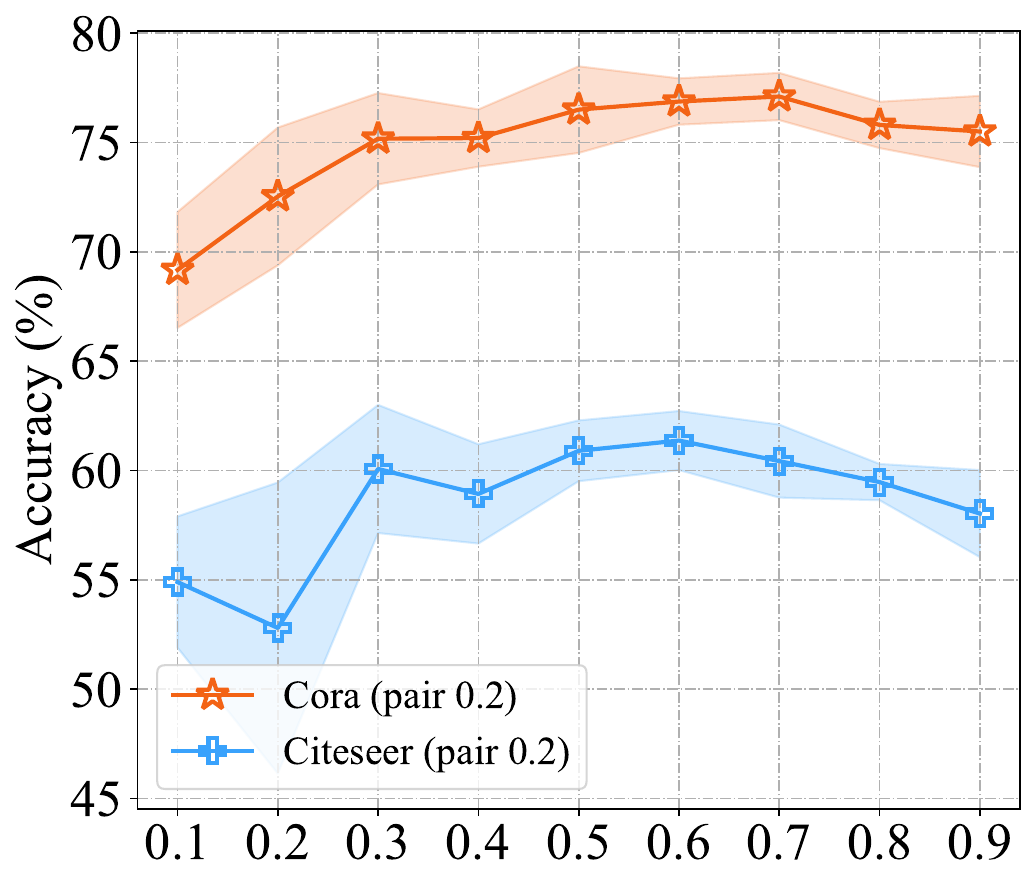}
    \caption{Under 20\% Pair Noise}
    \label{fig:pair0.2}
\end{subfigure}
\caption{The influence of $\delta$}
\label{fig:sel_ratio}
\end{figure}

\subsection{Analysis of Model Learning Behavior}
\label{appendix_analysis}
To further investigate the learning behavior of the encoder, we perform a t-SNE dimensionality reduction analysis on the embeddings obtained from models trained on the CIFAR-10N (worst) dataset. For a comprehensive comparison, we analyze the embeddings under the following four settings: \ding{172}~VIB without the $I(X;Z)$ constraint, which is approximately equivalent to a standard ResNet34 model; \ding{173}~Standard VIB model; \ding{174}~\method model at the end of Knowledge Injection, referred to as \method KI; \ding{175}~Full \method model.

Figure~\ref{fig:emb_resnet} and \ref{fig:emb_vib} show that IB methods produce more compact embeddings by minimizing $I(X;Z)$, reducing the encoding space and slightly lowering performance. Figure \ref{fig:emb_RIB_wo3} and \ref{fig:emb_RIB} illustrate that the third robust training stage of \method further restricts $I(\mathcal{D};S,T)$ and improves noise robustness. In particular, Figure \ref{fig:emb_RIB} shows embeddings \textbf{similar to IB’s minimal sufficient property} (Figure \ref{fig:emb_vib}), while clearer class boundaries demonstrate \methods ability to \textbf{learn cleaner representations}.

\begin{figure}[ht]
    \centering
    \begin{subfigure}[b]{0.45\linewidth}
        \includegraphics[width=\linewidth]{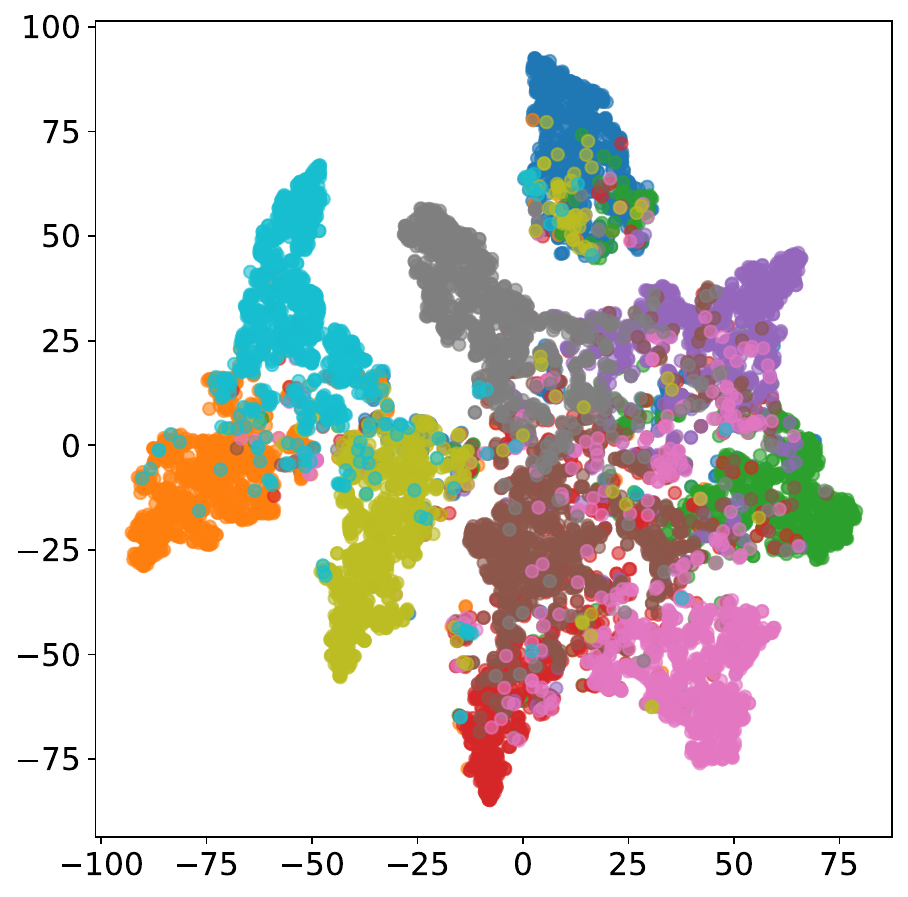}
        \caption{VIB w.o $I(X;Z)$}
        \label{fig:emb_resnet}
    \end{subfigure}
    \begin{subfigure}[b]{0.45\linewidth}
        \includegraphics[width=\linewidth]{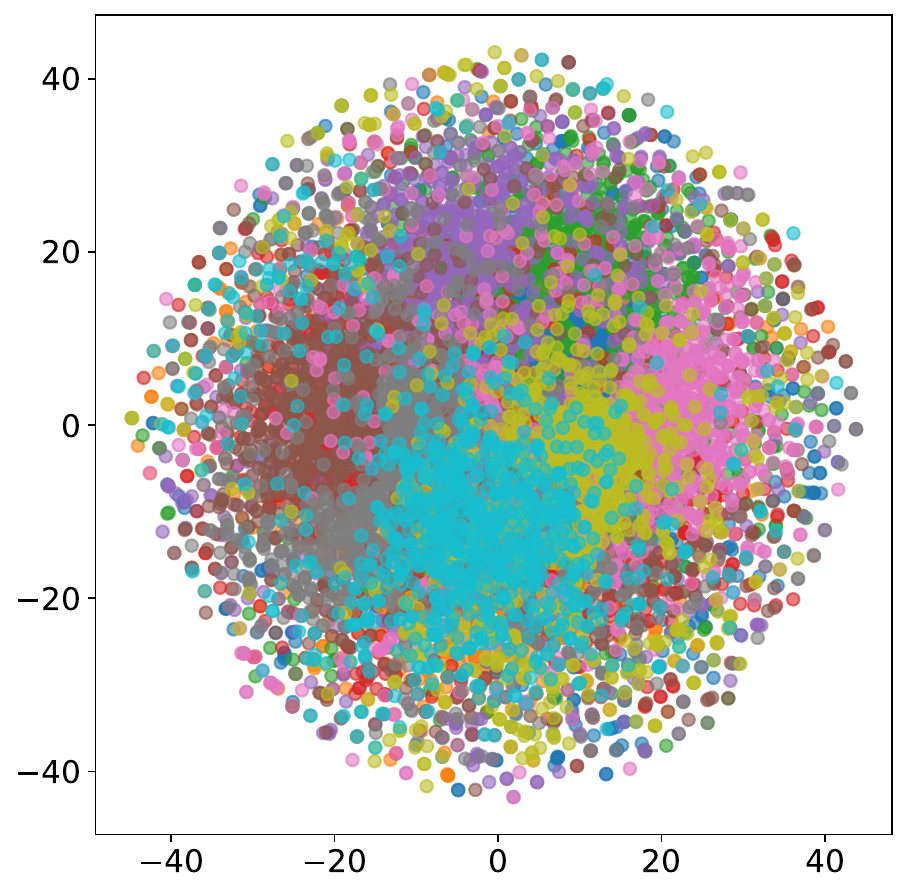}
        \caption{VIB}
        \label{fig:emb_vib}
    \end{subfigure}\\
    \begin{subfigure}[b]{0.45\linewidth}
        \includegraphics[width=\linewidth]{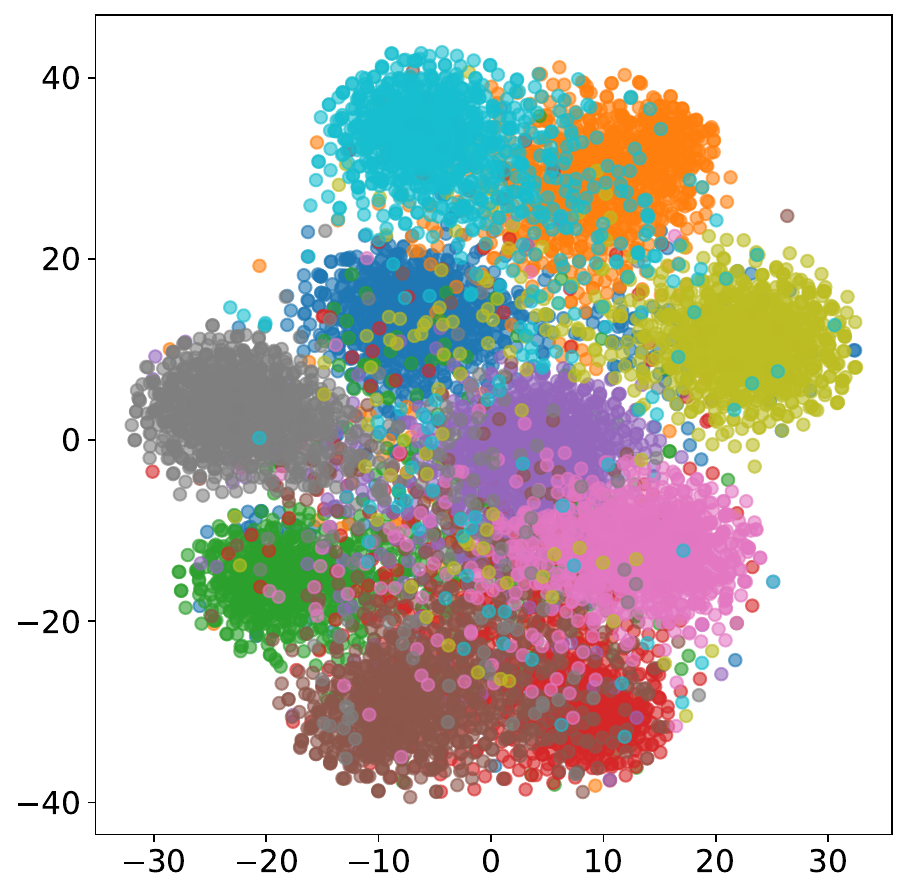}
        \caption{\method KI}
        \label{fig:emb_RIB_wo3}
    \end{subfigure}
    \begin{subfigure}[b]{0.45\linewidth}
        \includegraphics[width=\linewidth]{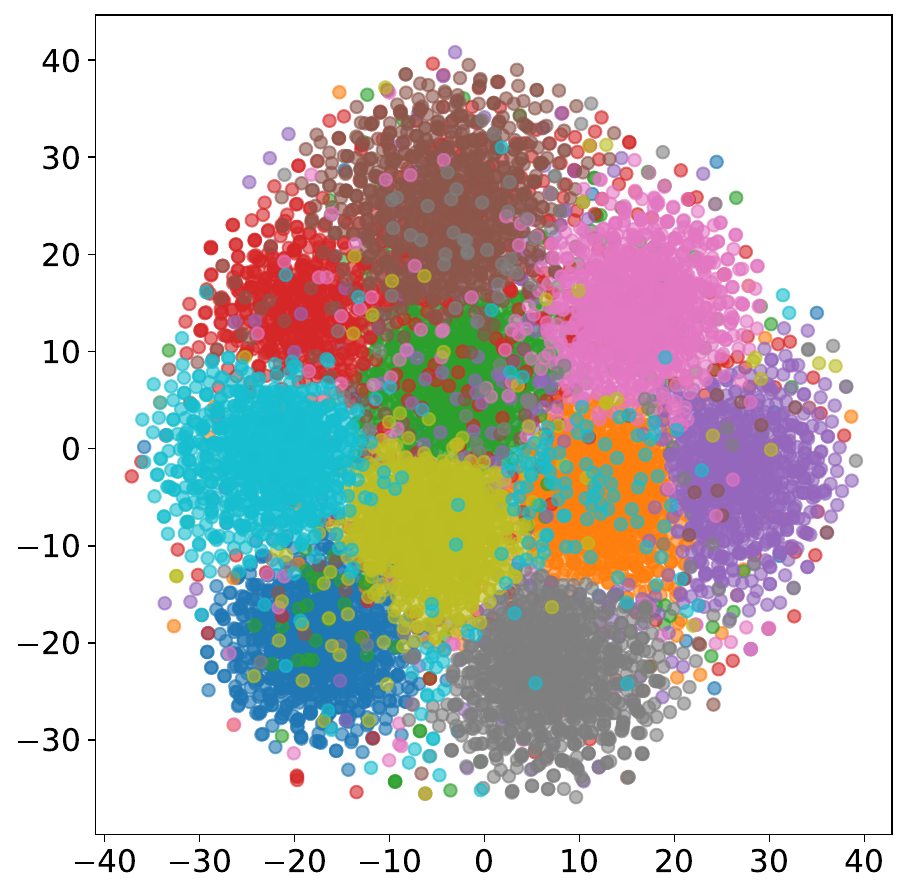}
        \caption{\method}
        \label{fig:emb_RIB}
    \end{subfigure}
\caption{The embedding distributions of different models}
\label{fig:emb}
\end{figure}

To investigate the learning process of the two encoders $S$ and $T$, Figure~\ref{fig:ST} shows their prediction accuracy on training and test sets with 40\% uniform noise on the Cora dataset, where vertical dashed lines divide the process into three periods. Encoder $T$ gradually fits all (noisy) data, while $S$, influenced by $T$, achieves about 65\% accuracy on the training set by fitting \textbf{mostly clean data}.

\begin{figure}[ht]
\centering
\includegraphics[height=4.5cm]{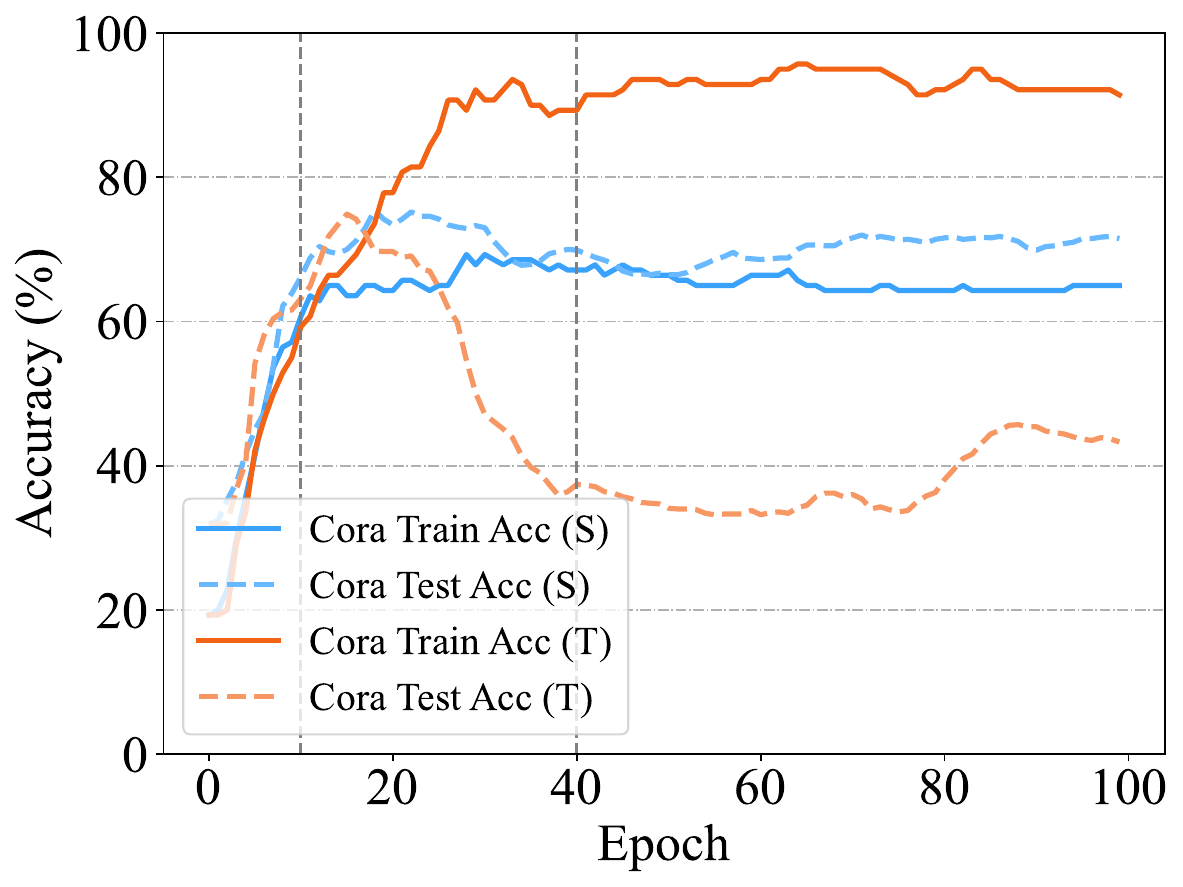}
\caption{The learning behavior of $(S, T)$}
\label{fig:ST}
\end{figure}

\subsection{Hyperparameter settings}
\label{appendix_hyper}
\subsubsection{Image Classification.}
For CIFAR-related datasets, we set the batch size to 256, and each image is reshaped to a size of (32, 32). For the Animal-10N dataset, the batch size is set to 128, and each image is reshaped to (68, 68). We use SGD as the optimizer with a learning rate of 0.005, momentum of 0.9, and a weight decay of 5e-4. A cosine learning rate scheduler is applied.

The dimension of the encoder's latent space is set to 128. Each experiment is repeated three times, and we report the mean and standard deviation of the results.

\subsubsection{Node Classification.}
For node classification tasks, we use the Adam optimizer with a learning rate of 0.001. To enable the computation of structural KL divergence, the model backbone is configured as a two-layer GAT. In our hyperparameter settings, the KL divergence weight for the features is set to 0.001, and the weight for the structural KL divergence is set to 0.01. The dimension of the encoder's latent space is set to 16 or 20. Each experiment is repeated three times, and we report the mean and standard deviation of the results.

In the experiments, the Algorithm~\ref{alg_sel} uses $\delta \in \{0.1, 0.2, \cdots, 0.9\}$. Besides, we set $\beta\in \{1e^{-1}, \cdots, 1e^{-5}\}$ and $\gamma = \in \{1e^{0}, \cdots, 1e^{-4}\}$ in our hyperparameter configuration. The predicted confidence scores are set based on the learning behavior of Warmup samples for each dataset, ensuring that the upper confidence bound is greater than 0.5, while the lower confidence bound is less than 0.5.

\subsection{Hardware and Software Configurations}
\label{appendix_conf}
We conduct the experiments with:
\begin{itemize}
    \item Operating System: Ubuntu 22.04.4 LTS.
    \item CPU: Intel(R) Xeon(R) Silver 4110 CPU @ 2.10GHz.
    \item GPU: NVIDIA Tesla V100 SMX2 with 32GB of Memory.
    \item Software: CUDA 12.8, Python 3.10.12, PyTorch\footnote{\url{https://github.com/pytorch/pytorch}} 2.2.0, PyTorch Geometric\footnote{\url{https://github.com/pyg-team/pytorch_geometric}} 2.6.1.
\end{itemize}

\end{document}